%% file: main.tex
\documentclass[twoside]{article}
\usepackage{minitoc}

\usepackage[accepted]{aistats2026}
\usepackage{mathrsfs,amsmath,amsfonts,amssymb,amstext,amscd, dsfont,pifont,
            amsthm,euscript,color,xcolor,accents,xr} 
\usepackage{algorithmic}
\usepackage{algorithm}
\usepackage{url}
\usepackage{mathtools}
\usepackage{epsfig}
\usepackage{float}
\usepackage{appendix}
\usepackage{floatflt}
\usepackage{nicefrac}
\usepackage{anysize}
\usepackage{enumerate}
\usepackage{epsfig}
\usepackage{graphicx}
\usepackage{mathtools}
\usepackage{upgreek}
\usepackage{pdfpages}
\usepackage{subcaption}  
\usepackage{bbold}
\usepackage{bbm,bm}

\usepackage[linecolor=magenta!60!, backgroundcolor=magenta!10!,textwidth=1.6cm, textcolor=magenta]{todonotes}

\numberwithin{equation}{section} 

\newcommand*{\Scale}[2][4]{\scalebox{#1}{$#2$}}%
\def\nU{n_{\Scale[0.4]{\mathcal{U}}}}
\def\nX{n_{\Scale[0.4]{\mathcal{X}}}}

\input{macros_nicole}

\definecolor{citecolor}{rgb}{0, 0.3, 0.6}

\usepackage[colorlinks=true,linkcolor=citecolor, citecolor=citecolor, backref]{hyperref}
\usepackage[round]{natbib}
\begin{document}

\twocolumn[

\aistatstitle{Random Features for Operator-Valued Kernels: Bridging Kernel Methods and Neural Operators}

\aistatsauthor{ Mike Nguyen \And Nicole M\"ucke }

\aistatsaddress{ Technical University  of Braunschweig \And  Technical University  of Braunschweig  } ]

\begin{abstract}
In this work, we investigate the generalization properties of random feature methods. Our analysis extends prior results for Tikhonov regularization to a broad class of spectral regularization techniques and further generalizes the setting to operator-valued kernels. This unified framework enables a rigorous theoretical analysis of neural operators and neural networks through the lens of the Neural Tangent Kernel (NTK). In particular, it allows us to establish optimal learning rates and provides a good understanding of how many neurons are required to achieve a given accuracy. Furthermore, we establish minimax rates in the well-specified case and also in the misspecified case, where the target is not contained in the reproducing kernel Hilbert space. These results sharpen and complete earlier findings for specific kernel algorithms.
\end{abstract}


\section{Introduction}
Operator learning has become a powerful paradigm in machine learning. 
It is particularly well suited for surrogate modeling in areas such as uncertainty quantification, inverse problems, and design optimization. The central objective in these applications is to approximate potentially nonlinear operators, for instance solution operators of partial differential equations (PDEs), see \citet{kovachki2023neural}.
Among the most widely used approaches in this context are \emph{Neural Operators} (NOs), which generalize classical neural networks to the broader task of learning operators, i.e., mappings between (potentially) infinite-dimensional function spaces \citep{Kovachki2023NeuralOL,li2021fourier, qin2024betterunderstandingfourierneural, doi:10.1126/sciadv.abi8605, RAISSI2019686, li2020neuraloperatorgraphkernel, sharma2024graph}.


Despite their practical success in scientific computing, the theoretical understanding of Neural Operators (NOs) is still limited. Existing work has primarily focused on approximation properties \citep{huang2024operator, schwab2021deeplearninghighdimension, Marcati_2023, JMLR:v22:21-0806, kovachki2024operator}, while generalization results are comparatively scarce (see, e.g., \cite{kim2022boundingrademachercomplexityfourier, LARABENITEZ2024113168}). Recent progress has been made in the neural tangent kernel (NTK) regime, where \citet{nguyen2024optimalconvergenceratesneural} established minimax rates for operator learning. At the same time, gradient descent (GD) dynamics suggest a connection to random feature approximations of vector-valued reproducing kernels, but convergence guarantees for such methods are not yet available. To date, only Tikhonov regularization has been analyzed in this setting \citep{lanthaler2023error}. In this work, we contribute by deriving minimax rates for a broad class of spectral regularization schemes, including GD, thereby enabling the derivation of generalization bounds for NOs.

In addition to bridging this theoretical gap, random feature approximation (RFA) is of independent interest. Kernel methods remain state-of-the-art in many non-parametric statistical applications and provide an elegant framework for developing new theoretical insights \citep{spectral.rates, Lin_2020, zhang2024optimalitymisspecifiedspectralalgorithms}. However, their benefits come at a substantial computational cost, making them infeasible for large-scale datasets. Classical kernelized algorithms require storing the kernel Gram matrix $\mathbf{K} \in \mathbb{R}^{n \times n}$, with entries $\mathbf{K}_{i,j} = K(u_i, u_j)$ for kernel function $K(\cdot,\cdot)$ and data points $u_i, u_j$. This entails a memory cost of $O(n^2)$ and a time cost of up to $O(n^3)$, where $n$ denotes the dataset size \citep{kernellearning}.

RFA alleviates these costs by exploiting integral representations of kernels that can be approximated via finite sums of random features. For kernel ridge regression (KRR), this reduces memory and computational costs to $O(nM)$ and $O(nM^2)$, respectively, where $M$ is the number of random features \citep{features}. For gradient descent (GD), the cost becomes $O(nMt)$, with $t$ denoting the number of iterations, and can be further reduced by acceleration methods such as Heavy-Ball or Nesterov, which achieve the same generalization error as GD in only $\sqrt{t}$ iterations \citep{pagliana2019implicit}. Our analysis further suggests that $M$ should typically exceed $t$, which explains why random feature methods often perform best when combined with iterative regularization schemes. This observation highlights the need for rigorous theoretical guarantees in such settings.

For kernel ridge regression (KRR), the central question of how many random features are required to achieve optimal convergence rates has been studied extensively \citep{NIPS2007_013a006f, li2021unified, pmlrv119zhen20a, features, lanthaler2023error}. \citet{NIPS2007_013a006f} first established optimal rates for $M = O(n)$ random features in the case of real-valued kernels (rvk). This was later improved by \citet{features} to $M = O(\sqrt{n}\,\log n)$, and further extended to stochastic kernel ridge regression in \cite{SGDfeatures}. Most recently, \citet{lanthaler2023error} removed the logarithmic factor, showing that $M = O(\sqrt{n})$ suffices in the more general setting of vector-valued kernels (vvk). A detailed comparison is provided in Table~\ref{table3}. These results for $M$ hold in the well-specified case where the target function belongs to the RKHS. The question of how many random features are required for broader smoothness classes, where the target function may lie outside the RKHS, remains open.

\vspace{0.2cm}

\textbf{Contribution.} 
Our main motivation for studying vector-valued kernels and random feature approximation (RFA) is to derive generalization guarantees for Neural Operators (NOs). 

To the best of our knowledge, RFA with vector-valued kernels has previously been analyzed mainly in the setting 
of KRR. 
In this work, we develop a unified framework based on spectral filtering \citep{Caponetto}, which yields optimal convergence rates for a broad class of learning methods with either explicit or implicit regularization. 
This includes gradient descent and acceleration techniques, and it recovers as special cases the KRR results of \citet{features} for real-valued kernels and of \citet{lanthaler2023error} for vector-valued kernels. 

Our framework further accommodates kernels represented as sums of integral kernels, covering in particular operator-valued neural tangent kernels. 
This extension provides, for the first time, convergence rates for random feature methods with vector-valued kernels beyond KRR, thereby opening the door to rigorous statistical guarantees for NOs in the NTK regime.  

A key advantage of our approach is that both the convergence rates and the number of random features required for optimality are 
independent of the dimension of the input space. 
This makes the results directly applicable to NOs, where inputs are functions rather than finite-dimensional vectors. 
At the same time, the number of random features scales only quadratically with the feature dimension of the combined input representation per neuron, 
yielding generalization guarantees for NOs that combine minimax-optimal statistical rates with computational efficiency.

\begin{table}[h!]
\caption{Comparison of random feature requirements ($M$) for achieving generalization error of order $O(n^{-\frac{1}{2}})$. The last column indicates the smoothness classes where optimal rates are known; see Assumptions~\ref{ass:source}, \ref{ass:dim} for the meaning of $r,b>0$. [1]=\cite{NIPS2007_013a006f}, [2]=\cite{features}, [3]=\cite{lanthaler2023error}}
\label{table3}
\resizebox{\columnwidth}{!}{%
\begin{tabular}{|l|l|l|l|}
\hline References & $M$ & Method & Smoothness \\
\hline \hline 
[1] & $O(n)$ & KRR (rvk) & $r \in [0.5,1]$ \\
\hline 
[2] & $O(\sqrt{n}\log n)$ & KRR (rvk) & $r \in [0.5,1]$ \\
\hline 
[3] & $O(\sqrt n)$ & KRR (vvk) & $r=0.5$ \\
\hline 
Our & $O(\sqrt{n}\log n)$ & Spec. (vvk) & $2r+b>1$ \\
\hline 
\end{tabular}}
\end{table}

The rest of the paper is organized as follows. Section 2 introduces the setting, recalls key definitions in the context of random feature methods, and motivates the framework by linking it to Neural Operators. Section 3 presents and discusses our main results.   Section 4 concludes with a summary of our findings. Appendix A provides further details on learning with Neural Operators and includes numerical illustrations that support our theoretical results. All proofs are deferred to Appendix B.

\vspace{0.2cm}

{\bf Notation.} 
By $\cL(\cH_1, \cH_2)$ we denote the space of bounded linear operators between real separable Hilbert spaces $\cH_1$, $\cH_2$. 
We write $\cL(\cH, \cH) = \cL(\cH)$. For $\Gamma \in \cL(\cH)$, we denote by $\Gamma^*$ the adjoint operator. If $h \in \cH$, we write $h \otimes h := \langle \cdot, h \rangle h$. For a compact operator $\Gamma \in \cL(\mathcal{H})$, the trace is defined by $\operatorname{tr}(\Gamma) = \sum_{k=1}^\infty \langle \Gamma e_k, e_k \rangle,$ where $\{e_k\}_{k=1}^\infty$ is any orthonormal basis of $\mathcal{H}$. 
We denote by $\mathcal{F}(\mathcal{U},\mathcal{V})$ the space of measurable operators from $\mathcal{U}$ to $\mathcal{V}$.  
We write $L^2(\mathcal{U},\rho_{\mathcal{U}}):= L^2(\mathcal{U},\rho_{\mathcal{U}};\mathcal{V})$ for the $L^2$ space equipped with the norm $\|f\|^2_{L^2(\rho_\mathcal{U})}:=\int_{\mathcal{U}} \|f(u)\|_\mathcal{V}^2 \, d\rho_\mathcal{U}(u)$.  We let $[n] := \{1,\dots,n\}$ and denote the output vector by $\mathbf{v} = (v_1,\dots,v_n) \in \mathcal{V}^n$ with norm $\|\mathbf{v}\|_2^2 := \sum_{i=1}^n \|v_i\|_\mathcal{V}^2$.


\section{Mathematical Framework}
\label{sec:setting}

We consider an input space $(\mathcal{U}, ||\cdot||_{\mathcal{U}})$, where $\mathcal{U}$ is a Banach space, and an output space $(\mathcal{V}, ||\cdot||_{\mathcal{V}})$, where $\mathcal{V}$ is a separable Hilbert space. These assumptions facilitate the use of the theory of vector-valued kernels \citep{vvk1,vvk2}. The data space is given by $\mathcal{Z} = \mathcal{U} \times \mathcal{V}$, equipped with an unknown distribution $\rho$. We denote by $\rho_{\mathcal{U}}$ the marginal distribution on $\mathcal{U}$, and by $\rho(\cdot \mid u)$ the regular conditional distribution on $\mathcal{V}$ given $u \in \mathcal{U}$; see \citet{Shao_2003_book}.

Given a measurable operator $G: \mathcal{U} \to \mathcal{V}$ we further define the expected risk as 
\begin{equation}
\label{eq:expected-risk}
\cE(G) := \mbe[ \ell (G(u), v) ]\;,
\end{equation}  
where the expectation is taken w.r.t. the distribution $\rho$ and $\ell: \mathcal{V} \times \mathcal{V} \to \mbr_+$ is the least-square loss 
$\ell(v, v')=\frac{1}{2}\|v-v'\|_\mathcal{V}^2$. It is known that the global minimizer  of $\cE$ over the set of all measurable functions is 
given by the regression operator $G_\rho(u)= \int_{\mathcal{V}} v \rho(dv|u)$.  

We consider a standard statistical learning setting where we are given data $(u_j, v_j)_{j=1}^n$, 
sampled identically and independently with respect to $\rho$ on $\cU \times \cV$.


\subsection{Motivation: Generalization Bounds for Neural Operators}
\label{sec:motivation}

{\bf Shallow NOs.} To connect shallow Neural Operators (NOs) with vector-valued kernels, it is useful to recall their definition and highlight how their training dynamics give rise to neural tangent kernels (NTKs). 
Following \citet{nguyen2024optimalconvergenceratesneural}, the class of two-layer NOs is defined as follows.  
Let $\mathcal{U}$ denote the function input space, mapping from the measure space $(\mathcal{X}, \rho_x)$ to $\mathcal{Y} \subset \mathbb{R}^{d_y}$, and let $\mathcal{V}$ be the target function space, containing measurable functions mapping from $(\mathcal{X}, \rho_x)$ to 
$\tilde{\mathcal{Y}} \subset \mathbb{R}$. 
For a network of width $M \in \mathbb{N}$, let $\sigma:\mathbb{R}\to\mathbb{R}$ be an activation function acting pointwise, and let $A:\mathcal{U}\to \mathcal{F}(\mathcal{X},\mathbb{R}^{d_k})$ be a continuous operator.  We then define the class of 
shallow NOs by \label{NOCLASS}
\begin{align*}
\mathcal{F}_{M} &:= \Bigl\{\, G_\theta : \mathcal{U} \to \mathcal{V} \;\Big|\; 
G_\theta(u)(x) \\
&= \frac{\big\langle a,\; \sigma\!\big(B_1 A(u)(x) + B_2 u(x) + B_3 c(x)\big) \big\rangle}{\sqrt{M}}, \\
\theta &= (a, B_1, B_2, B_3) \\
&\in 
\mathbb{R}^M \times \mathbb{R}^{M \times d_k} \times 
\mathbb{R}^{M \times d_y} \times \mathbb{R}^{M \times d_b}, \\
&\; c : \mathcal{X} \to \mathbb{R}^{d_b} 
\Bigr\}.
\end{align*}
where $\langle \cdot, \cdot \rangle$ denotes the Euclidean inner product. We collect all parameters in $\theta = (a, B_1,B_2,B_3 )=(a, B )$, with $B=(B_1,B_2,B_3)\in\mathbb{R}^{M\times\tilde{d}}, $ and \emph{feature dimension} $\tilde{d}:=d_k+d_y+d_b$. The goal is to minimize the expected risk 
\eqref{eq:expected-risk} over the set $\mathcal{F}_{M}$, i.e., 
\[  \min_{G_\theta \in \cF_M} \mbe[ \ell (G_\theta(u), v) ]\;.\]

{\bf Relation to vector-valued kernels via the NTK.} The connection between kernel methods and standard neural networks is established via the neural tangent kernel (NTK), see \citet{jacot2018neural, lee2019wide}. For fully connected neural networks, it is known that under gradient descent (GD) training and in the infinite-width limit, the network function linearizes around initialization, and the dynamics are governed by the NTK.

In complete analogy, the training dynamics of shallow NOs correspond to kernel gradient descent in the reproducing kernel Hilbert space (RKHS) induced by a vector-valued kernel. We define for neural operators the NTK feature map 
\[
\Phi^M : \mathcal{U} \to \mathcal{F}(L^2(\mathcal{X},\rho_x),\Theta), 
\qquad \Phi^M(u) := \Phi^M_u,
\]
with random initialization $\theta_0 \in \Theta$ and 
\[
\Phi^M_u(v) := \nabla_\theta \big\langle G_{\theta_0}(u), v \big\rangle_{L^2(\rho_x)} \,, 
\qquad v \in L^2(\mathcal{X},\rho_x).
\]
For any $u,\tilde u \in \mathcal{U}$, the vector-valued kernel $K_M:  L^2(\mathcal{X},\rho_x) \to  L^2(\mathcal{X},\rho_x)$ is defined by 
\[ K_M(u,\tilde u) := (\Phi^M_u)^* \Phi^M_{\tilde u} \] 
and expands as 
\begin{align*}
K_M(u,\tilde u) 
&= \frac{1}{M} \sum_{m=1}^{M} \psi_m (u)  \otimes \psi_m (\tilde u) \\
&\quad + \frac{1}{M} \sum_{m=1}^{M}\sum_{j=1}^{\tilde{d}} \psi'_{m,j} (u) \otimes\psi'_{m,j} (\tilde u).
\end{align*}
Here, for \(u\in\mathcal{U}\) and \(x\in\mathcal{X}\), we set 
\begin{align*}
J(u)(x) &:= \big(A(u)(x),\,u(x),\,c(x)\big)^\top ,  \\
\psi_m (u) &:=   \sigma\!\big(\langle b_m^{(0)}, J(u)\rangle \big)  , \\
\psi'_{m,j} (u) &:=  \sigma'\!\big(\langle b_m^{(0)}, J(u)\rangle \big) J(u)^{(j)}  
\end{align*} 
and denote the collection of initial weight vectors by \(B^{(0)}=(b_1^{(0)},\dots,b_M^{(0)})^\top \in \mathbb{R}^{M\times \tilde d}\), see \citet{nguyen2024optimalconvergenceratesneural}.

\vspace{0.2cm}
{\bf Relation to Random Feature Approximation.} 
The representation of $K_M$ shows that the NTK of shallow NOs admits the form of a random feature approximation. 
Indeed, each term $\psi_m(u)\otimes\psi_m(\tilde u)$ and $\psi'_{m,j}(u)\otimes\psi'_{m,j}(\tilde u)$ 
constitutes a nonlinear random feature, sampled through the random initialization $b_m^{(0)}$. 
Thus, $K_M$ is a Monte–Carlo approximation of the limiting kernel 
\begin{align}
\label{eq:NTK-ex}
& K(u,\tilde u) := \\ \nonumber 
& \quad \mathbb{E}_{\theta_0} \Big[\psi(u)\otimes \psi(\tilde u) 
+ \sum_{j=1}^{\tilde d} \psi'_{j}(u)\otimes \psi'_{j}(\tilde u)\Big],
\end{align}
where $\psi(u)= \sigma\!\big(\langle b^{(0)}, J(u)\rangle \big)$, 
$\psi'_{j}(u)= \sigma'\!\big(\langle b^{(0)}, J(u)\rangle \big) J(u)^{(j)}$ and  
the expectation is taken w.r.t. the initialization distribution of $\theta_0$, see 
\citet{nguyen2024optimalconvergenceratesneural}, Proposition 2.3. 

Consequently, training shallow NOs with gradient descent in the NTK regime 
is equivalent to performing kernel gradient descent in the vector-valued RKHS 
associated with $K$, using $K_M$ as a random feature approximation. 
This interpretation provides the bridge between Neural Operators and 
the statistical theory of random features for vector-valued kernels.

\vspace{0.2cm}
{\bf Generalization Bound.} Assume a NO is trained with gradient descent by empirical risk minimization over the 
class $\cF_M$ and let $\theta_t$ denote the parameter update after $t \in \mbn$ iterations.  
The excess risk of the corresponding neural operator $G_{\theta_t}\in\mathcal{F}_M$ can be decomposed as 
\begin{align}\label{NOdecomp}
&\|G_{\theta_t} - G_\rho\|^2_{L^2(\rho_{\cU})}\\
&\lesssim 
 \|G_{\theta_t}  - F_t^M\|^2_{L^2(\rho_{\cU})} +  
  \| F_t^M-  G_\rho\|^2_{L^2(\rho_{\cU})}  \;,\nonumber
\end{align}
where $F_t^M$ denotes the random-feature estimator associated with the vector-valued kernel $K_M$, trained with the same GD dynamics. 
The first term captures the discrepancy between the finite-width NO and its NTK-based random feature approximation, while the second term measures the generalization error of the random feature method itself. {\bf The central objective of this paper is to establish optimal learning guarantees for the second term, and to extend the analysis to a broader class of regularization schemes and kernels. }


\subsection{Kernel Methods, Random Features, and Regularization: General Setup}

{\bf Vector-valued Kernels.} Let $K : \mathcal{U} \times \mathcal{U} \longrightarrow \mathcal{L}(\mathcal{V})$ denote 
a reproducing $\cV$-\emph{valued kernel of positive type}, where $\mathcal{L}(\mathcal{V})$ denotes the Banach space of bounded linear operators 
on $\mathcal{V}$.  
For $u \in \mathcal{U}$, we write $K_u : \mathcal{V} \to \mathcal{F}(\mathcal{U}, \mathcal{V})$ for the operator that maps $v \in \mathcal{V}$ to the function $K_uv \in \mathcal{F}(\mathcal{U},\mathcal{V})$ given by
\begin{align}\label{k_u}
(K_uv)(\tilde u) = K(\tilde u,u)v, \qquad \tilde u \in \mathcal{U}.
\end{align}
By $\cH$ we denote the associated unique $\cV$-valued RKHS on $\cU$, which can be continuously included into $\mathcal{F}(\mathcal{U}, \mathcal{V})$. We assume that $K$ is \emph{2-bounded}, implying that the inclusion $S:\cH \hookrightarrow L^2(\cU, \rho_{\cU}; \cV)$ is a bounded linear map.  
A brief review of the key definitions related to vector-valued kernels is provided in Appendix~A, see also \citet{vvk1, vvk2}.

\begin{assumption}[Kernel]
\label{ass:kernel}
Assume that the kernel $K$ admits an integral representation of the form 
\begin{equation}
\label{eq:kernel-rep}
K(u,\tilde{u}) = \sum_{i=1}^p \int_\Omega \varphi_i(u,\omega) \otimes \varphi_i(\tilde{u},\omega)\, d\pi(\omega),
\end{equation}
where $\varphi_i : \mathcal{U} \times \Omega \to \mathcal{V}$ for $i=1,\dots,p$, and $(\Omega,\pi)$ is a probability space.  
Moreover, assume that for all $u \in \mathcal{U}$, 
\begin{equation}
\label{eq:RFbound}
\sum_{i=1}^p \|\varphi_i(u,\omega)\|_{\mathcal{V}}^2 \leq \kappa^2 
\quad \pi-\text{almost surely }. 
\end{equation}
\end{assumption}
Note that Assumption~\ref{ass:kernel} implies the Hilbert–Schmidt bound 
$\|K(u,\tilde{u})\|_{HS} \leq \kappa^2$ $\rho_{\cU}$-almost surely. Examples of \eqref{eq:kernel-rep} include the \emph{Gaussian kernel} and \emph{Random Fourier features} \citep{NIPS2007_013a006f, features}. 
In contrast to these works, we allow for the additional sum over $i$, which enables us to cover special cases of Neural Tangent Kernels \eqref{eq:NTK-ex}, see also \citet{jacot2018neural, nitanda2020optimal, Li21, Munteanu22, Oymak, nguyen2023neurons}.

\vspace{0.2cm}
{\bf Random Feature Approximations.}
The idea of RFA is to approximate kernels that admit an integral representation \eqref{eq:kernel-rep} by a kernel represented by a finite sum, i.e., $K(u,\tilde{u}) \approx K_M(u,\tilde{u})$ for $u,\tilde{u} \in \mathcal{U}$ and $M \in \mbn$, where
\begin{align*}
K_M(u,\tilde{u}) := \sum_{i=1}^p \frac{1}{M} \sum_{m=1}^M 
\varphi_i(u,\omega_m) \otimes \varphi_i(\tilde{u},\omega_m).
\end{align*}
Here, $\varphi_i : \mathcal{U} \times \Omega \to \mathcal{V}$ for some probability space $(\Omega,\pi)$, 
and $\{\omega_m\}_{m=1}^M$ are drawn i.i.d.\ from $\pi$. 

The associated RKHS is denoted by $\cH_M$. Under Assumption \eqref{eq:RFbound}, the inclusion $\cS_M: \cH_M \hookrightarrow L^2(\cU, \rho_{\cU}; \cV)$ is a bounded linear map. 


\vspace{0.2cm}

The main benefit of RFA is that it renders kernel methods computationally feasible on large datasets while preserving their statistical guarantees. Classical kernel methods require storing the full $n\times n$ Gram matrix and solving linear systems at cost $O(n^2)$ in memory and $O(n^3)$ in time. In contrast, RFA replaces the kernel with an explicit $M$-dimensional feature map, reducing the computational cost to $O(nM^2)$ (for ridge regression) or $O(nMt)$ (for $t$ gradient descent iterations), with only $O(nM)$ memory. Statistically, it has been shown that for many kernels $M=O(\sqrt{n})$ features suffice to achieve minimax-optimal learning rates. Thus, RFA provides a scalable and flexible framework for kernel methods.

\vspace{0.2cm}

{\bf Regularization.} 
Since the expected risk \eqref{eq:expected-risk} cannot be minimized directly, the standard procedure is 
empirical risk minimization (ERM) over the hypothesis space $\mathcal{H}$,
\[
\min_{F \in \mathcal{H}} \; \widehat{\mathcal{E}}(F), 
\qquad  
\widehat{\mathcal{E}}(F) = \frac{1}{n}\sum_{j=1}^n \ell\!\big(F(u_j), v_j\big).
\]
However, direct ERM in an RKHS setting is typically ill-posed. To avoid overfitting and to obtain consistent estimators, 
\emph{regularization} is introduced. 

\vspace{0.1cm}

\begin{definition}[Regularization function] 
Let $\phi :(0,1]\times [0,1]\to\mathbb{R}$ and set $\phi_\lambda(t)=\phi(\lambda,t)$. 
The family $\{\phi_\lambda\}_{\lambda}$ is called a family of \emph{regularization functions} if there exist constants $D,E,c_0>0$ such that for all $0<\lambda \leq 1$:  
\begin{align}
\text{i)} \;\; & \sup_{0<t\leq 1} |t\phi_\lambda(t)| \leq D, \label{def.phi}\\
\text{ii)} \;\; & \sup_{0<t\leq 1} |\phi_\lambda(t)| \leq \tfrac{E}{\lambda},\\
\text{iii)} \;\; & \sup_{0<t\leq 1} |r_\lambda(t)| \leq c_0,\;\;  
 r_\lambda(t):=1-t\phi_\lambda(t). \label{residual}
\end{align}
\end{definition}

This family of methods, known as \emph{spectral regularization}, encompasses both explicit regularization, such as Tikhonov regularization, and implicit regularization through iterative schemes, including gradient descent and accelerated methods.  
Originally developed for (statistical) inverse problems \citep{engl1996regularization}, these techniques have since been applied in machine learning, in particular to non-parametric least-squares regression 
\citep{Caponetto, bauer2007regularization, Muecke2017op.rates, lin2020optimal}.  

It has been shown in \cite{10.1162/neco.2008.05-07-517, Muecke2017op.rates} that attainable learning rates are essentially determined by the \emph{qualification} of the regularization $\{\phi_\lambda\}_{\lambda}$, i.e., the largest $\nu>0$ such that for all $q\in[0,\nu]$ and $0<\lambda \leq 1$:  
\begin{equation}\label{c_r}
\sup_{0 < t \leq 1}\, |r_{\lambda}(t)|\, t^{q} \leq c_{q}\,\lambda^{q},
\end{equation}
for some constant $c_q > 0$. 

\vspace{0.2cm}

A principled approach is to exploit the spectral structure of the empirical operators 
$\widehat{\Sigma}_M: \cH_M \to \cH_M$ and $\widehat{\cS}^*_M: \cV^n \to \mathcal{H}_M$, defined as
\begin{align*}
\widehat{\Sigma}_M &= \frac{1}{n}\sum_{j=1}^n K_{M,u_j}K_{M,u_j}^*, \\
\widehat{\cS}^*_M \mathbf{v} &= \frac{1}{n}\sum_{j=1}^n K_{M,u_j} v_j.
\end{align*}
With these operators, spectral regularization estimators combined with RFA take the form
\begin{equation}
\label{def:estimator}
F^M_\lambda = \phi_\lambda(\widehat{\Sigma}_M)\, \widehat{\cS}^*_M \mathbf{v} \; \in \; \cH_M.
\end{equation}

\section{Main Results}
\label{sec:main-results}

\subsection{Assumptions and Main Results}

In this section we formulate our assumptions and state our main results. 

\begin{assumption}[Data Distribution]
\label{ass:input}
 There exists positive constants $Q$ and $Z$ such that for all $l \geq 2$ with $l \in \mathbb{N}$,
$$
\int_{\mathcal{V}}\|v\|_\mathcal{V}^l\;  d \rho(v \mid u) \leq \frac{1}{2} l ! Z^{l-2} Q^2
$$
$\rho_\mathcal{U}$-almost surely.
\end{assumption}
This assumption is satisfied, for example, if $v$ is bounded almost surely. 
It further implies that the regression operator $G_\rho$ is bounded almost surely, since 
\begin{align*}
\left\|G_\rho(u)\right\|_\mathcal{V} &\leq \int_{\mathcal{V}}\|v\|_\mathcal{V}\; d \rho(v \mid u) \\
&\leq\left(\int_{\mathcal{V}}\|v\|_{\mathcal{V}}^2\; d \rho(v \mid u)\right)^{\frac{1}{2}} \leq Q\,.
\end{align*}

To characterize the smoothness of $G_\rho$ relative to the kernel, 
we impose a so-called \emph{source condition}. 
This condition links $G_\rho$ to the spectral properties of the kernel integral operator 
and plays a central role in determining the attainable learning rates.

Denote by $\mathcal{L}:  L^2(\mathcal{U} , \rho_\mathcal{U})\to  L^2(\mathcal{U} , \rho_\mathcal{U})$ the kernel integral operator associated to $K$, i.e. 
\[ \mathcal{L}G = \int_{\mathcal{U}}  K_{u} G(u) \;\rho_\mathcal{U}(d u) . \]

\begin{assumption}[Source Condition]
\label{ass:source}
Let $R>0$, $r>0$. We assume 
$G_\rho  = \mathcal{L}^r H, \label{hsource}\,$ 
for some $H \in L^2(\mathcal{U} , \rho_\mathcal{U})$, 
satisfying $\|H\|_{L^2(\rho_\mathcal{U})} \leq R$ . 
\end{assumption} 
This condition links $G_\rho$ to the spectral properties of the kernel integral operator 
and plays a central role in determining the attainable learning rates. 
The parameter $r>0$ quantifies the degree of smoothness of $G_\rho$: 
larger values of $r$ correspond to higher regularity. In particular, the case $r=\tfrac{1}{2}$ corresponds to the well-specified setting where 
$G_\rho$ lies in $\cH$, 
while $r>\tfrac{1}{2}$ reflects additional smoothness, 
and $r<\tfrac{1}{2}$ corresponds to a misspecified setting. For more details, we refer to e.g. \citet{bauer2007regularization, lin2020optimal}. 


While the source condition controls the regularity of the target function, 
a complementary assumption is required to capture the complexity of the hypothesis space. 
This is typically expressed in terms of the \emph{effective dimension}, 
which measures the capacity of the RKHS relative to the kernel operator spectrum. 

\begin{assumption}[Effective Dimension]
\label{ass:dim} 
For some $b \in [0,1]$ and $c_b > 0$, assume that for all $\lambda > 0$ the operator $\mathcal{L}$ satisfies
\begin{align}
\label{effecDim}
\mathcal{N}(\lambda) := \operatorname{tr} \; \left(\mathcal{L}(\mathcal{L}+\lambda I)^{-1}\right) \leq c_b \lambda^{-b}.
\end{align}
Moreover, we assume that $2r+b > 1$.
\end{assumption}
Here, $\mathcal{N}(\lambda)$ is the \emph{effective dimension} of the kernel, which quantifies the number of effective degrees of freedom of the hypothesis space. Intuitively, it reflects the decay of the eigenvalues of $\mathcal{L}$: smaller values of $b$ correspond to faster decay (lower capacity), while larger values of $b$ indicate slower decay and hence higher complexity. The condition \eqref{effecDim} is always satisfied with $b=1$, since $\mathcal{L}$ is trace class and its eigenvalues $\{\mu_i\}$ satisfy $\mu_i \lesssim i^{-1}$. If, more generally, the eigenvalues decay polynomially as $\mu_i \sim i^{-c}$ with $c>1$, then \eqref{effecDim} holds with $b=1/c$; if $\mathcal{L}$ has finite rank, then $b=0$. The case $b=1$ is often called the \emph{capacity-independent} case. Smaller values of $b$ allow for faster convergence rates of the learning algorithms.  


\vspace{0.2cm}

The choice of the regularization parameter $\lambda$ and the number of random features $M$ in \eqref{def:estimator} 
is crucial for balancing approximation, estimation, and optimization errors (see Appendix~B). 
In practice, both parameters are determined as functions of the sample size $n$, 
in order to guarantee optimal statistical performance. 
The following result establishes that, under the source and capacity assumptions, 
our RF estimator achieves the minimax-optimal convergence rate. 
Moreover, it provides explicit conditions on $\lambda_n$ and $M_n$ 
that ensure the desired statistical guarantees. The proof is provided in Appendix~B.

\begin{theorem}
\label{theo1}
Suppose Assumptions~\ref{ass:input}--\ref{ass:dim} hold. Let $\{\phi_\lambda\}_\lambda$ be a family of regularization 
functions with qualification $\nu >0$. Let $\delta \in (0,1)$ and choose
\[
\lambda_n = C\, n^{-\frac{1}{2r+b}} \log^3\!\left(\tfrac{2}{\delta}\right).
\]
Then, with probability of at least $1-\delta$, the RF estimator \eqref{def:estimator} satisfies
\begin{align*}
\|G_\rho - \mathcal{S}_{M_n} F_{\lambda_n}^{M_n}\|_{L^2(\rho_\mathcal{U})}
\;\leq\; \bar{C}\, n^{-\frac{r}{2r+b}} \log^{3r+1}\!\left(\tfrac{1}{\delta}\right),
\end{align*}
provided that $\nu \geq r \vee 1$, $n \geq n_0 := \exp\!\left(\tfrac{2r+b}{2r+b-1}\right)$, 
and the number of random features satisfies
\begin{align*}
M_n \;\geq\; p \cdot \tilde{C}\cdot  \log(n) \cdot 
\begin{cases}
n^{\frac{1}{2r+b}}, & r \in (0,\tfrac{1}{2}), \\[0.5ex]
n^{\frac{1+b(2r-1)}{2r+b}}, & r \in [\tfrac{1}{2},1], \\[0.5ex]
n^{\frac{2r}{2r+b}}, & r \in (1,\infty).
\end{cases}
\end{align*}
Here $C, \tilde{C}, \bar{C}$ are constants independent of $n, M, \lambda$.  
\end{theorem}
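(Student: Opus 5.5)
The plan is to follow the error decomposition of \citet{features} and \citet{lanthaler2023error}, now for a general spectral filter $\phi_\lambda$ and the operator-valued feature map. First we lift everything to a common space. The RF kernel $K_M$ is induced by the feature map $\Phi_M(u)=M^{-1/2}(\varphi_i(u,\omega_m))_{i\le p,m\le M}$, which takes values in the finite-dimensional space $\mathcal{V}^{pM}$. This lets us write $\mathcal{S}_M F^M_\lambda$ through the sampling operator $\widehat{\mathcal{S}}_M$, the empirical covariance $\widehat\Sigma_M=\widehat{\mathcal{S}}_M^*\widehat{\mathcal{S}}_M$, the population covariance $\Sigma_M=\mathcal{S}_M^*\mathcal{S}_M$ and the RF integral operator $\mathcal{L}_M=\mathcal{S}_M\mathcal{S}_M^*$. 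We then introduce the intermediate population objects
\begin{align*}
F^M_{\lambda,*}=\phi_\lambda(\Sigma_M)\mathcal{S}_M^*G_\rho,\qquad G_\lambda = \mathcal{L}\phi_\lambda(\mathcal{L})G_\rho,
\end{align*}
and split the error into three pieces:
\begin{align*}
\|G_\rho-\mathcal{S}_MF^M_\lambda\|\le \|G_\rho-G_\lambda\| + \|G_\lambda-\mathcal{S}_MF^M_{\lambda,*}\| + \|\mathcal{S}_M(F^M_{\lambda,*}-F^M_\lambda)\|.
\end{align*}
These are the approximation, the random-feature and the estimation errors, in that order.

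\textbf{Approximation error.} The source condition and qualification \eqref{c_r} with $\nu\ge r$ give $\|r_\lambda(\mathcal{L})\mathcal{L}^rH\|\le c_r R\lambda^r$ directly.

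\textbf{Estimation error.} Conditionally on $\omega_1,\dots,\omega_M$, this is a standard spectral-regularization analysis in the RKHS $\mathcal{H}_M$ with bounded kernel $\kappa^2$. Following \citet{Caponetto,lin2020optimal}, we insert $(\Sigma_M+\lambda)^{\pm1/2}$ and use three ingredients.
\begin{enumerate}[(i)]
\item A Bernstein inequality in Hilbert spaces, justified by Assumption~\ref{ass:input}, for $\|(\Sigma_M+\lambda)^{-1/2}(\widehat{\mathcal{S}}_M^*\mathbf v-\widehat\Sigma_M F^M_{\lambda,*})\|$. This yields $\frac{1}{\sqrt{n\lambda}}+\sqrt{\mathcal N_M(\lambda)/n}$ times logarithms.
\item The operator concentration $\|(\Sigma_M+\lambda)^{1/2}(\widehat\Sigma_M+\lambda)^{-1/2}\|\le 2$, which holds once $n\gtrsim \mathcal N_\infty(\lambda)\log/\lambda$.
\item Properties i)–iii) of $\phi_\lambda$.
\end{enumerate}
To return to the true kernel, we must show $\mathcal N_M(\lambda)\lesssim \mathcal N(\lambda)$ with high probability over $\omega$ once $M\gtrsim \lambda^{-1}\log(1/\lambda)$. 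We would do this by a Bernstein bound for $\|(\mathcal{L}+\lambda)^{-1/2}(\mathcal{L}_M-\mathcal{L})(\mathcal{L}+\lambda)^{-1/2}\|$, writing $\mathcal{L}_M$ as an average of $pM$ rank-type operators bounded via \eqref{eq:RFbound}. This is where the factor $p$ enters. For $r>1/2$ we additionally need a bound on $\|F^M_{\lambda,*}\|_{\mathcal{H}_M}$, which requires the qualification $\nu\ge 1$.

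\textbf{Random-feature error.} We expect this to be the main obstacle. We would write it as
\begin{align*}
G_\lambda-\mathcal{L}_M\phi_\lambda(\mathcal{L}_M)G_\rho=r_\lambda(\mathcal{L}_M)G_\rho-r_\lambda(\mathcal{L})G_\rho,
\end{align*}
and substitute $G_\rho=\mathcal{L}^rH$. For $r\le 1/2$ we bound $\|r_\lambda(\mathcal{L}_M)\mathcal{L}^r\|\le \|r_\lambda(\mathcal{L}_M)(\mathcal{L}_M+\lambda)^r\|\,\|(\mathcal{L}_M+\lambda)^{-r}(\mathcal{L}+\lambda)^r\|$. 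The second factor is controlled by Cordes' inequality together with the operator concentration from the estimation step, which needs only $M\gtrsim\mathcal N_\infty(\lambda)\log(1/\lambda)\lesssim\lambda^{-1}\log$. With $\lambda_n\asymp n^{-1/(2r+b)}$ this gives the first regime of $M_n$.

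For $r\ge1/2$ we need finer control. The plan is to replace the fractional-power difference by the difference $\mathcal{L}-\mathcal{L}_M$ weighted by $(\mathcal{L}+\lambda)^{-1/2}$. The concentration of $\|(\mathcal{L}+\lambda)^{-1/2}(\mathcal{L}-\mathcal{L}_M)\|_{HS}$ is of order $\sqrt{\mathcal N(\lambda)/M}$, and the residual carries the extra factor $\lambda^{r-1/2}$, capped at exponent $1/2$ when $r>1$. For $r\in[1/2,1]$ we use operator-monotonicity of $t\mapsto t^{r-1/2}$ for the difference $\mathcal{L}^{r-1/2}-\mathcal{L}_M^{r-1/2}$. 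For $r>1$ we use the Lipschitz estimate $\|\mathcal{L}^{s}-\mathcal{L}_M^{s}\|\lesssim\|\mathcal{L}-\mathcal{L}_M\|$. Requiring the result to be at most $\lambda^r$ forces $M\gtrsim \mathcal N(\lambda)\lambda^{-(2r-1)}$ in the middle regime, which gives $n^{(1+b(2r-1))/(2r+b)}$ after plugging in $\lambda_n$, and $M\gtrsim\lambda^{-2r}$-type counts beyond $r=1$, which gives $n^{2r/(2r+b)}$.

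\textbf{Combining.} A union bound with $\delta/3$ over the three events produces the logarithmic factors. The condition $n\ge n_0$ ensures $\lambda_n\le1$ and that the operator-concentration requirement $n\lambda_n\gtrsim\mathcal N(\lambda_n)\log n$ holds, since $2r+b>1$. Balancing $\lambda^r$ against $\sqrt{\mathcal N(\lambda)/n}$ yields the stated rate $n^{-r/(2r+b)}$.
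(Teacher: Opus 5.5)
Your skeleton matches the paper's in substance. Your $F^M_{\lambda,*}=\phi_\lambda(\Sigma_M)\mathcal{S}_M^*G_\rho$ equals the paper's $F^*_\lambda=\mathcal{S}_M^*\phi_\lambda(\mathcal{L}_M)G_\rho$. Your first two terms sum to exactly the paper's approximation error $\|r_\lambda(\mathcal{L}_M)\mathcal{L}^rH\|$. Your Cordes argument for $r\le\frac12$ is also essentially the paper's.

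The genuine gap is the step you flag yourself, $r\ge\frac12$, which is where the stated feature count comes from. As you describe it, operator monotonicity of $t\mapsto t^{r-1/2}$ only yields the unweighted H\"older bound $\|\mathcal{L}^{r-1/2}-\mathcal{L}_M^{r-1/2}\|\le\|\mathcal{L}-\mathcal{L}_M\|^{r-1/2}\sim M^{-(2r-1)/4}$. It gives no way to put $(\mathcal{L}+\lambda)^{-1/2}$ weights on the difference. The natural split $r_\lambda(\mathcal{L}_M)\mathcal{L}^r=r_\lambda(\mathcal{L}_M)\mathcal{L}_M^{r-1/2}\mathcal{L}^{1/2}+r_\lambda(\mathcal{L}_M)(\mathcal{L}^{r-1/2}-\mathcal{L}_M^{r-1/2})\mathcal{L}^{1/2}$ then forces $M\gtrsim\lambda^{-4r/(2r-1)}$. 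For a general filter there is also no resolvent identity for $r_\lambda(\mathcal{L}_M)-r_\lambda(\mathcal{L})$, so splitting off $r_\lambda(\mathcal{L})G_\rho$ buys nothing.

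Your count is also internally inconsistent. At $\lambda_n$, $\mathcal{N}(\lambda)\lambda^{-(2r-1)}$ is $n^{(2r+b-1)/(2r+b)}$, which equals $n^{(1+b(2r-1))/(2r+b)}$ only when $b=1$ or $r=1$.

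The missing idea is to spend the qualification $\nu\ge r\vee1$ on a whole power of the random-feature resolvent:
\[
\|r_\lambda(\mathcal{L}_M)\mathcal{L}^r\|\le\|r_\lambda(\mathcal{L}_M)\mathcal{L}_{M,\lambda}^{r\vee1}\|\,\|\mathcal{L}_{M,\lambda}^{-(r\vee1)}\mathcal{L}_\lambda^{r}\|,
\]
where the first factor is $\lesssim\lambda^{r\vee1}$.
\begin{itemize}
\item For $r\in[\frac12,1]$ only $\mathcal{L}_{M,\lambda}^{-1}$ remains, so $\mathcal{L}_{M,\lambda}^{-1}\mathcal{L}_\lambda^r=\mathcal{L}_\lambda^{r-1}+\mathcal{L}_{M,\lambda}^{-1}(\mathcal{L}-\mathcal{L}_M)\mathcal{L}_\lambda^{r-1}$. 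Apply the interpolation inequality $\|AB^\sigma\|\le\|A\|^{1-\sigma}\|AB\|^\sigma$ with $A=\mathcal{L}_\lambda^{-1/2}(\mathcal{L}_M-\mathcal{L})$, $B=\mathcal{L}_\lambda^{-1/2}$ and $\sigma=2-2r$. This combines $\|A\|\lesssim\sqrt{\mathcal{N}(\lambda)/M}$ with $\|AB\|\lesssim\sqrt{p/(\lambda M)}$. The correction is then $\lesssim\lambda^{-1/2}\,\mathcal{N}(\lambda)^{r-1/2}M^{-1/2}\lambda^{r-1}$. This is $\lesssim\lambda^{r-1}$ exactly when $M\gtrsim\mathcal{N}(\lambda)^{2r-1}\lambda^{-1}$, i.e. the stated $n^{(1+b(2r-1))/(2r+b)}$.
\item For $r>1$ one uses $\|\mathcal{L}_{M,\lambda}^{-r}\mathcal{L}_\lambda^r\|\le1+C\lambda^{-r}\|\mathcal{L}-\mathcal{L}_M\|$, which gives $M\gtrsim\lambda^{-2r}$.
\end{itemize}

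The same transfer is missing from your estimation step. Conditionally on the features, the standard spectral analysis reaches $\lambda^r$ only under a source condition relative to $\Sigma_M$, which you do not have. Knowing only $\|F^M_{\lambda,*}\|_{\mathcal{H}_M}\lesssim1$, the term $\|\Sigma_M^{1/2}r_\lambda(\widehat\Sigma_M)F^M_{\lambda,*}\|_{\mathcal{H}_M}$ is only $O(\lambda^{1/2})$, which is too weak for $r>\frac12$.

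The paper handles this term as follows:
\begin{itemize}
\item It inserts $\mathcal{L}_{M,\lambda}^{r\vee1}\mathcal{L}_{M,\lambda}^{-(r\vee1)}$ in front of $\mathcal{L}^r$ again and reuses the bound on $\|\mathcal{L}_{M,\lambda}^{-(r\vee1)}\mathcal{L}_\lambda^r\|$.
\item It moves $\mathcal{L}_{M,\lambda}^{(r\vee1)-1/2}$ through $\mathcal{S}_M^*$ to obtain $\Sigma_{M,\lambda}^{(r\vee1)-1/2}$.
\item It passes to $\widehat\Sigma_M$ via $\|\widehat\Sigma_{M,\lambda}^{-q}\Sigma_{M,\lambda}^{q}\|\le2$ with $q=(r\vee1)-\frac12$, and applies the qualification once more.
\end{itemize}
This, not a bound on $\|F^M_{\lambda,*}\|_{\mathcal{H}_M}$, is where $\nu\ge r\vee1$ is needed. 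The sup-norm bound on $F^*_\lambda$ mainly matters for $r<\frac12$, where it grows like $\lambda^{r-1/2}$.

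Two smaller slips:
\begin{itemize}
\item The first Bernstein term is $1/(n\sqrt\lambda)$, not $1/\sqrt{n\lambda}$. The latter would lose optimality whenever $b<1$.
\item The covariance concentration needs $n\gtrsim\lambda^{-1}\log$, not $\mathcal{N}_\infty(\lambda)\lambda^{-1}\log\sim\lambda^{-2}\log$. The latter would fail for $2r+b<2$.
\end{itemize}
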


Since our framework encompasses operator-valued RFA trained via GD, 
Theorem~\ref{theo1} can be directly applied to derive generalization bounds for NOs. 
Recall the excess-risk decomposition for shallow NOs in \eqref{NOdecomp}.  
\citet{nguyen2024optimalconvergenceratesneural} showed that the first term is bounded by $O(\log n / M_n)$. 
Hence, when the number of neurons scales with the number of random features, i.e., 
$M_n = O\!\left(n^{\tfrac{2r}{2r+b}} \log n\right)$, 
Theorem~\ref{theo1} implies that NOs achieve the same minimax rates as non-parametric kernel methods.


\begin{corollary}[\citet{nguyen2024optimalconvergenceratesneural}, Theorem.~3.5]
\label{main:corr}
Suppose the assumptions of Theorem~\ref{theo1} hold. Let $G_{\theta_{T_n}}$ denote the NO as defined 
in Section \ref{sec:motivation}, where $T_n$ is the number of GD iterations. Let $M_n$ denote the network width. Assume   
\begin{align*}
\lambda_n \;&=\; T_n^{-1} \;=\; C\, n^{-\frac{1}{2r+b}}, \\
M_n \;&\geq\; \tilde{d}^2\cdot \tilde{C}\,B_{T_n}^6 \bigl(T^{2r}_n \vee T_n\bigr)\log^2 n \,,
\end{align*}
where $B_{T_n}>0$ bounds the parameter drift,
\[
\| \theta_t - \theta_0 \|_2 \;\leq\; B_{T_n} 
\quad\text{for all } t\in[T_n],
\]
and $C,\tilde{C}$ are positive constants independent of $n,M_n,T_n,B_{T_n}$.  
Then, with probability of at least $1-\delta$,
\[
\| G_{\theta_{T_n}} - G_\rho \|_{L^2(\rho_{\mathcal{U}})}
\;\leq\;
\bar{C}\,
n^{-\frac{r}{2r+b}}
\log^3\!\bigl(2/\delta\bigr),
\]
for some constant $\bar{C}>0$ independent of $n,M_{n},T_{n},B_{T_n}$.
\end{corollary}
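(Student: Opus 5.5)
The corollary follows by combining the decomposition \eqref{NOdecomp} with Theorem~\ref{theo1} for gradient descent. Write $F^{M_n}_{T_n}$ for the random-feature estimator of \eqref{def:estimator}, built with the kernel $K_{M_n}$ of Section~\ref{sec:motivation} and with $\phi_\lambda$ the gradient-descent filter $\phi_\lambda(t)=\sum_{k=0}^{T-1}\eta(1-\eta t)^k$, $\lambda=1/T$ (step size $\eta\le 1$ after normalizing $\kappa$). Then
\[
\|G_{\theta_{T_n}}-G_\rho\|_{L^2(\rho_\cU)} \le \|G_{\theta_{T_n}}-\mathcal{S}_{M_n}F^{M_n}_{T_n}\|_{L^2(\rho_\cU)}+\|\mathcal{S}_{M_n}F^{M_n}_{T_n}-G_\rho\|_{L^2(\rho_\cU)} .
\]
The proof bounds the two terms separately, each with probability at least $1-\delta/2$, and then applies a union bound.

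For the second term, I first check that gradient descent is an admissible regularization family. It satisfies $|t\phi_\lambda(t)|\le1$ and $|\phi_\lambda(t)|\le \eta T=\eta/\lambda$, and its residual $r_\lambda(t)=(1-\eta t)^T$ obeys $\sup_t (1-\eta t)^T t^q\le (q/(e\eta))^q T^{-q}$ for every $q>0$. So its qualification is infinite, and in particular $\nu\ge r\vee1$. Next, the NTK \eqref{eq:NTK-ex} satisfies Assumption~\ref{ass:kernel} with $p=\tilde d+1$, features $\varphi_1=\psi$ and $\varphi_{1+j}=\psi'_j$, and $\omega=b^{(0)}$. The bound \eqref{eq:RFbound} holds with $\kappa^2\lesssim \tilde d$, provided $\sigma,\sigma'$ and $J(u)$ are bounded, which are the standing assumptions of \citet{nguyen2024optimalconvergenceratesneural}. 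Theorem~\ref{theo1} then applies with $\lambda_n=T_n^{-1}\simeq n^{-1/(2r+b)}$. It requires
\[
M_n\gtrsim p\,\log n\cdot\bigl(n^{\frac{2r}{2r+b}}\vee n^{\frac{1}{2r+b}}\bigr)=p\log n\,(T_n^{2r}\vee T_n),
\]
which is the largest of the three cases in the theorem. The assumed width condition implies this, because $p\le\tilde d+1\lesssim\tilde d^2$, and $B_{T_n}^6\log^2 n\ge\log n$ once $B_{T_n}\gtrsim1$ (otherwise absorb this into $\tilde C$). The theorem therefore gives the rate $n^{-r/(2r+b)}$ up to logarithmic factors in $1/\delta$. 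The extra logarithmic factor in $\lambda_n$ in Theorem~\ref{theo1} can be absorbed into the constant $C$ by fixing $\delta$, or tracked through the stated $\log^3(2/\delta)$.

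For the first term, I invoke the linearization estimate of \citet{nguyen2024optimalconvergenceratesneural}, Theorem~3.5. Along the gradient-descent path with $\|\theta_t-\theta_0\|\le B_{T_n}$, the difference between the network and its linearization is controlled by the second derivative of $G_\theta$ in $\theta$. This scales like $B_{T_n}^2/\sqrt{M_n}$ per step, and it accumulates over $T_n$ iterations through the stability of gradient descent, giving a bound of the form $\tilde d\,B_{T_n}^3 T_n\log n/\sqrt{M_n}$. Under the width condition this is at most $n^{-r/(2r+b)}$: squaring the requirement gives $M_n\gtrsim \tilde d^2B_{T_n}^6T_n^2\log^2n\,n^{2r/(2r+b)}$, which matches the assumed form after using $T_n^{2r}\vee T_n$ appropriately.

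The main obstacle is exactly this first term, namely matching the powers of $T_n$ in the cited drift bound to the stated width condition. If the cited bound carries an extra factor of $T_n$, I would instead bound the discrepancy in the linearized function space $\|\cdot\|_{L^2}$ directly, using that the empirical NTK Gram operator has norm at most $\kappa^2$. This lets the accumulated error be summed as a geometric-type series rather than a plain sum over the $T_n$ iterations. Combining the two terms yields the claimed rate with probability at least $1-\delta$.
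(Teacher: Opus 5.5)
Your overall route matches the paper's. You split the error via \eqref{NOdecomp}, control the random-feature term by Theorem~\ref{theo1} with the gradient-descent filter, and import the network-versus-linearization term from \citet{nguyen2024optimalconvergenceratesneural}. Three of your checks are correct and more explicit than the paper: gradient descent has infinite qualification, the NTK fits Assumption~\ref{ass:kernel} with $p=\tilde d+1$, and $T_n^{2r}\vee T_n$ dominates all three width regimes of Theorem~\ref{theo1}. Dropping the $\log^3(2/\delta)$ factor from $\lambda_n$ does force $C$ to depend on $\delta$, through the sample-size condition of Theorem~\ref{theo2}, but the statement permits this.

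The genuine gap is the first term. Your bound $\tilde d\,B_{T_n}^3T_n\log n/\sqrt{M_n}$ does not close under the stated width. Since $n^{2r/(2r+b)}\asymp T_n^{2r}$, requiring it to be at most $n^{-r/(2r+b)}$ forces $M_n\gtrsim\tilde d^2B_{T_n}^6T_n^{2+2r}\log^2 n$. That is a factor $T_n^2$ beyond the hypothesis $\tilde d^2B_{T_n}^6(T_n^{2r}\vee T_n)\log^2 n$, and no use of $T_n^{2r}\vee T_n$ removes it, so your sentence asserting a match is false.

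The stated width only fits a linearization estimate with \emph{no} growth in $T_n$. The paper uses that the squared discrepancy $\|G_{\theta_{T_n}}-\mathcal{S}_{M_n}F^{M_n}_{T_n}\|^2_{L^2(\rho_\mathcal{U})}$ is $O(\log n/M_n)$, with the dependence on the drift radius absorbed into the constant. The factor $B_{T_n}^6\log^2 n$ in the width pays for this term, so $M_n\gtrsim B_{T_n}^6T_n^{2r}\log^2 n$ makes it $\lesssim n^{-2r/(2r+b)}$. A per-step Taylor remainder of order $B_{T_n}^2/\sqrt{M_n}$ summed over $T_n$ iterations cannot deliver this, and your fallback about a ``geometric-type series'' is not an argument.

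You also cannot invoke Theorem~3.5 of \citet{nguyen2024optimalconvergenceratesneural} for this piece, because that theorem \emph{is} the statement being proved. You need to cite, or prove, the intermediate $T_n$-independent linearization bound explicitly; after that, the two terms combine as you describe.
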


Further details on the training, the initialization of $\theta_{0}$, and the above corollary are provided in Appendix~A.


\subsection{Discussion}

\paragraph{Discussion of Theorem \ref{theo1}.} 
Theorem~\ref{theo1} establishes minimax rates for a broad class of spectral filtering methods. 
In the well-specified case ($r=\tfrac{1}{2}$, $b=1$), achieving a squared $L^2$-error bound of order $O(1/\sqrt{n})$ requires 
$t_n = 1/\lambda_n = O(\sqrt{n})$ iterations and $M_n = O(\sqrt{n}\log n)$ random features. 
For smoother target functions with regularity $r \geq 1$, the optimal convergence rate is attained with 
$t_n = O(n^{\frac{1}{2r+1}})$ iterations, but requires 
$M_n = O(t_n^{2r}\log n)$ random features. 
This highlights an interesting trade-off: higher smoothness reduces the number of necessary iterations but 
increases the number of random features required for optimal generalization.  

In contrast, in the misspecified case $r < \tfrac{1}{2}$, the attainable rate of order 
$O(n^{-\frac{r}{2r+1}})$ is slower, reflecting the limited regularity of the target function. 
In this regime, the required number of random features is only 
$M_n = O( n^{\frac{1}{2r+1}}\log n)$, which is significantly smaller than in the well-specified or smooth cases.  
Overall, Theorem~\ref{theo1} shows that random feature methods with spectral regularization achieve the same minimax-optimal rates as exact kernel methods \citep{Caponetto, Muecke2017op.rates}, while offering improved computational scalability.

\paragraph{Comparison with prior work.} 
Compared to the results of \citet{features, lanthaler2023error}, our work extends the analysis from KRR to general spectral filtering methods and establishes optimal convergence rates for all smoothness levels $r<\tfrac{1}{2}$ satisfying $2r+b>1$ (the \emph{easy learning regime}). 
With respect to the number of required random features, we recover the same order as \citet{features}, namely $M_n=O(\sqrt{n}\log n)$. 
The analysis in \citet{lanthaler2023error} is based on slightly different source assumptions, which coincide with ours in the well-specified case. 
Using a \emph{random kitchen sinks} approach \citep{NIPS2008_0efe3284}, they further showed that the logarithmic factor can be removed, proving that $M_n=O(\sqrt{n})$ random features suffice to achieve optimal rates. 
However, their results do not exploit prior knowledge about the effective dimension and therefore only establish optimal rates in the well-specified setting $b=1$, $r=\tfrac{1}{2}$.

\paragraph{NNs and NOs.} 
A connection between learning with neural networks and random feature approximation (RFA) was already observed in 
\citet{yehudai2019power}. Roughly speaking, they noted that learning with neural networks is possible whenever 
learning with random features is possible. At the same time, they showed that neural networks cannot be used to learn even a single 
ReLU neuron under Gaussian inputs in $\mathbb{R}^d$ with $\mathrm{poly}(d)$ weights, unless the network size (or the magnitude of its weights) 
is exponentially large in $d$. For smoother activations in the NTK regime, \citet{nguyen2023neurons} improved on this result 
by showing that optimality can be achieved with only a polynomial number of random features in both the input dimension $d$ 
and the sample complexity $n$. 

Our approach extends these insights to the operator-valued setting relevant for Neural Operators (NOs). 
In contrast to the vector-input case, our rates are \emph{dimension-free in the input space} $\cU$. 
However, the sum structure of the kernel representation \eqref{eq:kernel-rep} introduces a linear dependence on 
the number of summands $p$, as reflected in Theorem~\ref{theo1}. For NOs, the input space $\cU$ is typically a function space, 
e.g., continuous mappings from $\cX$ to $\mathbb{R}^{d_y}$. In this case, the output dimension $d_y$ of the input functions 
enters the feature dimension $\tilde d = d_k + d_y + d_b$, which directly determines the number of required random features. 
This results in an overall dependence of order $\tilde d^2$ (see Section~\ref{sec:motivation}) in our bounds, as stated in 
Corollary~\ref{main:corr}. Thus, our results reveal a clear trade-off: generalization rates are independent of the (possibly infinite) 
dimension of $\cU$, but the computational cost scales quadratically with the feature dimension $\tilde d$.

\section{Conclusion}

We developed a unified spectral filtering framework for RFA with vector-valued kernels, 
motivated by the goal of deriving generalization guarantees for NOs. 
Our analysis extends beyond KRR to a broad class of learning algorithms with explicit or implicit regularization, and recovers previous results as special cases. 
A key advantage of our approach is that both convergence rates and feature requirements are dimension-free in the (possibly infinite) input space, 
making the results directly applicable to NOs. At the same time, our bounds scale only quadratically with the feature dimension per neuron, 
providing the first minimax-optimal guarantees for NOs that combine statistical efficiency with computational tractability.  

The main theoretical result, Theorem~\ref{theo1}, establishes minimax rates for RFA under standard source and capacity assumptions, 
matching those of exact kernel methods while requiring significantly fewer resources. 
Our discussion highlights trade-offs between smoothness, iteration complexity, and the number of random features, 
as well as the contrast between well-specified, smooth, and misspecified regimes. 
Compared to prior work, our framework delivers optimal rates for all $r<\tfrac{1}{2}$ in the easy learning regime and 
covers operator-valued neural tangent kernels, linking neural networks, RFs, and NOs within a single theoretical setting. 
An interesting direction for future work is to investigate whether the quadratic dependence on the feature dimension $\tilde d$ 
can be further reduced, and to extend the analysis beyond the NTK regime to deeper architectures.

\bibliography{bib_iteration}

\clearpage
\appendix

\thispagestyle{empty}
\onecolumn

\input{AppA}

\input{AppB}

\input{T.U}

\end{document}

%% file: AppA.tex
\section{Additional Material}\label{AppA}

This appendix begins by reviewing the key definitions related to vector-valued kernels. We then provide additional background on Neural Operators and present a more detailed version of Corollary~\ref{main:corr}, accompanied by a brief discussion of its implications. Finally, we include numerical illustrations that support our main theoretical result, Theorem~\ref{theo1}.

\subsection{Preliminaries on Vector-Valued Kernels}

The classical theory of real-valued reproducing kernel Hilbert spaces \citep{Ingo}, including fundamental results such as Mercer's theorem, has been extended to the vector-valued setting in \citet{vvk1,vvk2}. These extensions provide a rigorous mathematical foundation for analyzing operator learning problems within a kernel framework \citep{minh2016operatorvaluedbochnertheoremfourier,brault2016randomfourierfeaturesoperatorvalued,mollenhauer2024learninglinearoperatorsinfinitedimensional,lanthaler2023error}, and have become standard tools in the operator learning literature. For completeness, we briefly recall the key definitions below.

\medskip
\noindent
Let $\mathcal{U}$ be a topological space and let $\mathcal{V}$ be a separable Hilbert space. A map 
\[
K : \mathcal{U} \times \mathcal{U} \to \mathcal{L}(\mathcal{V}),
\]
where $\mathcal{L}(\mathcal{V})$ denotes the space of bounded linear operators on $\mathcal{V}$, is called a \emph{$\mathcal{V}$-reproducing kernel} on $\mathcal{U}$ if, for any finite set of points $u_1, \ldots, u_N \in \mathcal{U}$ and vectors $v_1, \ldots, v_N \in \mathcal{V}$, it holds that
\[
\sum_{i,j=1}^N \left\langle K(u_i, u_j) v_j, v_i \right\rangle_{\mathcal{V}} \ge 0.
\]
This condition is the natural generalization of positive definiteness from the scalar- to the vector-valued setting.

\medskip
\noindent
For each $u \in \mathcal{U}$, we define the linear operator 
\[
K_u : \mathcal{V} \to \mathcal{F}(\mathcal{U}; \mathcal{V}),
\]
where $\mathcal{F}(\mathcal{U}; \mathcal{V})$ denotes the space of measurable $\mathcal{V}$-valued functions on $\mathcal{U}$. Its action on $v \in \mathcal{V}$ is given by
\[
(K_u v)(\tilde{u}) = K(\tilde{u}, u)v \qquad \text{for all } \tilde{u} \in \mathcal{U}.
\]

\medskip
\noindent
Given a $\mathcal{V}$-reproducing kernel $K$, there exists a unique Hilbert space $\mathcal{H}_K \subset \mathcal{F}(\mathcal{U}; \mathcal{V})$ such that 
\[
K_u \in \mathcal{L}(\mathcal{V}, \mathcal{H}_K) \quad \text{for all } u \in \mathcal{U},
\]
and for every $F \in \mathcal{H}_K$,
\[
F(u) = K_u^* F \qquad \text{for all } u \in \mathcal{U},
\]
where $K_u^*: \mathcal{H}_K \to \mathcal{V}$ denotes the adjoint of $K_u$. This is the vector-valued analogue of the classical reproducing property. In particular, it implies
\[
K(u,\tilde{u}) = K_u^* K_{\tilde{u}}.
\]

\medskip
\noindent
The space $\mathcal{H}_K$ is called the \emph{vector-valued reproducing kernel Hilbert space (RKHS)} associated with $K$, and it is given by
\[
\mathcal{H}_K = \overline{\mathrm{span}}\left\{ K_u v \,\big|\, u \in \mathcal{U},\, v \in \mathcal{V} \right\}.
\]

\medskip
\noindent
Finally, a reproducing kernel $K : \mathcal{U} \times \mathcal{U} \to \mathcal{L}(\mathcal{V})$ is called a \emph{Mercer kernel} if $\mathcal{H}_K$ is a subspace of $\mathcal{C}(\mathcal{U}; \mathcal{V})$, the space of continuous $\mathcal{V}$-valued functions on $\mathcal{U}$.

\input{AppNOs}

\subsection{Numerical Illustration}
\label{sec:numerics}

We analyze the behavior of kernel gradient descent with respect to the real-valued NTK. In our simulations, we use $n = 5000$ training and test samples drawn from two datasets: (i) a standard normal distribution with input dimension $d = 1$, and (ii) a subset of the SUSY\footnote{\url{https://archive.ics.uci.edu/ml/datasets/SUSY}} classification dataset with input dimension $d = 14$. All reported results are averaged over 50 independent runs of the algorithm. 

Our theoretical analysis suggests that a number of random features of order $M = O(\sqrt{n}\, p)$, where $p = d + 2$, is sufficient to achieve optimal learning performance. Indeed, Figure~\ref{F1} shows that for both datasets, once $M$ exceeds a threshold of order $O(\sqrt{n}\, p)$ and the number of GD iterations $T$ is fixed, further increasing $M$ does not lead to any improvement in the test error.

\begin{figure}[ht]
\centering
\includegraphics[width=0.42\columnwidth, height=0.19\textheight]{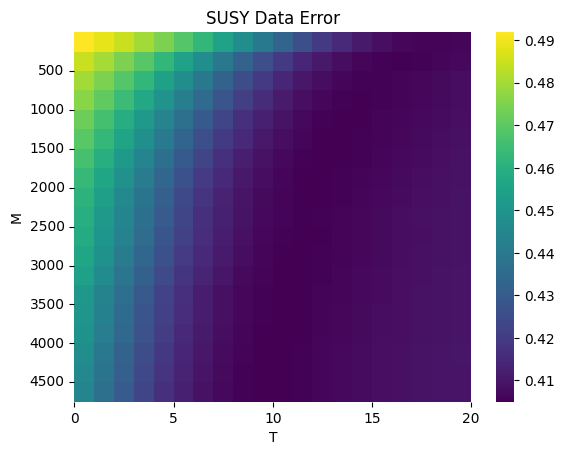}
\includegraphics[width=0.42\columnwidth, height=0.19\textheight]{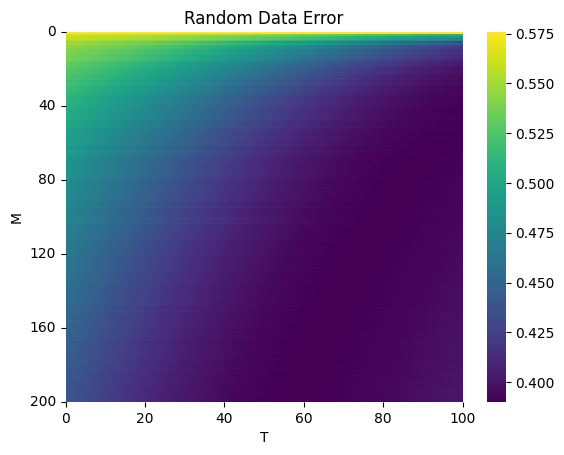}
\caption{Heat plot of the test-error for different numbers of RF $M$ and iterations $T$.\\ 
\label{F1}
{\bf Left:} Error of SUSY data set.
{\bf Right:} Error of random data set.}
\end{figure}

%% file: AppNOs.tex
\subsection{Learning with Neural Operators}

In this section we present a detailed version of Corollary~\ref{main:corr}, based on \citet[Theorem~3.5]{nguyen2024optimalconvergenceratesneural}. For completeness, we briefly recall the neural operator learning setup considered in their work, including the training procedure via gradient descent.

\paragraph{Neural Operator Setting.}
We consider the general operator learning framework introduced in Section \ref{sec:setting}, where the input and output spaces $\mathcal{U}$ and $\mathcal{V}$ are viewed as infinite-dimensional function spaces. To enable practical training based on function-valued data, an additional discretization step is required. This involves so-called \emph{second-stage samples} drawn from the input domain of the functions.

Specifically, let $\mathcal{X} \subseteq \mathbb{R}^{d_x}$ denote the input domain, and let $\mu$ be an unknown probability measure on $\mathcal{X}$. The spaces $\mathcal{U}$ and $\mathcal{V}$ consist of functions mapping $\mathcal{X} \to \mathcal{Y} \subset \mathbb{R}^{d_y}$ and $\mathcal{X} \to \tilde{\mathcal{Y}} \subset \mathbb{R}$, respectively. We observe $\nU$ i.i.d.\ first-stage samples
\[
\bigl\{(u_i,v_i)\bigr\}_{i=1}^{\nU} \subset \mathcal{U} \times \mathcal{V},
\]
each evaluated at $\nX$ i.i.d.\ second-stage samples $(x_1,\dots,x_{\nX}) \in \mathcal{X}^{\nX}$. These discretized evaluations are then used to train a shallow neural operator via gradient descent.

\paragraph{Gradient Descent and Initialization.}
Recall the class of shallow NOs introduced in Section~\ref{NOCLASS}. Following \citet{nguyen2024optimalconvergenceratesneural}, the parameters are trained by gradient descent on the empirical loss computed from the first-stage samples evaluated at the second-stage points:
\begin{align}
\theta_{t+1}^j 
&= \theta_t^j - \alpha \,\partial_{\theta^j} \widehat{\mathcal{E}}\bigl(G_{\theta_t}\bigr) \nonumber\\
&= \theta_t^j - \frac{\alpha}{\nU} \sum_{i=1}^{\nU} 
\Bigl\langle G_{\theta_t}(u_i) - v_i, \, \partial_{\theta^j} G_{\theta_t}(u_i)\Bigr\rangle_{\nX},
\label{GDalgor}
\end{align}
where $\alpha>0$ is the step size and
\[
\langle f,g\rangle_{\nX} := \frac{1}{\nX}\sum_{k=1}^{\nX} f(x_k)g(x_k)
\]
denotes the empirical inner product over the second-stage samples.

\cite{nguyen2024optimalconvergenceratesneural} employ a symmetric initialization scheme for the network parameters $\theta_{0}$ to ensure that $G_{\theta_{0}} \equiv 0$. Importantly, this symmetric trick does not affect the limiting neural tangent kernel (NTK); see \citet{zhang2020type} for details. 

Specifically, the weights in the output layer are initialized symmetrically as
\[
a_{m}^{(0)} = \tau \quad \text{for } m = 1, \dots, M/2,
\qquad 
a_{m}^{(0)} = -\tau \quad \text{for } m = M/2+1, \dots, M,
\]
where $\tau > 0$ is a fixed constant. The input layer parameters are initialized in a coupled manner,
\[
b_{m}^{(0)} = b_{m+M/2}^{(0)} 
\quad \text{for } m \in \{1, \ldots, M/2\},
\]
where the first half of the parameters $\{b_{m}^{(0)}\}_{m=1}^{M/2}$ are drawn independently from the initialization distribution $\pi_{0}$.

Now we are ready to state the original theorem from \citet{nguyen2024optimalconvergenceratesneural}, which provides a generalization bound for Neural Operators trained via the empirical GD algorithm.

\begin{theorem}[\citet{nguyen2024optimalconvergenceratesneural}, Theorem~3.5]
Suppose Assumptions~\ref{ass:input}, \ref{ass:source}, and \ref{ass:dim} hold.  
Let $G_{\theta_{T_n}}$ denote the Neural Operator as defined in Section~\ref{sec:motivation}, where $T_n$ is the number of GD iterations in~\eqref{GDalgor}.  
Let $M_{\nU}$ denote the network width.  
Assume that $\alpha \in (0,\kappa^{-2})$, $\nU \ge n_0 := e^{\frac{2r+b}{2r+b-1}}$, and
\[
T_{\nU} = C\nU^{\frac{1}{2r+b}}, \qquad 
M_{\nU} \ge \tilde{C} B_{T_{\nU}}^6 \log^2(\nU) T_{\nU}^{2r \vee 1}, \qquad
\nX \ge \tilde{C} B_{T_{\nU}}^2 T_{\nU}^{2r} \log^2 T_{\nU},
\]
with
\begin{equation}\label{weights}
   \|\theta_t - \theta_0\|_\Theta \le B_{T_{\nU}} \quad \text{for all } t \in [T_{\nU}].
\end{equation}
Then, with probability at least $1-\delta$,
\begin{align}
\| G_{\theta_{T_{\nU}}} - G_\rho \|_{L^2(\rho_\mathcal{U})} 
\;\le\; \bar{C} \,\nU^{-\frac{r}{2r+b}} \log^3 \frac{2}{\delta},
\label{eq:final-bound}
\end{align}
where $C,\tilde{C},\bar{C}>0$ are independent of $\nU,\nX,M_{\nU},T_{\nU},B_{T_{\nU}}$.
\end{theorem}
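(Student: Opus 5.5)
The plan is to follow the decomposition \eqref{NOdecomp}, with one extra term for the second-stage discretization. Let $F^M_{T}$ be the gradient-descent iterate in $\mathcal{H}_{M}$ for the random-feature kernel $K_M$ from Section~\ref{sec:motivation}, using the same step size $\alpha$ and the same first-stage data. Let $\tilde F^M_T$ be the same iterate computed with the exact $L^2(\rho_x)$ inner product, and $\hat F^M_T$ the one computed with the empirical inner product $\langle\cdot,\cdot\rangle_{\nX}$. The triangle inequality gives
\begin{align*}
\|G_{\theta_T}-G_\rho\|_{L^2(\rho_\mathcal{U})}\le \|G_{\theta_T}-\mathcal{S}_M\hat F^M_T\|_{L^2(\rho_\mathcal{U})}+\|\mathcal{S}_M(\hat F^M_T-\tilde F^M_T)\|_{L^2(\rho_\mathcal{U})}+\|\mathcal{S}_M\tilde F^M_T-G_\rho\|_{L^2(\rho_\mathcal{U})}.
\end{align*}
The three terms are, in order, a linearization error, a discretization error, and a random-feature generalization error. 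Each will be shown to be $O(\nU^{-r/(2r+b)}\log^3(2/\delta))$ with probability at least $1-\delta/3$, and a union bound then finishes the proof.

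\emph{Third term.} Gradient descent with step size $\alpha<\kappa^{-2}$ and $T$ steps is a spectral filter with $\lambda=1/(\alpha T)$ and $\phi_\lambda(t)=\alpha\sum_{k<T}(1-\alpha t)^k$. For this filter $D=1$, $E\simeq 1$ and $c_0=1$, and the qualification is infinite, with $c_q=(q/e)^q$. After rescaling so that $\alpha\kappa^2\le 1$, this term is exactly the situation of Theorem~\ref{theo1} with $\lambda_{\nU}=T_{\nU}^{-1}$. The $p=\tilde d+1$ feature maps $\psi$ and $\psi'_j$ of \eqref{eq:NTK-ex} satisfy \eqref{eq:RFbound} provided $\sigma$ and $\sigma'$ are bounded and $J(u)$ is bounded, which I take as part of the standing NTK assumptions. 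The hypothesis $M_{\nU}\gtrsim T_{\nU}^{2r\vee 1}\log \nU$ covers every case of Theorem~\ref{theo1}, since $n^{1/(2r+b)}$, $n^{(1+b(2r-1))/(2r+b)}$ and $n^{2r/(2r+b)}$ are all at most $T^{2r\vee1}$. Theorem~\ref{theo1} then gives the rate directly.

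\emph{Second term.} Both iterates solve recursions that are linear in the empirical covariance. They differ only in that $\langle\cdot,\cdot\rangle_{L^2(\rho_x)}$ is replaced by its Monte Carlo version over the $\nX$ points. I would unroll the recursions and write the difference of the iterates as a sum over $t\le T$ of propagated perturbations. Each perturbation is bounded by a Hilbert-valued Bernstein inequality, which gives a factor $\sqrt{\log(1/\delta)/\nX}$ times the size of the iterate. The iterate size grows like $T^{\max(0,1/2-r)}$, and the factor is $B_T$ in the parameter picture. Requiring $\nX\gtrsim B_T^2T^{2r}\log^2T$ makes this term at most $T^{-r}$, which is the target rate.

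\emph{First term (main obstacle).} Here the nonlinear GD trajectory $\theta_t$ has to be compared with the linearized trajectory $\theta_0+(\text{RF-GD iterate})$. A second-order Taylor expansion of $G_\theta$ around $\theta_0$ has a remainder controlled by $\|\theta_t-\theta_0\|^2/\sqrt M\le B_T^2/\sqrt M$, using the $1/\sqrt M$ scaling and a bounded $\sigma''$. There is also a drift of the tangent kernel $K_{\theta_t}-K_{\theta_0}$, of order $B_T/\sqrt M$ in operator norm. Because of the symmetric initialization, $G_{\theta_0}\equiv0$, so the two recursions start from the same point. Subtracting them gives an error recursion of the form $e_{t+1}=(I-\alpha\widehat\Sigma_M)e_t+\alpha\,\xi_t$, with $\|\xi_t\|\lesssim B_T^3/\sqrt M$ times the residual norm. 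Summing over $t\le T$ with the contraction $\|I-\alpha\widehat\Sigma_M\|\le1$ gives a total error of order $T B_T^3/\sqrt M$ up to logarithms. This is at most $T^{-r}$ once $M\gtrsim B_T^6T^{2r+2}$. That exponent is a power of $T$ too large compared with the hypothesis, so the step I expect to be hardest is sharpening the summation. I would first try to route the perturbations through the filter $\phi_\lambda(\widehat\Sigma_M)$, whose norm is bounded by $D$ and which has the $\lambda^{-1/2}$ smoothing used for the sample error, rather than through the crude contraction. Combined with the self-bounding of the residuals, this should reduce the factor $T^2$ to $T^{2r\vee1}$, consistent with the stated condition $M\ge\tilde C B_T^6\log^2(\nU)T^{2r\vee1}$.
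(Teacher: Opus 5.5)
Your architecture matches the paper's: the decomposition \eqref{NOdecomp}, gradient descent viewed as a spectral filter with infinite qualification, and Theorem~\ref{theo1} for the random-feature term. You split the second-stage discretization off as its own term, whereas the paper counts it inside the first term; that difference is harmless. Your check that $T^{2r\vee1}$ dominates all three cases of the width condition in Theorem~\ref{theo1} is correct.

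The gap is the linearization term. Your own computation exposes it but does not close it. Summing the error recursion with $\|I-\alpha\widehat\Sigma_M\|\le 1$ gives $T B_T^3/\sqrt M$, hence $M\gtrsim B_T^6T^{2r+2}$. The proposed repair is not an argument. Routing through the filter does not help: the accumulated propagator $\sum_{k<T}\alpha(I-\alpha\widehat\Sigma_M)^k$ is $\phi_\lambda(\widehat\Sigma_M)$, and for a generic perturbation this only yields $\|\Sigma_{M,\lambda}^{1/2}\phi_\lambda(\widehat\Sigma_M)\|\lesssim\lambda^{-1/2}\simeq\sqrt T$. That still forces $M\gtrsim B_T^6T^{2r+1}$.

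What the stated condition actually demands is a linearization error with \emph{no} power of $T$ when $r\ge\tfrac12$. The exponent $2r\vee 1$ is obtained as follows: $2r$ comes from matching a $T$-free bound of the form $B_T^3\log n/\sqrt M$ to $T^{-r}$, and the $1$ comes from the $\lambda^{-1}\log n$ requirement of Theorem~\ref{theo1} when $r<\tfrac12$. It does not come out of the linearization analysis, so aiming to trade $T^2$ for $T^{2r\vee1}$ is the wrong target. The paper does not re-derive this $T$-uniform estimate. It imports from \citet{nguyen2024optimalconvergenceratesneural} that the first term of \eqref{NOdecomp} is $O(\log n/M_n)$, free of $T$. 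That estimate is exactly the ingredient your proposal lacks.

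One mechanism that does prevent accumulation works for perturbations entering through $\widehat{\mathcal S}_M^*$. The identity $\widehat{\mathcal S}_M(I-\alpha\widehat\Sigma_M)^k\widehat{\mathcal S}_M^*=(I-\alpha\widehat{\mathcal S}_M\widehat{\mathcal S}_M^*)^k\widehat{\mathcal S}_M\widehat{\mathcal S}_M^*$ makes the $\alpha$-weighted sum over $k<T$ telescope to $I-(I-\alpha\widehat{\mathcal S}_M\widehat{\mathcal S}_M^*)^T$, which has norm at most one. So a Taylor remainder evaluated at the data costs only $O(B_T^2/\sqrt M)$ in the empirical norm, independently of $T$. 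The tangent-kernel drift term is not of this form and needs its own $T$-uniform argument, which you do not supply.

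The same objection applies to your discretization term. You never account for propagating $T$ perturbations of size $\nX^{-1/2}$. The stated condition $\nX\gtrsim B_T^2T^{2r}\log^2 T$ corresponds to an error of order $B_T\log T/\sqrt{\nX}$ uniformly over all iterates, with the $\log T$ coming from a union bound over $t\le T$. Again there is no extra power of $T$.
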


\paragraph{Discussion.}
\citet[Theorem~3.7]{nguyen2024optimalconvergenceratesneural} show that \eqref{weights} holds with high probability for $B_T = O(\log T)$.  
Consequently, a two-layer Neural Operator trained by gradient descent achieves the minimax-optimal learning rate $\nU^{-\frac{r}{2r+b}}$.  
Notably, the required network width $M$ matches the number of random features needed for kernel gradient descent and is independent of $\nX$.  
The lower bound on $\nX$ arises from controlling the first term in the error decomposition~\eqref{NOdecomp}, which accounts for the discretization of function-valued samples.  
To ensure that the empirical gradient descent in~\eqref{GDalgor} closely tracks the population gradient flow, the empirical inner product over second-stage samples must uniformly approximate the $L^2(\mathcal{X},\mu)$ inner product. 
Hoeffding-type concentration inequalities yield an $O(\nX^{-1/2})$ discretization error, which must match the minimax-optimal learning rate.  
This requirement leads to the stated lower bound on $\nX$.

%% file: AppB.tex
\section{Proofs}

In this section, we provide the proofs of our main results.

\paragraph{Notation.} 
Throughout the proofs, we use the following shorthand notation. For any bounded linear operator mapping between two Hilbert spaces $A$ and $\lambda > 0$, we write $A_\lambda := A + \lambda I$, where $I$ denotes the identity operator. For $G : \mathcal{U} \to \mathcal{V}$ and $(u_1,...,u_n) \in \cU^n$, we define the vector $\bar{G} := \bigl(G(u_1), \dots, G(u_n)\bigr) \in \mathcal{V}^n$. 
Furthermore, let $C_\bullet > 0$ denote a generic constant that may change from line to line but depends only on the quantities $\kappa, r, b, c_q, c_b, E, D, Q, Z$, and not on $\delta, p, \lambda,$ or $n$.

\medskip
\noindent
We recall the following operator definitions. Let $\mathcal{S}_M : \cH_M \hookrightarrow L^2(\mathcal{U}, \rho_\mathcal{U})$ denote the inclusion operator of $\cH_M$ into $L^2(\mathcal{U}, \rho_\mathcal{U})$ for $M \in \mathbb{N}$. Its adjoint $\cS^{*}_M: L^{2}(\mathcal{U}, \rho_\mathcal{U}) \to \mathcal{H}_{M}$ is given by
\[
\cS^{*}_M G = \int_{\mathcal{U}}  K_{M,u} \, G(u) \, \rho_\mathcal{U}(\mathrm{d}u).
\]

\noindent
The covariance operator $\Sigma_M: \mathcal{H}_{M} \to \mathcal{H}_{M}$ and the kernel integral operator $\mathcal{L}_M: L^2(\mathcal{U}, \rho_\mathcal{U}) \to L^2(\mathcal{U}, \rho_\mathcal{U})$ are defined as
\begin{align*}
   \Sigma_M G &:= \int_{\mathcal{U}} K_{M,u} K_{M,u}^* G \, \rho_\mathcal{U}(\mathrm{d}u), \\
   \mathcal{L}_M G &:= \int_{\mathcal{U}} K_{M,u} G(u) \, \rho_\mathcal{U}(\mathrm{d}u).
\end{align*}

\noindent
The empirical counterparts of these operators, obtained by replacing $\rho_\mathcal{U}$ with the empirical measure, are given by
\begin{align*}
\widehat{\cS}_{M}: \mathcal{H}_{M} \to \mathcal{V}^{n}, 
&\qquad (\widehat{\cS}_{M} G)_{j} = K_{M,u_{j}}^* G, \\
\widehat{\cS}_{M}^{*}: \mathcal{V}^{n} \to \mathcal{H}_{M}, 
&\qquad \widehat{\cS}_{M}^{*} \mathbf{v} = \frac{1}{n} \sum_{j=1}^{n} K_{M,u_{j}} v_{j}, \\
\widehat{\Sigma}_{M}: \mathcal{H}_{M} \to \mathcal{H}_{M}, 
&\qquad \widehat{\Sigma}_{M} = \frac{1}{n} \sum_{j=1}^{n} K_{M,u_{j}} K_{M,u_{j}}^*.\\
\end{align*}


\subsection{Proof Organization and Error Decomposition}

In the classical kernel setting without random features, the standard analysis introduces the idealized population estimator
\[
F_\lambda^* \coloneqq \mathcal{S}^*  \, \phi_\lambda(\mathcal{L}) \, G_{\rho},
\]
which enables a decomposition of the error into bias and variance components:
\begin{align*}
G_\rho - \mathcal{S} \widehat{F}_\lambda 
&= \bigl(G_\rho - \mathcal{S} F_\lambda^*\bigr) 
   + \bigl(\mathcal{S} F_\lambda^* - \mathcal{S} \widehat{F}_\lambda\bigr) \\
&= r_\lambda(\mathcal{L})\,\mathcal{L}^r H 
   + \bigl(\mathcal{S} F_\lambda^* - \mathcal{S} \widehat{F}_\lambda\bigr),
\end{align*}
where the bias is controlled via the residual polynomial \( r_\lambda \), while the variance is handled through Hoeffding-type concentration inequalities \citep{Muecke2017op.rates}.

\medskip
\noindent
In the random feature setting, our analysis adapts this approach by introducing the idealized estimator
\[
F_\lambda^* \coloneqq \mathcal{S}^*_M \, \phi_\lambda(\mathcal{L}_M) \, G_{\rho},
\]
which naturally leads to an additional \emph{approximation error} due to the finite-dimensional random feature space. Specifically, we decompose the excess risk as
\begin{align}
\|G_\rho - \mathcal{S}_M F_\lambda^M\|_{L^2(\rho_\mathcal{U})}
&\leq \|G_\rho - \mathcal{S}_M F_\lambda^*\|_{L^2(\rho_\mathcal{U})} 
  + \|\mathcal{S}_M F_\lambda^* - \mathcal{S}_M F_\lambda^M\|_{L^2(\rho_\mathcal{U})}\nonumber\\[4pt]
&=: \text{Approximation Error} + \text{Estimation Error}. 
\label{excessrisk2}
\end{align}

\medskip
\noindent
A key technical challenge lies in controlling the approximation error, which requires comparing the population operator \( \mathcal{L}^r \) with its random feature counterpart \( \mathcal{L}_M^r \). While prior work \citep{features} focuses on the case \( r \in [0.5, 1] \), we develop in Section~\ref{techineq} novel operator inequalities that allow us to treat arbitrary \( r > 0 \).
For the estimation term, classical analyses rely on the specific structure of kernel ridge regression. In contrast, our approach uses a refined decomposition that exploits the polynomial structure of the residual \( r_\lambda \), enabling us to obtain variance bounds \emph{uniformly over all regularization filters}.

\medskip
\noindent
We bound the approximation and estimation errors separately in Sections~\ref{I} and~\ref{II}, respectively, and then combine these bounds in Section~\ref{III} to prove our main result, Theorem~\ref{theo1}. The required operator inequalities are deferred to Section~\ref{techineq}, while the necessary concentration inequalities are collected in Section~\ref{concineq}.


\subsection{Bounding the Approximation Error}
\label{I}

\begin{proposition}
\label{mainprop2}
Suppose that Assumptions~\ref{ass:input}, \ref{ass:kernel}, \ref{ass:source}, ~\ref{ass:dim} and $\nu\geq r\vee1$ hold.  
For any $\lambda \in (0,1]$, assume that
\begin{align*}
M \;\ge\; p \, C_\bullet \, \log^2\!\bigl(\delta^{-1}\bigr)\, \log\!\bigl(\lambda^{-1}\bigr) \cdot
\begin{cases}
\lambda^{-1}, & r \in \bigl(0,\tfrac{1}{2}\bigr),\\[4pt]
\lambda^{\,b(1-2r)-1}, & r \in \bigl[\tfrac{1}{2},1\bigr],\\[4pt]
\lambda^{-2r}, & r \in (1,\infty),
\end{cases}
\end{align*}
then the approximation term in~\eqref{excessrisk2} satisfies, with probability at least $1-\delta$,
\begin{align*}
\|G_\rho - \mathcal{S}_M F_\lambda^*\|_{L^2(\rho_\mathcal{U})}
\;\le\; C_\bullet \, \lambda^{r}.
\end{align*}
\end{proposition}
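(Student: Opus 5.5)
Since $\mathcal{S}_M\mathcal{S}_M^*=\mathcal{L}_M$ and $F_\lambda^*=\mathcal{S}_M^*\phi_\lambda(\mathcal{L}_M)G_\rho$, we have $\mathcal{S}_MF_\lambda^*=\mathcal{L}_M\phi_\lambda(\mathcal{L}_M)G_\rho$, so the approximation error equals $r_\lambda(\mathcal{L}_M)G_\rho=r_\lambda(\mathcal{L}_M)\mathcal{L}^rH$. The plan is to bound $\|r_\lambda(\mathcal{L}_M)\mathcal{L}^r\|$ by transferring the smoothness of $\mathcal{L}^r$ into powers of $\mathcal{L}_M$ (or $\mathcal{L}_{M,\lambda}$), where the qualification \eqref{c_r} applies. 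The key random quantity is the relative error between $\mathcal{L}$ and $\mathcal{L}_M=\frac1M\sum_m\sum_i \varphi_i(\cdot,\omega_m)\otimes\varphi_i(\cdot,\omega_m)$ (viewed as $L^2$-operators). By a Bernstein inequality for sums of i.i.d. self-adjoint operators (Section~\ref{concineq}), using \eqref{eq:RFbound} and $\mathcal{N}(\lambda)$, with probability $1-\delta$ we get $\|\mathcal{L}_\lambda^{-1/2}(\mathcal{L}-\mathcal{L}_M)\mathcal{L}_\lambda^{-1/2}\|\le \tfrac12$ once $M\gtrsim p\,\lambda^{-1}\log(\delta^{-1})\log(\lambda^{-1})$, hence $\|\mathcal{L}_\lambda^{1/2}\mathcal{L}_{M,\lambda}^{-1/2}\|\le\sqrt2$. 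The factor $p$ comes from the sup bound $\sum_i\|\varphi_i\|^2\le\kappa^2$ inflating the operator-norm range of each summand.

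\emph{Case $r\le 1/2$.} Write $r_\lambda(\mathcal{L}_M)\mathcal{L}^r=r_\lambda(\mathcal{L}_M)\mathcal{L}_{M,\lambda}^{r}\cdot\mathcal{L}_{M,\lambda}^{-r}\mathcal{L}_\lambda^{r}\cdot\mathcal{L}_\lambda^{-r}\mathcal{L}^r$. The first factor is $\le C\lambda^r$ by qualification ($\nu\ge r$) together with $|r_\lambda|\le c_0$. The second factor is bounded by Cordes' inequality, $\|A^sB^s\|\le\|AB\|^s$ for $s\le1$, applied with $s=2r\le1$ to the square-root bound. The last factor is $\le1$. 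This yields $\lambda^r$ under $M\gtrsim p\lambda^{-1}\log$.

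\emph{Case $r\in[1/2,1]$.} Factor $\mathcal{L}^r=\mathcal{L}^{1/2}\mathcal{L}^{r-1/2}$ and split $\mathcal{L}^{r-1/2}=\mathcal{L}_M^{r-1/2}+(\mathcal{L}^{r-1/2}-\mathcal{L}_M^{r-1/2})$. The main part is handled as above via qualification. For the difference, use operator monotonicity/Hölder continuity of $t\mapsto t^{r-1/2}$: $\|\mathcal{L}^{r-1/2}-\mathcal{L}_M^{r-1/2}\|\le\|\mathcal{L}-\mathcal{L}_M\|^{r-1/2}$. Its contribution is $\lambda^{1/2}\|\mathcal{L}-\mathcal{L}_M\|^{r-1/2}$. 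A refined Bernstein bound gives $\|\mathcal{L}-\mathcal{L}_M\|\lesssim \sqrt{p\,\mathcal{N}(\lambda)\lambda/M}$ up to logs, or via the relative version. We then need $(p\lambda^{1-b}/M)^{(r-1/2)}\lesssim\lambda^{2(r-1/2)}$, i.e. $M\gtrsim p\lambda^{b(1-2r)-1}$, matching the stated exponent once the relative-error requirement $\lambda^{-1}$ is also checked. I expect the exact bookkeeping of which norm (absolute versus $\mathcal{L}_\lambda^{-1/2}$-weighted) to use here to be the delicate part.

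\emph{Case $r>1$.} Powers above one are not operator monotone, so we use the Lipschitz bound for $t\mapsto t^{s}$, $s\ge1$, on $[0,\kappa^2]$: $\|\mathcal{L}^{s}-\mathcal{L}_M^{s}\|\le s\kappa^{2(s-1)}\|\mathcal{L}-\mathcal{L}_M\|$. Writing $\mathcal{L}^r=\mathcal{L}_M^r+(\mathcal{L}^r-\mathcal{L}_M^r)$, the first term gives $c_r\lambda^r$ by qualification ($\nu\ge r$). The second gives $c_0\|\mathcal{L}-\mathcal{L}_M\|\lesssim\sqrt{p\log(\delta^{-1})/M}$, which is $\le\lambda^r$ iff $M\gtrsim p\lambda^{-2r}$. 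This case is routine.

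The main obstacle is the middle regime, namely developing the ``novel operator inequalities'' of Section~\ref{techineq} that interpolate between the Cordes-type argument and the Lipschitz argument while exploiting the capacity $b$. If the direct split fails, my first attempt would be to write $\mathcal{L}^{r}=\mathcal{L}^{1/2}\mathcal{L}_\lambda^{r-1/2}(\ldots)$ and bound $\|\mathcal{L}_\lambda^{-1/2}(\mathcal{L}-\mathcal{L}_M)\|$ using the effective dimension.
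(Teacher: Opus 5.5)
Your reduction to $R\,\|r_\lambda(\mathcal{L}_M)\mathcal{L}^r\|$ is correct. Your case $r\le\tfrac12$ (qualification, Cordes with $s=2r$, and the relative bound from $E_2$) is also correct, as is your case $r>1$ (an operator Lipschitz estimate for $t\mapsto t^r$ plus the $M^{-1/2}$ bound $E_6$). Both essentially reproduce Proposition~\ref{OPbound7}. One harmless imprecision: for non-integer $r>1$ the Lipschitz constant is some $C_{\kappa,r}$ as in Proposition~\ref{ineq1}(ii), not necessarily $r\kappa^{2(r-1)}$.

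The genuine gap is the regime $r\in[\tfrac12,1]$, which is exactly where $b$ has to enter. There are two problems.
\begin{enumerate}
\item \emph{Incompatible orderings.} In $r_\lambda(\mathcal{L}_M)\mathcal{L}^{1/2}\mathcal{L}_M^{r-1/2}$ the factor $\mathcal{L}^{1/2}$ separates $r_\lambda(\mathcal{L}_M)$ from $\mathcal{L}_M^{r-1/2}$. Qualification then yields only $\|r_\lambda(\mathcal{L}_M)\mathcal{L}_{M,\lambda}^{1/2}\|\,\|\mathcal{L}_{M,\lambda}^{-1/2}\mathcal{L}^{1/2}\|\,\|\mathcal{L}_M^{r-1/2}\|\lesssim\lambda^{1/2}$, not $\lambda^r$. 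If you instead place the split factor next to $r_\lambda(\mathcal{L}_M)$, the main term is fixed but the difference term loses its prefactor $\lambda^{1/2}$. One split cannot give you both.
\item \emph{Unweighted deviation.} The difference term goes through $\|\mathcal{L}-\mathcal{L}_M\|$, which cannot see the capacity. It does not depend on $\lambda$, so a bound of the form $\sqrt{p\,\mathcal{N}(\lambda)\lambda/M}$ cannot hold, since that tends to $0$ as $\lambda\to0$. Bernstein gives only order $M^{-1/2}$. Requiring $\lambda^{1/2}\|\mathcal{L}-\mathcal{L}_M\|^{r-1/2}\le\lambda^r$ then forces $M\gtrsim\lambda^{-2}$, which is worse than $\lambda^{b(1-2r)-1}$ unless $b=r=1$. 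Even granting your bound, $(p\lambda^{1-b}/M)^{r-1/2}\lesssim\lambda^{2r-1}$ is equivalent to $M\gtrsim p\lambda^{-1-b}$, not to $M\gtrsim p\lambda^{b(1-2r)-1}$.
\end{enumerate}

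The missing idea is to pass to the resolvent of $\mathcal{L}_M$ and interpolate between two weighted deviations. Bound
\[
\|r_\lambda(\mathcal{L}_M)\mathcal{L}^r\|\le\|r_\lambda(\mathcal{L}_M)\mathcal{L}_{M,\lambda}\|\,\|\mathcal{L}_{M,\lambda}^{-1}\mathcal{L}_\lambda^{r}\|.
\]
The first factor is $\lesssim\lambda$ because $\nu\ge1$; this is why the hypothesis reads $\nu\ge r\vee1$. So it suffices to show $\|\mathcal{L}_{M,\lambda}^{-1}\mathcal{L}_\lambda^r\|\lesssim\lambda^{r-1}$. The resolvent identity gives $\mathcal{L}_{M,\lambda}^{-1}\mathcal{L}_\lambda^r=\mathcal{L}_\lambda^{r-1}+\mathcal{L}_{M,\lambda}^{-1}(\mathcal{L}-\mathcal{L}_M)\mathcal{L}_\lambda^{r-1}$. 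The second term is at most $\lambda^{-1/2}\|\mathcal{L}_{M,\lambda}^{-1/2}\mathcal{L}_\lambda^{1/2}\|\,\|\mathcal{L}_\lambda^{-1/2}(\mathcal{L}-\mathcal{L}_M)\mathcal{L}_\lambda^{r-1}\|$. Proposition~\ref{ineq3} with $\sigma=2-2r$ then gives
\[
\|\mathcal{L}_\lambda^{-1/2}(\mathcal{L}-\mathcal{L}_M)\mathcal{L}_\lambda^{r-1}\|\le\|\mathcal{L}_\lambda^{-1/2}(\mathcal{L}-\mathcal{L}_M)\|^{2r-1}\,\|\mathcal{L}_\lambda^{-1/2}(\mathcal{L}-\mathcal{L}_M)\mathcal{L}_\lambda^{-1/2}\|^{2-2r}.
\]
The one-sided deviation is $\lesssim\sqrt{\mathcal{N}(\lambda)/M}$ (event $E_5$ of Proposition~\ref{OPbound2}); this is where $b$ enters. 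The two-sided deviation is $\lesssim\sqrt{p/(M\lambda)}$ (event $E_2$). The product is $\lesssim\lambda^{r-1}$ precisely when $M\gtrsim\lambda^{-1-b(2r-1)}$, up to logarithms and the factor $p$.

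Your closing remark about $\|\mathcal{L}_\lambda^{-1/2}(\mathcal{L}-\mathcal{L}_M)\|$ and the effective dimension points in this direction. Without the interpolation against the two-sided relative bound, however, the middle case remains unproved.
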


\begin{proof}
From Assumption~\ref{ass:source}, we have $G_\rho = \mathcal{L}^r H$ with $\|H\|_{L^2(\rho_\mathcal{U})} \le R$. Hence,
\begin{align}
\|G_\rho - \mathcal{S}_M F_\lambda^*\|_{L^2(\rho_\mathcal{U})} 
&= \bigl\|\bigl(\mathcal{L}_M \phi_\lambda(\mathcal{L}_M) - I\bigr)\mathcal{L}^r H \bigr\|_{L^2(\rho_\mathcal{U})} \nonumber\\[4pt]
&\le R \bigl\| r_\lambda(\mathcal{L}_M)\, \mathcal{L}^r \bigr\|,
\label{eq:main-bias-step}
\end{align}
where $r_\lambda$ denotes the residual polynomial defined in~\eqref{residual}.

\medskip
\noindent
For the remaining term, we obtain
\begin{align*}
R \bigl\| r_\lambda(\mathcal{L}_M)\, \mathcal{L}^r \bigr\|
&\le R \bigl\| r_\lambda(\mathcal{L}_M)\, \mathcal{L}_{M,\lambda}^{(r \vee 1)} \bigr\| 
     \bigl\| \mathcal{L}_{M,\lambda}^{-(r \vee 1)} \mathcal{L}_{\lambda}^r \bigr\| \\[4pt]
&\le 3 R\, c_{r \vee 1}\, \lambda^{r},
\end{align*}
where the last inequality follows from the bounds 
$\|r_\lambda(\mathcal{L}_M)\mathcal{L}_{M,\lambda}^{(r \vee 1)}\| 
\le c_{r \vee 1} \lambda^{(r \vee 1)}$ from~\eqref{c_r}  
and $\|\mathcal{L}_{M,\lambda}^{-(r \vee 1)} \mathcal{L}_{\lambda}^r\| 
\le 3 \lambda^{-(1-r)^+}$ from Proposition~\ref{OPbound7},  
which holds with probability at least $1 - 3\delta$.
\end{proof}


\subsection{Bounding the Estimation Error}
\label{II}

We now turn to bounding the variance-type error in~\eqref{excessrisk2}. 
To this end, we decompose it into two parts: 
(i) a classical estimation error term, which can be controlled via standard concentration inequalities, and 
(ii) an additional approximation-type term, whose contribution is bounded using properties of the residual polynomial.

\begin{proposition}
\label{mainprop}
Suppose that Assumptions~\ref{ass:input}, \ref{ass:kernel}, \ref{ass:source}, and~\ref{ass:dim} hold, 
and let $\nu \ge r \vee 1$. Then, for any $s \in [0, \tfrac{1}{2}]$ and $\lambda \in (0,1]$, the following holds with probability at least $1-\delta$:
\begin{align*}
\left\|\Sigma_M^{\frac{1}{2}-s} (F_\lambda^M - F_\lambda^*)\right\|_{\mathcal{H}_M}
\;\le\; C_\bullet \log\!\frac{1}{\delta}\, \lambda^{r-s},
\end{align*}
provided that
\begin{align*}
M \;\ge\; p \, C_\bullet \, \log^2\!\bigl(\delta^{-1}\bigr)\, \log\!\bigl(\lambda^{-1}\bigr) \cdot
\begin{cases}
\lambda^{-1}, & r \in \bigl(0,\tfrac{1}{2}\bigr),\\[4pt]
\lambda^{\,b(1-2r)-1}, & r \in \bigl[\tfrac{1}{2},1\bigr],\\[4pt]
\lambda^{-2r}, & r \in (1,\infty),
\end{cases}
\end{align*}
and
\begin{align*}
n \;\ge\; C_\bullet\,\log^{3(2r+b)}\bigl(\delta^{-1}\bigr)\,\lambda^{-(2r+b)}\,, \quad n \geq n_0 := \exp\!\left(\tfrac{2r+b}{2r+b-1}\right).
\end{align*}
\end{proposition}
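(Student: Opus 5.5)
We bound the variance term by comparing the empirical filter with its population counterpart. Write $F_\lambda^M - F_\lambda^* = \phi_\lambda(\widehat\Sigma_M)\widehat{\cS}_M^*\mathbf v - \cS_M^*\phi_\lambda(\cL_M)G_\rho$. Since $\cS_M^*\phi_\lambda(\cL_M) = \phi_\lambda(\Sigma_M)\cS_M^*$, we introduce the intermediate element $\phi_\lambda(\widehat\Sigma_M)\widehat\cS_M^*\bar G_\rho$ and split into three pieces:
\begin{align*}
F_\lambda^M - F_\lambda^*
&= \phi_\lambda(\widehat\Sigma_M)\bigl(\widehat\cS_M^*\mathbf v - \widehat\cS_M^*\bar G_\rho\bigr) \\
&\quad + \phi_\lambda(\widehat\Sigma_M)\bigl(\widehat\cS_M^*\bar G_\rho - \widehat\Sigma_M \cS_M^*... \bigr),
\end{align*}
More concretely, we use the standard device of writing $\cS_M^*G_\rho$-type quantities through an element $F_\lambda^\dagger$ in $\cH_M$. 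Then $F_\lambda^M - F_\lambda^* = \phi_\lambda(\widehat\Sigma_M)\bigl[(\widehat\cS_M^*\mathbf v - \widehat\Sigma_M F_\lambda^*) - (\cS_M^*G_\rho - \Sigma_M F_\lambda^*)\bigr] - r_\lambda(\widehat\Sigma_M)F_\lambda^* + \phi_\lambda(\widehat\Sigma_M)(\cS_M^*G_\rho-\Sigma_M F_\lambda^*)$.

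The first bracket is a centered empirical average of i.i.d.\ $\cH_M$-valued variables $K_{M,u_j}(v_j - F_\lambda^*(u_j))$. The other two terms are a residual term and a population bias term, the latter equal to $\cS_M^* r_\lambda(\cL_M)G_\rho$.

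For each piece we sandwich with $\Sigma_{M,\lambda}^{1/2}$ and $\widehat\Sigma_{M,\lambda}^{1/2}$. We use
$\|\Sigma_M^{1/2-s}\Sigma_{M,\lambda}^{-1/2}\|\le\lambda^{-s}$ together with the operator comparison $\|\Sigma_{M,\lambda}^{1/2}\widehat\Sigma_{M,\lambda}^{-1/2}\|\le 2$. The comparison holds with high probability once $n\gtrsim \lambda^{-1}\mathcal N_M(\lambda)\log(1/\delta)$, which is implied by the stated lower bound on $n$ provided we first transfer the effective dimension, $\mathcal N_M(\lambda)\lesssim \mathcal N(\lambda)$, from the concentration results of Section~\ref{concineq} (valid under the $M$ condition).

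With these bounds in place, the three pieces are handled as follows:
\begin{itemize}
\item[$\circ$] For the stochastic term, the filter properties give $\|\widehat\Sigma_{M,\lambda}^{1/2}\phi_\lambda(\widehat\Sigma_M)\widehat\Sigma_{M,\lambda}^{1/2}\|\le D+E$. It therefore remains to bound $\|\Sigma_{M,\lambda}^{-1/2}[\,\cdot\,]\|$ via a Bernstein inequality, using Assumption~\ref{ass:input} and the sup-bound $\|F_\lambda^*\|_\infty$. This gives $\log(1/\delta)\bigl(\sqrt{\mathcal N(\lambda)/n}+\lambda^{-1/2}/n\cdot(\ldots)\bigr)$, which is $\lesssim\lambda^{r}$ under the $n$ condition.
\item[$\circ$] For the residual term $r_\lambda(\widehat\Sigma_M)F_\lambda^*$, we write $F_\lambda^*=\cS_M^*\phi_\lambda(\cL_M)\cL^rH$ and move powers $\Sigma_{M,\lambda}^{\,r-1/2}$ (or $\cL_{M,\lambda}^{r\vee 1}$-type factors) across using Proposition~\ref{OPbound7} and the qualification $\nu\ge r\vee1$. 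The key point is that $\|r_\lambda(\widehat\Sigma_M)\widehat\Sigma_{M,\lambda}^{q}\|\le c\lambda^{q}$.
\item[$\circ$] The population bias term $\cS_M^* r_\lambda(\cL_M)G_\rho$ is handled exactly as in Proposition~\ref{mainprop2}.
\end{itemize}

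\textbf{Main obstacle.} The residual term for $r>1/2$, and especially $r>1$, is expected to be the hardest step. There one needs $\|\widehat\Sigma_{M,\lambda}^{-(r-1/2)}\Sigma_{M,\lambda}^{r-1/2}\|$ for exponents beyond $1/2$, where Cordes' inequality fails. I would handle it as follows:
\begin{itemize}
\item[$\circ$] For $r\le 1$, use the polynomial/Lipschitz structure of $r_\lambda$ and the bound $\|\widehat\Sigma_M-\Sigma_M\|\lesssim n^{-1/2}$.
\item[$\circ$] For $r>1$, split $\Sigma_M^{r-1/2}=\Sigma_M^{1/2}\Sigma_M^{r-1}$ and bound $\|\widehat\Sigma_M^{\,r-1}-\Sigma_M^{\,r-1}\|$ through operator Lipschitz/Hölder estimates, costing an additional $\lambda^{r-1}$ factor absorbed by the $n$ requirement.
\end{itemize}
The comparison between $\cL^r$ and $\cL_M^r$ needed here is where the $M$ condition from Proposition~\ref{mainprop2} enters. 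Collecting the three bounds, each of order $\log(1/\delta)\lambda^{r-s}$, and taking a union bound with rescaled $\delta$ completes the argument.
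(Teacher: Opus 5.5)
Your three-term decomposition is algebraically correct and is a legitimate alternative to the paper's two-term split. The paper keeps $\phi_\lambda(\widehat\Sigma_M)\widehat{\mathcal S}_M^*(\mathbf v-\widehat{\mathcal S}_MF_\lambda^*)$ and $-r_\lambda(\widehat\Sigma_M)F_\lambda^*$. It then controls the uncentered piece $\bar G_\rho-\widehat{\mathcal S}_MF_\lambda^*$ through $\|\Sigma_{M,\lambda}^{-1/2}\widehat{\mathcal S}_M^*\|\le\sqrt2$, Proposition~\ref{concentrationineq2} and Proposition~\ref{mainprop2}. Your population bias term is also fine, via $\|\Sigma_{M,\lambda}^{-1/2}\mathcal S_M^*\|\le 1$ and Proposition~\ref{mainprop2}.

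\textbf{Gap 1: the misspecified regime $r<\tfrac12$.} Your sample-size bookkeeping does not close there. In this regime the hypothesis $n\gtrsim\lambda^{-(2r+b)}$ is strictly weaker than $n\gtrsim\lambda^{-(1+b)}$. With $\lambda_n\asymp n^{-1/(2r+b)}$, the latter fails for large $n$, so the misspecified rates would be lost. The problem shows up in two places.
\begin{enumerate}
\item[(a)] You justify $\|\Sigma_{M,\lambda}^{1/2}\widehat\Sigma_{M,\lambda}^{-1/2}\|\le 2$ by $n\gtrsim\lambda^{-1}\mathcal N_M(\lambda)\log(1/\delta)$ and say this follows from the hypothesis. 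For $r<\tfrac12$ it does not. You need the operator-norm Bernstein bound with intrinsic dimension (event $E_1$ in Proposition~\ref{OPbound2}, used in Proposition~\ref{OPbound6}). That bound only requires $n\gtrsim\lambda^{-1}\log(\mathcal N(\lambda)/\delta)$.
\item[(b)] Your stochastic summands are $\Sigma_{M,\lambda}^{-1/2}K_{M,u_j}(v_j-F_\lambda^*(u_j))$. Their second moment contains the noise part $\lesssim Q^2\mathcal N_M(\lambda)$ and also a contribution from $G_\rho-F_\lambda^*$, which your displayed $\sqrt{\mathcal N(\lambda)/n}$ drops. If you bound that contribution with the sup-bound $\|F_\lambda^*\|_\infty\lesssim\lambda^{r-1/2}$, as you indicate, you get $\lambda^{2r-1}\mathcal N(\lambda)$. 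Then $\sqrt{\lambda^{2r-1-b}/n}\lesssim\lambda^r$ again needs $n\gtrsim\lambda^{-(1+b)}$.
\end{enumerate}
The fix for (b) is to use $\mathbb E[v-G_\rho(u)\mid u]=0$ to separate the two parts. Bound the deterministic part by $\sup_u\|\Sigma_{M,\lambda}^{-1/2}K_{M,u}\|^2\,\|G_\rho-\mathcal S_MF_\lambda^*\|_{L^2(\rho_{\mathcal U})}^2\lesssim\lambda^{-1}\lambda^{2r}$, using Proposition~\ref{mainprop2}. Reserve the sup-bound for the scale parameter $B$ in the Bernstein moment condition. Both pieces then need only $n\gtrsim\lambda^{-1}$. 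This is exactly what Proposition~\ref{concentrationineq2} achieves, whose variance is $\|G_\rho-F_\lambda^*\|_\infty^2\,\|G_\rho-\mathcal S_MF_\lambda^*\|_{L^2(\rho_{\mathcal U})}^2$.

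\textbf{Gap 2: the residual term for $r>\tfrac12$.} Your ``main obstacle'' discussion is off.
\begin{enumerate}
\item[(a)] For $r\in(\tfrac12,1]$: since Proposition~\ref{OPbound7} moves $\mathcal L_{M,\lambda}^{r\vee 1}$, the only comparison needed is at exponent $(r\vee1)-\tfrac12=\tfrac12$. This gives $\|\widehat\Sigma_{M,\lambda}^{1/2}r_\lambda(\widehat\Sigma_M)\Sigma_{M,\lambda}^{1/2}\|\lesssim\lambda$ directly from the comparison you already have. The Lipschitz/perturbation route you propose would cost negative powers of $\lambda$ and need at least $n\gtrsim\lambda^{-2}$. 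That is not implied when $2r+b<2$.
\item[(b)] For $r>1$: you need $\|\widehat\Sigma_{M,\lambda}^{-q}\Sigma_{M,\lambda}^{q}\|\lesssim 1$ with $q=r-\tfrac12$. The multiplicative split $\Sigma_{M,\lambda}^{1/2}\Sigma_{M,\lambda}^{r-1}$ does not reduce this to the comparisons at exponents $\tfrac12$ and $r-1$, because $\widehat\Sigma_{M,\lambda}$ and $\Sigma_{M,\lambda}$ do not commute.
\end{enumerate}
Proposition~\ref{OPbound8} handles the $r>1$ case directly.
\begin{enumerate}
\item[(i)] For $q<1$, Cordes' inequality still applies at exponent $1$: $\|\widehat\Sigma_{M,\lambda}^{-q}\Sigma_{M,\lambda}^{q}\|\le\|\widehat\Sigma_{M,\lambda}^{-1}\Sigma_{M,\lambda}\|^{q}$. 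This is bounded once $n\gtrsim\mathcal N(\lambda)\lambda^{-1}\log^3(1/\delta)$.
\item[(ii)] For $q\ge 1$, the operator-Lipschitz bound of Proposition~\ref{ineq1}(ii) gives $\|\widehat\Sigma_{M,\lambda}^{-q}\Sigma_{M,\lambda}^{q}\|\le 1+C\lambda^{-q}\|\widehat\Sigma_M-\Sigma_M\|$. This needs $n\gtrsim\lambda^{-(2r-1)}$.
\end{enumerate}
Both conditions are implied by the hypothesis because $r>\tfrac12$. An additive bound on $\|\Sigma_{M,\lambda}^{q}-\widehat\Sigma_{M,\lambda}^{q}\|$ would also rescue your plan for $r>1$, since then $2r+b>2$.
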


\begin{proof}
We begin with the decomposition
\begin{align}
\bigl\|\Sigma_M^{\frac{1}{2}-s} (F_\lambda^M - F_\lambda^*)\bigr\|_{\mathcal{H}_M} 
&\le \bigl\|\Sigma_M^{\frac{1}{2}-s}\bigl(\phi_\lambda(\widehat\Sigma_M)\widehat{\mathcal{S}}_M^{*}\mathbf{v} 
 - \phi_\lambda(\widehat\Sigma_M)\widehat{\Sigma}_M F_\lambda^*\bigr)\bigr\|_{\mathcal{H}_M} \nonumber\\[2pt]
&\quad + \bigl\|\Sigma_M^{\frac{1}{2}-s}\bigl(\phi_\lambda(\widehat\Sigma_M)\widehat{\Sigma}_M - I\bigr)F_\lambda^*\bigr\|_{\mathcal{H}_M} \nonumber\\[4pt]
&= \bigl\|\Sigma_M^{\frac{1}{2}-s}\phi_\lambda(\widehat\Sigma_M)\widehat{\mathcal{S}}_M^{*}\bigl(\mathbf{v} - \widehat{\mathcal{S}}_M F_\lambda^*\bigr)\bigr\|_{\mathcal{H}_M}
  + \bigl\|\Sigma_M^{\frac{1}{2}-s}r_\lambda(\widehat\Sigma_M)F_\lambda^*\bigr\|_{\mathcal{H}_M} \nonumber\\[2pt]
&=: \textbf{(I)} + \textbf{(II)}. 
\end{align}

We bound the two terms $\textbf{(I)}$ and $\textbf{(II)}$ separately. 
Specifically, by Proposition~\ref{T2I}, with probability at least $1 - \delta$,
\[
\textbf{(I)} \;\le\; C_\bullet\,\log\!\tfrac{1}{\delta}\!\,\,\lambda^{r-s},
\]
and 
\[
\textbf{(II)} \;\le\; C_\bullet\,\, \lambda^{r-s}.
\]
Combining these bounds yields the stated result.
\end{proof}

\begin{proposition}
\label{T2I} 
Suppose the assumptions of Proposition~\ref{mainprop} hold. 
Then, for any $s \in [0,\tfrac{1}{2}]$ and $\lambda \in (0,1]$, with probability at least $1-\delta$,
\begin{align}
\textbf{(I)}\quad 
&\bigl\|\Sigma_M^{\frac{1}{2}-s}\,\phi_\lambda(\widehat\Sigma_M)\,\widehat{\mathcal{S}}_{M}^{*}\bigl(\mathbf{v}-\widehat{\mathcal{S}}_M F_\lambda^*\bigr)\bigr\|_{\mathcal{H}_M}
\;\le\; C_\bullet\,\log\!\tfrac{1}{\delta}\,\lambda^{r-s},
\label{a)techniqineq}
\\[5pt]
\textbf{(II)}\quad 
&\bigl\|\Sigma_M^{\frac{1}{2}-s}\,r_\lambda(\widehat\Sigma_M)\,F_\lambda^*\bigr\|_{\mathcal{H}_M}
\;\le\; C_\bullet\,\lambda^{r-s}.
\label{b)techniqineq}
\end{align}
\end{proposition}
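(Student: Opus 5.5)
Both bounds follow the usual spectral-filtering route: insert powers of the regularized operators $\Sigma_{M,\lambda}$ and $\widehat\Sigma_{M,\lambda}$ and then control three kinds of factors. The first are operator-ratio factors such as $\|\Sigma_{M,\lambda}^{1/2}\widehat\Sigma_{M,\lambda}^{-1/2}\|$. The second are filter factors controlled by \eqref{def.phi}--\eqref{c_r}. The third are concentration terms from Section~\ref{concineq}.

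The first step is an event $\Omega_\lambda$ on which $\|\Sigma_{M,\lambda}^{1/2}\widehat\Sigma_{M,\lambda}^{-1/2}\|\le 2$ and $\|\widehat\Sigma_{M,\lambda}^{1/2}\Sigma_{M,\lambda}^{-1/2}\|\le 2$. This follows from a Bernstein-type bound for $\|\Sigma_{M,\lambda}^{-1/2}(\Sigma_M-\widehat\Sigma_M)\Sigma_{M,\lambda}^{-1/2}\|$, which involves the random-feature effective dimension $\mathcal{N}_M(\lambda)$. I will take as given that, under the stated lower bound on $M$, Section~\ref{techineq} yields $\mathcal{N}_M(\lambda)\le C_\bullet\mathcal{N}(\lambda)\le C_\bullet\lambda^{-b}$ with high probability. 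The requirement $n\ge C_\bullet\log^{3(2r+b)}(\delta^{-1})\lambda^{-(2r+b)}$ then ensures that $n\lambda\gtrsim \mathcal{N}(\lambda)\log(1/\delta)$ and that the deviation is at most $1/2$. By the Cordes inequality, the same event gives $\|\Sigma_{M,\lambda}^{a}\widehat\Sigma_{M,\lambda}^{-a}\|\le 2^{2a}$ for every $a\in[0,\tfrac12]$.

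\textbf{(I).} I would write
\[
\textbf{(I)}\le \|\Sigma_M^{\frac12-s}\Sigma_{M,\lambda}^{-(\frac12-s)}\|\,\|\Sigma_{M,\lambda}^{\frac12-s}\widehat\Sigma_{M,\lambda}^{-(\frac12-s)}\|\,\|\widehat\Sigma_{M,\lambda}^{\frac12-s}\phi_\lambda(\widehat\Sigma_M)\widehat\Sigma_{M,\lambda}^{\frac12}\|\,\|\widehat\Sigma_{M,\lambda}^{-\frac12}\Sigma_{M,\lambda}^{\frac12}\|\,\|\Sigma_{M,\lambda}^{-\frac12}\widehat{\mathcal S}_M^*(\mathbf v-\widehat{\mathcal S}_MF_\lambda^*)\|.
\]
The filter factor is at most $C_\bullet\lambda^{-s}$, by interpolating properties (i) and (ii) of the regularization function. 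For the last factor, I would centre it. Since $\mathcal S_M^*(G_\rho-\mathcal S_MF_\lambda^*)$ is its expectation, it splits into a mean-zero sum and a deterministic part. The mean-zero sum is bounded by Bernstein's inequality for Hilbert-space vectors, using the moment condition of Assumption~\ref{ass:input} and the fact that $\|F_\lambda^*\|$ controls $\|\mathcal S_MF_\lambda^*\|_\infty$. This gives
\[
C_\bullet\log(1/\delta)\Big(\frac{1}{n\sqrt\lambda}+\sqrt{\frac{\mathcal N(\lambda)}{n}}\Big)(1+\text{sup-norm term}),
\]
which is $\lesssim\lambda^{r}$ because $n\gtrsim\lambda^{-(2r+b)}$. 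The deterministic part satisfies
\[
\|\Sigma_{M,\lambda}^{-1/2}\mathcal S_M^*(G_\rho-\mathcal S_MF_\lambda^*)\|\le\|G_\rho-\mathcal S_MF_\lambda^*\|\le C_\bullet\lambda^r
\]
by Proposition~\ref{mainprop2}.

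\textbf{(II)} is the main obstacle, especially for $r<\tfrac12$, where $F_\lambda^*$ is not uniformly bounded in $\mathcal H_M$. I would write $F_\lambda^*=\phi_\lambda(\Sigma_M)\mathcal S_M^*\mathcal L^rH$ and insert $\widehat\Sigma_{M,\lambda}$:
\[
\textbf{(II)}\le 2\,\|\widehat\Sigma_{M,\lambda}^{\frac12-s}r_\lambda(\widehat\Sigma_M)\widehat\Sigma_{M,\lambda}^{a}\|\,\|\widehat\Sigma_{M,\lambda}^{-a}\Sigma_{M,\lambda}^{a}\|\,\|\Sigma_{M,\lambda}^{-a}F_\lambda^*\|.
\]
Here $a=\tfrac12\vee r$, which is at most $\nu$ because $\nu\ge r\vee1$.
\begin{itemize}
\item The first factor is at most $C_\bullet\lambda^{a+\frac12-s}$ by \eqref{c_r}.
\item For the third factor, $\mathcal L^r\preceq C_\bullet\mathcal L_{M,\lambda}^{r}\lambda^{-(1-r)^+}$ from Proposition~\ref{OPbound7} and property (i) of the filter give $\|\Sigma_{M,\lambda}^{-a}F_\lambda^*\|\le C_\bullet\lambda^{r-a-\frac12}$.
\item The middle factor, for $a>\tfrac12$, needs a comparison $\|\widehat\Sigma_{M,\lambda}^{-a}\Sigma_{M,\lambda}^{a}\|\lesssim 1$. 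When $a\ge1$ I would obtain it by the standard splitting $\Sigma_{M,\lambda}^{a}=\widehat\Sigma_{M,\lambda}^a+(\Sigma_{M,\lambda}^a-\widehat\Sigma_{M,\lambda}^a)$, bounding the difference by $\|\Sigma_M-\widehat\Sigma_M\|\lesssim n^{-1/2}\log(1/\delta)$ and absorbing it using $n\gtrsim\lambda^{-2r}$.
\end{itemize}
Multiplying the three factors gives $\lambda^{r-s}$. I expect to spend most effort reconciling the power $a$ with the factor $\lambda^{-(1-r)^+}$ from Proposition~\ref{OPbound7}, so that the product is exactly $\lambda^{r-s}$ in all three regimes of $r$.
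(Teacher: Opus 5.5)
Your overall architecture (the event of Proposition~\ref{OPbound6}, Cordes to move fractional powers, filter bounds, then centring) is sound. Your argument goes through for \textbf{(I)} when $r\ge\tfrac12$ and for \textbf{(II)} when $r\le\tfrac12$. There are, however, two genuine gaps, in complementary regimes.

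\textbf{(I) for $r<\tfrac12$.} The only available sup-norm bound is $\|G_\rho-\mathcal{S}_MF_\lambda^*\|_\infty\lesssim\lambda^{r-\frac12}$ (Proposition~\ref{ineq5}). So the variance part of your Bernstein estimate is $\sqrt{\mathcal N(\lambda)/n}\,\lambda^{r-\frac12}$. For $\mathcal N(\lambda)\asymp\lambda^{-b}$ and $n\asymp\lambda^{-(2r+b)}$ this is of order $\lambda^{2r-\frac12}\gg\lambda^r$. Your claim that it is $\lesssim\lambda^r$ ``because $n\gtrsim\lambda^{-(2r+b)}$'' therefore fails in exactly the misspecified regime the paper targets.

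The sup norm may enter only the almost-sure bound. The second moment of the bias part $\Sigma_{M,\lambda}^{-1/2}K_{M,u}\bigl(G_\rho(u)-F_\lambda^*(u)\bigr)$ must instead be bounded by $\frac{\kappa^2}{\lambda}\|G_\rho-\mathcal{S}_MF_\lambda^*\|_{L^2(\rho_{\mathcal U})}^2\lesssim\lambda^{2r-1}$, using Proposition~\ref{mainprop2}. After that, $n\gtrsim\lambda^{-1}$ suffices, and this is implied by $2r+b>1$.

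The paper obtains the same effect by a different route:
\begin{enumerate}
\item It splits off the noise $\mathbf v-\bar G_\rho$ and handles it with Proposition~\ref{concentrationineq1}.
\item On the event of Proposition~\ref{OPbound6} it uses $\|\Sigma_{M,\lambda}^{-1/2}\widehat{\mathcal S}_M^*\|\lesssim 1$. This reduces the rest to the empirical norm of $\bar G_\rho-\widehat{\mathcal S}_MF_\lambda^*$.
\item It concentrates that empirical norm around the $L^2$ norm with the scalar Bernstein bound of Proposition~\ref{concentrationineq2}. The variance proxy there is $\|G_\rho-\mathcal S_MF_\lambda^*\|_\infty^2\,\|G_\rho-\mathcal S_MF_\lambda^*\|_{L^2(\rho_{\mathcal U})}^2$.
\end{enumerate}

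\textbf{(II) for $r>\tfrac12$.} You check $a\le\nu$, but \eqref{c_r} is applied with exponent $a+\tfrac12-s$. For $s=0$ this equals $r+\tfrac12>r\vee1$, which $\nu\ge r\vee1$ does not cover. For example, Tikhonov regularization ($\nu=1$) at $r=1$ gives $\sup_t(t+\lambda)^{3/2}\frac{\lambda}{t+\lambda}\asymp\lambda$. Your chain then yields only $\lambda\cdot\lambda^{-1/2}=\lambda^{1/2}$ instead of $\lambda$.

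The missing idea is to leave half a power of $\mathcal L_{M,\lambda}$ with the filter. Take $a=(r\vee1)-\tfrac12$ and bound, via Proposition~\ref{OPbound7},
\[
\|\Sigma_{M,\lambda}^{-a}F_\lambda^*\|_{\mathcal H_M}\le R\,\|\mathcal L_{M,\lambda}\phi_\lambda(\mathcal L_M)\|\,\|\mathcal L_{M,\lambda}^{-(r\vee1)}\mathcal L^r\|\le 3R(D+E)\lambda^{-(1-r)^+}.
\]
With this choice:
\begin{itemize}
\item The residual factor needs qualification only $(r\vee1)-s\le\nu$.
\item The product is $\lambda^{(r\vee1)-s-(1-r)^+}=\lambda^{r-s}$.
\item The middle comparison is Proposition~\ref{OPbound6} for $r\le1$, and Proposition~\ref{OPbound8} with $q=r-\tfrac12$ for $r>1$. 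The Cordes-based case $q<1$ of the latter covers the fractional range $a\in(\tfrac12,1)$, which your splitting argument (stated only for $a\ge1$) leaves open.
\end{itemize}
This is exactly the paper's treatment of the case $r>\tfrac12$.
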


\begin{proof}
\textbf{(I)} We begin with the decomposition
\begin{align}
\nonumber
\bigl\|\Sigma_M^{\frac{1}{2}-s}\,\phi_\lambda(\widehat\Sigma_M)\,\widehat{\mathcal{S}}_{M}^{*}\bigl(\mathbf{v}-\widehat{\mathcal{S}}_M F_\lambda^*\bigr)\bigr\|_{\mathcal{H}_M}
&\le 
\bigl\|\Sigma_M^{\frac{1}{2}-s}\phi_\lambda(\widehat\Sigma_M)\Sigma_{M,\lambda}^{\frac{1}{2}}\bigr\|\,
\bigl\|\Sigma_{M,\lambda}^{-\frac{1}{2}}\widehat{\mathcal{S}}_{M}^{*}\bigl(\mathbf{v}-\widehat{\mathcal{S}}_M F_\lambda^*\bigr)\bigr\|_{\mathcal{H}_M}
\\
&=: i \cdot ii .
\label{I.II}
\end{align}

\paragraph{Step (i).}
By Proposition~\ref{OPbound6}, with probability at least $1-4\delta$,
\begin{align}\label{usebound6}
\|\widehat{\Sigma}_{M,\lambda}^{-\frac{1}{2}}\Sigma_{M,\lambda}^{\frac{1}{2}}\|\le 2.
\end{align}
Hence
\begin{align*}
\bigl\|\Sigma_M^{\frac{1}{2}-s}\phi_\lambda(\widehat\Sigma_M)\Sigma_{M,\lambda}^{\frac{1}{2}}\bigr\|
&\le \lambda^{-s}\bigl\|\Sigma_{M,\lambda}^{\frac{1}{2}}\phi_\lambda(\widehat\Sigma_M)\Sigma_{M,\lambda}^{\frac{1}{2}}\bigr\|\\
&\le \lambda^{-s}\bigl\|\widehat\Sigma_{M,\lambda}\phi_\lambda(\widehat\Sigma_M)\bigr\|\,
\bigl\|\Sigma_{M,\lambda}^{\frac{1}{2}}\widehat{\Sigma}_{M,\lambda}^{-\frac{1}{2}}\bigr\|^2
\;\le\; 4D\,\lambda^{-s},
\end{align*}
where $D$ is defined in~\eqref{def.phi}.

\paragraph{Step (ii).}
We decompose
\begin{align*}
\bigl\|\Sigma_{M,\lambda}^{-\frac{1}{2}}\widehat{\mathcal{S}}_{M}^{*}\bigl(\mathbf{v}-\widehat{\mathcal{S}}_M F_\lambda^*\bigr)\bigr\|_{\mathcal{H}_M}
&\le 
\bigl\|\Sigma_{M,\lambda}^{-\frac{1}{2}}\widehat{\mathcal{S}}_{M}^{*}\bigl(\mathbf{v}-\bar{G}_\rho\bigr)\bigr\|_{\mathcal{H}_M}
+
\bigl\|\Sigma_{M,\lambda}^{-\frac{1}{2}}\widehat{\mathcal{S}}_{M}^{*}\bigl(\bar{G}_\rho-\widehat{\mathcal{S}}_M F_\lambda^*\bigr)\bigr\|_{\mathcal{H}_M}\\
&=: a + b .
\end{align*}

\textit{Term (a):}
Using Proposition~\ref{concentrationineq1} together with Proposition~\ref{prop:effecdim2}, we obtain, with probability at least $1-3\delta$,

\begin{align*}
\bigl\|\Sigma_{M,\lambda}^{-\frac{1}{2}}\widehat{\mathcal{S}}_{M}^{*}\bigl(\mathbf{v}-\bar{G}_\rho\bigr)\bigr\|_{\mathcal{H}_M}
&\le 
\left(\frac{4QZ\kappa}{\sqrt{\lambda}n}+\frac{8Q\sqrt{(1+2\log\frac{2}{\delta})\mathcal{N}_{\mathcal{L}}(\lambda)}}{\sqrt{n}}\right)
\log\tfrac{2}{\delta}\\
&\le 
C_\bullet
\left(\frac{1}{\sqrt{\lambda}n}+\sqrt{\frac{\log\tfrac{1}{\delta}}{n\lambda^b}}\right)
\log\tfrac{1}{\delta}
\;\le\;
C_\bullet\,\lambda^{r}\log\tfrac{1}{\delta},
\end{align*}
where the last inequality follows from the assumption on $n$.

\textit{Term (b):}
From Proposition~\ref{OPbound6},
\[
\bigl\|\Sigma_{M,\lambda}^{-\frac{1}{2}}\widehat{\mathcal{S}}_{M}^{*}\bigr\|^2\leq 2\|\widehat{\Sigma}_{M,\lambda}^{-\frac{1}{2}}\widehat{\mathcal{S}}_{M}^{*}\bigr\|^2
=2\bigl\|\widehat{\mathcal{S}}_M^*\widehat{\mathcal{S}}_M(\widehat{\mathcal{S}}_M^*\widehat{\mathcal{S}}_M+\lambda)^{-1}\bigr\|\le 2.
\]
Therefor we obtain by Proposition~\ref{concentrationineq2}, with probability at least $1-\delta$,
\begin{align*}
\bigl\|\Sigma_{M,\lambda}^{-\frac{1}{2}}\widehat{\mathcal{S}}_{M}^{*}&(\bar{G}_\rho-\widehat{\mathcal{S}}_M F_\lambda^*)\bigr\|_{\mathcal{H}_M}
\leq \frac{2}{\sqrt{n}}\|\bar{G}_\rho-\widehat{\mathcal{S}}_M F_\lambda^*\|_2\\
&\leq 2\sqrt{\left|\frac{1}{n}\|\bar{G}_\rho-\widehat{\mathcal{S}}_M F_\lambda^*\|_2^2-\|G_\rho-\mathcal{S}_M F_\lambda^*\|_{L^2(\rho_\mathcal{U})}^2\right|}
+\|G_\rho-\mathcal{S}_M F_\lambda^*\|_{L^2(\rho_\mathcal{U})}\\
&\leq 
C_\bullet\sqrt{\left(\frac{\lambda^{-2(\frac{1}{2}-r)^+}}{n}+\frac{\lambda^{-(\frac{1}{2}-r)^+}\|G_\rho-\mathcal{S}_M F_\lambda^*\|_{L^2(\rho_\mathcal{U})}}{\sqrt{n}}\right)\log\tfrac{1}{\delta}}
+\|G_\rho-\mathcal{S}_M F_\lambda^*\|_{L^2(\rho_\mathcal{U})}.
\end{align*}

Applying Proposition~\ref{mainprop2} yields, with probability at least $1-3\delta$,
\begin{align*}
\bigl\|\Sigma_{M,\lambda}^{-\frac{1}{2}}\widehat{\mathcal{S}}_{M}^{*}(\bar{G}_\rho-\widehat{\mathcal{S}}_M F_\lambda^*)\bigr\|_{\mathcal{H}_M}
&\le 
C_\bullet\sqrt{\left(\frac{\lambda^{-(1-2r)^+}}{n}+\frac{\lambda^{-(\frac{1}{2}-r)^+}\lambda^{r}}{\sqrt{n}}+\lambda^{2r}\right)\log\tfrac{1}{\delta}}\\
&\le 
C_\bullet\,\lambda^{r}\sqrt{\log\tfrac{1}{\delta}},
\end{align*}
where the last inequality follows from the assumption on $n$.
Therefore,
\[
ii \le C_\bullet\,\lambda^{r}\log\tfrac{1}{\delta}.
\]

Combining $(i)$ and $(ii)$ in~\eqref{I.II} proves~\eqref{a)techniqineq}.
Collecting all probabilities and applying Proposition~\ref{conditioning} gives total probability at least $1-11\delta$.
Redefining $\delta$ completes part~(I).


\medskip
\noindent
\textbf{(II)}
We next bound the term in~\eqref{b)techniqineq}. With probability at least $1-4\delta$,
\begin{align*}
\|\Sigma_M^{\frac{1}{2}-s}r_\lambda(\widehat\Sigma_M)F_\lambda^*\|_{\mathcal{H}_M}
&\le \lambda^{-s}\|\Sigma_{M,\lambda}^{\frac{1}{2}}r_\lambda(\widehat\Sigma_M)F_\lambda^*\|_{\mathcal{H}_M}\\
&\le \lambda^{-s}\|\Sigma_{M,\lambda}^{\frac{1}{2}}\widehat\Sigma_{M,\lambda}^{-\frac{1}{2}}\|\,
\|\widehat\Sigma_{M,\lambda}^{\frac{1}{2}}r_\lambda(\widehat\Sigma_M)F_\lambda^*\|_{\mathcal{H}_M}\\
&\le 2\lambda^{-s}\|\widehat\Sigma_{M,\lambda}^{\frac{1}{2}}r_\lambda(\widehat\Sigma_M)F_\lambda^*\|_{\mathcal{H}_M},
\end{align*}
where we again used Proposition~\ref{OPbound6}.
Writing out $F_\lambda^*=\mathcal{S}_M^*\phi_\lambda(\mathcal{L}_M)\mathcal{L}^r H$ gives
\begin{align}
2\lambda^{-s}\|\widehat\Sigma_{M,\lambda}^{\frac{1}{2}}r_\lambda(\widehat\Sigma_M)F_\lambda^*\|_{\mathcal{H}_M}
\;\le\;
2R\lambda^{-s}\|\widehat\Sigma_{M,\lambda}^{\frac{1}{2}}r_\lambda(\widehat\Sigma_M)\mathcal{S}_M^*\phi_\lambda(\mathcal{L}_M)\mathcal{L}^r\|.
\label{casesT2}
\end{align}

We distinguish two cases.

\paragraph{Case $r \le \tfrac{1}{2}$.}
We have
\begin{align*}
\|\widehat\Sigma_{M,\lambda}^{\frac{1}{2}}r_\lambda(\widehat\Sigma_M)\mathcal{S}_M^*\phi_\lambda(\mathcal{L}_M)\mathcal{L}^r\|
&\le 
\|\widehat\Sigma_{M,\lambda}^{\frac{1}{2}}r_\lambda(\widehat\Sigma_M)\mathcal{S}_M^*\phi_\lambda(\mathcal{L}_M)\mathcal{L}_{M,\lambda}^r\|
\|\mathcal{L}_{M,\lambda}^{-r}\mathcal{L}_{\lambda}^r\|\\
&= 
\|\widehat\Sigma_{M,\lambda}^{\frac{1}{2}}r_\lambda(\widehat\Sigma_M)\Sigma_{M,\lambda}^r\mathcal{S}_M^*\phi_\lambda(\mathcal{L}_M)\|
\|\mathcal{L}_{M,\lambda}^{-r}\mathcal{L}_{\lambda}^r\|\\
&\le 
\|\widehat\Sigma_{M,\lambda}^{\frac{1}{2}}r_\lambda(\widehat\Sigma_M)\Sigma_{M,\lambda}^r\|
\|\mathcal{L}_M^{\frac{1}{2}}\phi_\lambda(\mathcal{L}_M)\|
\|\mathcal{L}_{M,\lambda}^{-r}\mathcal{L}_{\lambda}^r\|.
\end{align*}
By Proposition~\ref{ineqvolkan}, $\|\mathcal{L}_M^{\frac{1}{2}}\phi_\lambda(\mathcal{L}_M)\|\le D\lambda^{-1/2}$.
Moreover, by Proposition~\ref{ineq2} and Proposition~\ref{OPbound5}, $\|\mathcal{L}_{M,\lambda}^{-r}\mathcal{L}_{\lambda}^r\|\le 2$ with probability at least $1-4\delta$.
Hence,
\begin{align}
\|\Sigma_M^{\frac{1}{2}-s}r_\lambda(\widehat\Sigma_M)F_\lambda^*\|_{\mathcal{H}_M}
\le 4DR\lambda^{-s-\frac{1}{2}}\|\widehat\Sigma_{M,\lambda}^{\frac{1}{2}}r_\lambda(\widehat\Sigma_M)\Sigma_{M,\lambda}^r\|.
\label{TIIcase1}
\end{align}
To bound the remaining term, use Proposition~\ref{OPbound6}:
$\|\widehat{\Sigma}_{M,\lambda}^{-r}\Sigma_{M,\lambda}^r\|\le 2$. 
Together with~\eqref{c_r}, this yields
\[
\|\widehat\Sigma_{M,\lambda}^{\frac{1}{2}}r_\lambda(\widehat\Sigma_M)\Sigma_{M,\lambda}^r\|
\le 2c_{\frac{1}{2}+r}\lambda^{\frac{1}{2}+r}.
\]
Plugging this into~\eqref{TIIcase1} gives
\[
\|\Sigma_M^{\frac{1}{2}-s}r_\lambda(\widehat\Sigma_M)F_\lambda^*\|_{\mathcal{H}_M}
\le 8DR\,c_{\frac{1}{2}+r}\lambda^{r-s}.
\]

\paragraph{Case $r > \tfrac{1}{2}$.}
We proceed analogously:
\begin{align*}
\|\widehat\Sigma_{M,\lambda}^{\frac{1}{2}}r_\lambda(\widehat\Sigma_M)\mathcal{S}_M^*\phi_\lambda(\mathcal{L}_M)\mathcal{L}^r\|
&\le
\|\widehat\Sigma_{M,\lambda}^{\frac{1}{2}}r_\lambda(\widehat\Sigma_M)\mathcal{S}_M^*\phi_\lambda(\mathcal{L}_M)\mathcal{L}_{M,\lambda}^{(r\vee1)}\|
\|\mathcal{L}_{M,\lambda}^{-(r\vee1)}\mathcal{L}_{\lambda}^r\|\\
&\le 
\|\widehat\Sigma_{M,\lambda}^{\frac{1}{2}}r_\lambda(\widehat\Sigma_M)\Sigma_{M,\lambda}^{(r\vee1)-\frac{1}{2}}\|
\|\mathcal{L}_M\phi_\lambda(\mathcal{L}_M)\|
\|\mathcal{L}_{M,\lambda}^{-(r\vee1)}\mathcal{L}_{\lambda}^r\|.
\end{align*}
By the spectral method properties and Proposition~\ref{OPbound7}, $\|\mathcal{L}_M\phi_\lambda(\mathcal{L}_M)\|\le D$ and $\|\mathcal{L}_{M,\lambda}^{-(r\vee1)}\mathcal{L}_{\lambda}^r\|\le 3\lambda^{-(1-r)^+}$. 
Hence,
\begin{align}
\|\Sigma_M^{\frac{1}{2}-s}r_\lambda(\widehat\Sigma_M)F_\lambda^*\|_{\mathcal{H}_M}
\le \frac{6DR}{\lambda^{s+(1-r)^+}}\|\widehat\Sigma_{M,\lambda}^{\frac{1}{2}}r_\lambda(\widehat\Sigma_M)\Sigma_{M,\lambda}^{(r\vee1)-\frac{1}{2}}\|.
\label{TIIcase2}
\end{align}
Using~\eqref{c_r} and Proposition~\ref{OPbound8}, with probability at least $1-\delta$,
\begin{align}\label{usebound8}
\|\widehat\Sigma_{M,\lambda}^{\frac{1}{2}}r_\lambda(\widehat\Sigma_M)\Sigma_{M,\lambda}^{(r\vee1)-\frac{1}{2}}\|
\le 2c_{r\vee1}\lambda^{(r\vee1)}.
\end{align}
Substituting this into~\eqref{TIIcase2} yields
\[
\|\Sigma_M^{\frac{1}{2}-s}r_\lambda(\widehat\Sigma_M)F_\lambda^*\|_{\mathcal{H}_M}
\le 12DR\,c_{\frac{1}{2}+r}\lambda^{r-s}.
\]

Combining both cases establishes~\eqref{b)techniqineq}.
Collecting all concentration bounds and applying Proposition~\ref{conditioning} gives probability at least $1-8\delta$. Redefining $\delta$ completes the proof.
\end{proof}


\subsection{Combining the Error Bounds}
\label{III}

\begin{theorem}
\label{theo2}
Suppose that Assumptions~\ref{ass:input}, \ref{ass:kernel}, \ref{ass:source}, and~\ref{ass:dim} hold, 
and let $\nu \ge r \vee 1$. Then, for any $s \in [0, \tfrac{1}{2}]$ and $\lambda \in (0,1]$, the following holds with probability at least $1-\delta$:
\begin{align*}
\|G_\rho - \mathcal{S}_M F_\lambda^M\|_{L^2(\rho_\mathcal{U})}
\;\le\;
C_\bullet\,\log\!\tfrac{1}{\delta}\,\lambda^{r},
\end{align*}
provided that
\begin{align*}
M \;\ge\; p\,C_\bullet\,\log^2\!\bigl(\delta^{-1}\bigr)\,\log\!\bigl(\lambda^{-1}\bigr)\!\cdot
\begin{cases}
\lambda^{-1}, & r \in \bigl(0,\tfrac{1}{2}\bigr),\\[4pt]
\lambda^{\,b(1-2r)-1}, & r \in \bigl[\tfrac{1}{2},1\bigr],\\[4pt]
\lambda^{-2r}, & r \in (1,\infty),
\end{cases}
\end{align*}
and
\begin{align*}
n \;\ge\; C_\bullet\,\log^{3(2r+b)}\bigl(\delta^{-1}\bigr)\,\lambda^{-(2r+b)}\,, \quad n \geq n_0 := \exp\!\left(\tfrac{2r+b}{2r+b-1}\right).
\end{align*}
\end{theorem}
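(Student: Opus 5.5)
We prove Theorem~\ref{theo2} by combining the two error bounds of Sections~\ref{I} and~\ref{II} through the decomposition~\eqref{excessrisk2}:
\[
\|G_\rho - \mathcal{S}_M F_\lambda^M\|_{L^2(\rho_\mathcal{U})}
\le \|G_\rho - \mathcal{S}_M F_\lambda^*\|_{L^2(\rho_\mathcal{U})}
+ \|\mathcal{S}_M (F_\lambda^* - F_\lambda^M)\|_{L^2(\rho_\mathcal{U})}.
\]
The stated lower bound on $M$ is exactly the hypothesis of Proposition~\ref{mainprop2}. Hence, with probability at least $1-\delta$, the approximation term is bounded by $C_\bullet \lambda^r$.

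For the estimation term, we use the isometry $\|\mathcal{S}_M F\|_{L^2(\rho_\mathcal{U})} = \|\Sigma_M^{1/2} F\|_{\mathcal{H}_M}$, valid for every $F \in \mathcal{H}_M$. This follows from $\Sigma_M = \mathcal{S}_M^*\mathcal{S}_M$, which one checks directly from the definitions of $\mathcal{S}_M^*$ and $\Sigma_M$ in the notation paragraph. Consequently the estimation term equals $\|\Sigma_M^{\frac12 - s}(F_\lambda^M - F_\lambda^*)\|_{\mathcal{H}_M}$ with $s=0$. The assumptions of Theorem~\ref{theo2} on $M$, $n$ and $n_0$ coincide verbatim with those of Proposition~\ref{mainprop}. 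Applying that proposition with $s=0$ gives, with probability at least $1-\delta$, the bound $C_\bullet \log(1/\delta)\,\lambda^r$.

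Adding the two bounds yields $C_\bullet(1+\log(1/\delta))\lambda^r$, which is at most $C_\bullet \log(1/\delta)\,\lambda^r$ after enlarging the constant (we may assume $\delta \le e^{-1}$, say). A union bound shows that both events hold simultaneously with probability at least $1-2\delta$. Replacing $\delta$ by $\delta/2$ only changes the constants, since $\log(2/\delta)\le 2\log(1/\delta)$ for $\delta$ small, and this gives the claim.

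The main point to watch is consistency of the probabilistic events. Proposition~\ref{mainprop} already internally invokes Proposition~\ref{mainprop2} and the operator bounds Proposition~\ref{OPbound6} and Proposition~\ref{OPbound7}, all conditional on the random features $\{\omega_m\}$; the conditioning argument of Proposition~\ref{conditioning} is used there. We therefore only need to note that the union bound is taken over a fixed finite number of events, so the constant factor in front of $\delta$ is absorbed by the redefinition of $\delta$. The remaining care is that the parameter $s\in[0,\tfrac12]$ in the statement plays no role beyond choosing $s=0$.
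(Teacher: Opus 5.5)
Your proposal is correct and follows the paper's proof essentially verbatim. It uses the same split into approximation and estimation terms, Proposition~\ref{mainprop2} for the first term, and Proposition~\ref{mainprop} with $s=0$ for the second after the identity $\|\mathcal{S}_M F\|_{L^2(\rho_\mathcal{U})}=\|\Sigma_M^{1/2}F\|_{\mathcal{H}_M}$. Your direct justification of that identity via $\Sigma_M=\mathcal{S}_M^*\mathcal{S}_M$ is cleaner than the paper's appeal to Mercer's theorem, and your explicit union bound and restriction to small $\delta$ (needed for $1+\log(1/\delta)\lesssim\log(1/\delta)$) just make precise bookkeeping that the paper leaves implicit.
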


\begin{proof}
We begin with the following decomposition:
\begin{align}
\|G_\rho - \mathcal{S}_M F_\lambda^M\|_{L^2(\rho_\mathcal{U})}
\;\le\;
\|G_\rho - \mathcal{S}_M F_\lambda^*\|_{L^2(\rho_\mathcal{U})}
\;+\;
\|\mathcal{S}_M (F_\lambda^M - F_\lambda^*)\|_{L^2(\rho_\mathcal{U})}
\;=:\;
T_1 + T_2.
\label{Maindecomposition}
\end{align}

We now bound $T_1$ and $T_2$ separately.

\paragraph{Step 1: Bounding $T_1$.}
By Proposition~\ref{mainprop2}, with probability at least $1-\delta$,
\begin{align}
\|G_\rho - \mathcal{S}_M F_\lambda^*\|_{L^2(\rho_\mathcal{U})}
\;\le\;
C_\bullet\,\lambda^{r}.
\label{(T_1)}
\end{align}

\paragraph{Step 2: Bounding $T_2$.}
Using Mercer’s theorem (see, e.g.,~\cite{Ingo}) and Proposition~\ref{mainprop}, we have, with probability at least $1-\delta$,
\begin{align*}
\|\mathcal{S}_M (F_\lambda^M - F_\lambda^*)\|_{L^2(\rho_\mathcal{U})}
\;=\;
\|\Sigma_M^{\frac{1}{2}} (F_\lambda^M - F_\lambda^*)\|_{\mathcal{H}_M}
\;\le\;
C_\bullet\,\log\!\tfrac{1}{\delta}\,\lambda^{r}.
\end{align*}

\paragraph{Conclusion.}
Combining the bounds for $T_1$ and $T_2$ in~\eqref{Maindecomposition} establishes the result:
\[
\|G_\rho - \mathcal{S}_M F_\lambda^M\|_{L^2(\rho_\mathcal{U})}
\;\le\;
C_\bullet\,\log\!\tfrac{1}{\delta}\,\lambda^{r}.
\]
\end{proof}

We finally note that Theorem~\ref{theo1} follows directly from Theorem~\ref{theo2}.

%% file: T.U.tex

\subsection{Technical Inequalities} 
\label{techineq}

\begin{proposition}
\label{conditioning}
Let  $E_i$ be events with probability at least $1-\delta_i$ and set
$$
E:=\bigcap^k_{i=1}E_i.
$$
If we can show for some event $A$ that $\mathbb{P}(A|E)\geq 1-\delta$  then we also have
\begin{align*}
\mathbb{P}(A)&\geq\int_{E}\mathbb{P}(A|\omega)d\mathbb{P}(\omega)
\geq (1-\delta)\mathbb{P}(E)\\
&=(1-\delta)\left(1-\mathbb{P}\left(\bigcup_{i=1}^k (\Omega/E_i)\right)\right)\geq(1-\delta)\left(1-\sum_{i=1}^k\delta_i\right).
\end{align*}
\end{proposition}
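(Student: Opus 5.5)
The plan is to write $A^c$ for the complement of $A$, use the law of total probability, and drop the contribution from $E^c$. Since every probability is nonnegative, $\mathbb{P}(A) = \mathbb{P}(A\cap E) + \mathbb{P}(A\cap E^c) \ge \mathbb{P}(A\cap E)$. This gives the first inequality of the statement. There, $\int_E \mathbb{P}(A\,|\,\omega)\,d\mathbb{P}(\omega)$ should be read as $\mathbb{P}(A\cap E)$, that is, the integral over $E$ of a regular conditional probability of $A$.

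Next, when $\mathbb{P}(E)>0$, the definition of elementary conditional probability gives $\mathbb{P}(A\cap E) = \mathbb{P}(A\,|\,E)\,\mathbb{P}(E)$. The hypothesis $\mathbb{P}(A\,|\,E)\ge 1-\delta$ then yields $\mathbb{P}(A\cap E)\ge (1-\delta)\mathbb{P}(E)$. If $\mathbb{P}(E)=0$, the target bound $(1-\delta)\mathbb{P}(E)$ equals $0$, so the claim holds trivially. The conditional probability is not even defined in that case, so I will handle it separately.

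Finally, De Morgan's law gives $\Omega\setminus E = \bigcup_{i=1}^k(\Omega\setminus E_i)$. Hence $\mathbb{P}(E) = 1-\mathbb{P}\bigl(\bigcup_i(\Omega\setminus E_i)\bigr)$. The union bound and $\mathbb{P}(\Omega\setminus E_i)\le\delta_i$ then give $\mathbb{P}(E)\ge 1-\sum_i\delta_i$. Multiplying by $1-\delta\ge 0$, which holds because $\delta\in(0,1)$, preserves the inequality. Chaining the three steps gives the statement.

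There is no real obstacle here. The only points that need care are interpreting the integral expression as $\mathbb{P}(A\cap E)$ and treating the degenerate case $\mathbb{P}(E)=0$. In applications one would also simplify further using $(1-\delta)(1-\sum_i\delta_i)\ge 1-\delta-\sum_i\delta_i$, which is how the paper collects failure probabilities such as ``$1-11\delta$''.
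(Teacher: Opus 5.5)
Your proposal is correct and follows the same chain as the paper: drop the contribution of $E^c$, bound $\mathbb{P}(A\cap E)$ from below by $(1-\delta)\mathbb{P}(E)$ using the conditional hypothesis, then apply De Morgan and the union bound. Your extra care in reading $\int_E \mathbb{P}(A\,|\,\omega)\,d\mathbb{P}(\omega)$ as $\mathbb{P}(A\cap E)$ and in handling $\mathbb{P}(E)=0$ only makes explicit what the paper leaves implicit.
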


\begin{proposition}[ \cite{aleksandrov2009operatorholderzygmundfunctions}, \cite{Muecke2017op.rates} (Proposition B.1.) ]
\label{ineq1}
Let $B_{1}, B_{2}$ be two non-negative self-adjoint operators on some Hilbert space with $\left\|B_{j}\right\| \leq a, j=1,2$, for some non-negative a.
\begin{itemize}
\item[(i)] If $0 \leq r \leq 1$, then
$$
\left\|B_{1}^{r}-B_{2}^{r}\right\| \leq C_{r}\left\|B_{1}-B_{2}\right\|^{r},
$$
for some $C_{r}<\infty$.
\item[(ii)] If $r>1$, then
$$
\left\|B_{1}^{r}-B_{2}^{r}\right\| \leq C_{a, r}\left\|B_{1}-B_{2}\right\|,
$$
for some $C_{a, r}<\infty$. 
\end{itemize}
\end{proposition}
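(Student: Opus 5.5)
The plan is to handle both parts with Balakrishnan-type integral representations of $t\mapsto t^r$. These turn $B_1^r-B_2^r$ into an integral of resolvent expressions, and each integrand can be written with $D:=B_1-B_2$ sandwiched between resolvents. Throughout put $\varepsilon:=\|B_1-B_2\|$ and $R_j(s):=(B_j+s)^{-1}$ for $s>0$. Since $B_j\ge 0$, we have $\|R_j(s)\|\le s^{-1}$, $\|B_jR_j(s)\|\le 1$ and $\|sR_j(s)\|\le 1$. The endpoint cases $r=0$ and $r=1$ in (i), and integer $r$ in (ii), are trivial or follow from the product rule. So we may assume $r$ is non-integer where convenient.

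\textbf{Part (i), $0<r<1$.} Use
\[
t^r=\frac{\sin \pi r}{\pi}\int_0^\infty s^{r-1}\,\frac{t}{t+s}\,ds, \qquad t\ge 0,
\]
which gives, via functional calculus, $B_1^r-B_2^r=\frac{\sin\pi r}{\pi}\int_0^\infty s^{r-1}\bigl(B_1R_1(s)-B_2R_2(s)\bigr)\,ds$. Since $BR(s)=I-sR(s)$, the resolvent identity yields
\[
B_1R_1(s)-B_2R_2(s)=s\,R_2(s)\,D\,R_1(s).
\]
This has norm at most $\varepsilon/s$. It is also at most $1$, because it is a difference of two operators lying between $0$ and $I$. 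We split the integral at $s=\varepsilon$, using the bound $1$ for $s<\varepsilon$ and $\varepsilon/s$ for $s\ge\varepsilon$. This gives
\[
\|B_1^r-B_2^r\|\le \frac{\sin\pi r}{\pi}\Bigl(\tfrac{1}{r}+\tfrac{1}{1-r}\Bigr)\varepsilon^r.
\]
So $C_r$ is independent of $a$.

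\textbf{Part (ii), $1<r<2$.} Write $t^r=t^2\,t^{-(2-r)}$ with $2-r\in(0,1)$. Then
\[
t^r=\frac{\sin \pi(2-r)}{\pi}\int_0^\infty s^{r-2}\,\frac{t^2}{t+s}\,ds .
\]
Set $g_s(t)=t^2/(t+s)=t-s+s^2/(t+s)$. Then
\[
g_s(B_1)-g_s(B_2)=D-s^2R_1DR_2=B_1R_1\,D+sR_1\,D\,B_2R_2 ,
\]
which follows from the algebraic rearrangement $I-sR_j=B_jR_j$. Each summand has norm at most $\varepsilon\min(1,a/s)$, using $\|B_jR_j\|\le\min(1,a/s)$ and $\|sR_1\|\le 1$. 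Hence
\[
\|B_1^r-B_2^r\|\le C\varepsilon\int_0^\infty s^{r-2}\min(1,a/s)\,ds .
\]
This integral converges: near $0$ because $r-2>-1$, and near $\infty$ because $r-3<-1$. It evaluates to a constant times $a^{r-1}$, giving $C_{a,r}\varepsilon$.

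\textbf{Part (ii), $r\ge 2$.} Induct on $\lfloor r\rfloor$ using
\[
B_1^r-B_2^r=B_1\bigl(B_1^{r-1}-B_2^{r-1}\bigr)+(B_1-B_2)B_2^{r-1},
\]
together with $\|B_j\|\le a$. This reduces to the case $r-1\ge1$, with constant $C_{a,r}\le aC_{a,r-1}+a^{r-1}$. The base cases are $r\in[1,2)$ from above and $r=1$, which is trivial.

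The main obstacle is Part (ii) for $1<r<2$. The naive telescoping $B_1^{r}-B_2^r=B_1(B_1^{r-1}-B_2^{r-1})+DB_2^{r-1}$ only gives Hölder order $r-1<1$, not Lipschitz. The fix is the representation of $t^r$ above, whose integrand already contains one explicit factor $B_jR_j$ next to $D$. Checking the algebraic rearrangement and the integrability of $s^{r-2}\min(1,a/s)$ is the step to verify carefully. Alternatively, one can simply cite \cite{aleksandrov2009operatorholderzygmundfunctions}, since $t^r$ restricted to $[0,a]$ is an operator-Lipschitz function when $r>1$.
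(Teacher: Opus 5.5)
Your proof is correct, and it takes a genuinely different route from the paper. The paper gives no argument of its own for this proposition. It simply imports it from Aleksandrov--Peller's theory of operator H\"older--Zygmund functions and from Proposition~B.1 of M\"ucke's earlier work, that is, from general operator-H\"older and operator-Lipschitz results. You instead give an elementary, self-contained proof.

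\emph{Part (i).} The representation $t^r=\frac{\sin\pi r}{\pi}\int_0^\infty s^{r-1}\frac{t}{t+s}\,ds$, the identity $B_1R_1(s)-B_2R_2(s)=sR_2(s)DR_1(s)$, the bound $\min(1,\varepsilon/s)$ and the split at $s=\varepsilon$ together give an $a$-free constant $C_r=\frac{\sin\pi r}{\pi r(1-r)}$.

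\emph{Part (ii), $1<r<2$.} The identity $g_s(B_1)-g_s(B_2)=B_1R_1D+sR_1DB_2R_2$ checks out. The bound $2\varepsilon\min(1,a/s)$ is integrable against $s^{r-2}$ precisely because $1<r<2$. The operator integrals themselves converge in norm by the analogous bounds $\|B_jR_j(s)\|\le\min(1,a/s)$ and $\|g_s(B_j)\|\le\min(a,a^2/s)$. You also correctly diagnose why naive telescoping only yields order $r-1$. The induction then covers $r\ge 2$.

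What your route buys is transparency. You get explicit constants with the correct homogeneity, e.g.\ $C_{a,r}=\frac{2\sin\pi(2-r)}{\pi}\bigl(\frac{1}{r-1}+\frac{1}{2-r}\bigr)a^{r-1}$ for $r\in(1,2)$. This makes visible the $\kappa$-dependence hidden in the constant $C_{\kappa,r}$ when the paper applies part (ii) to $\mathcal{L}_\lambda$ and $\mathcal{L}_{M,\lambda}$ in Proposition~\ref{OPbound7}.

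What the citation buys is generality: it covers arbitrary H\"older/Zygmund-class functions, not only powers. That generality is not needed for how the proposition is used here.

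As a side remark, part (i) has an even shorter proof with $C_r=1$. From $\|B_1-B_2\|=\varepsilon$ you get $B_1\le B_2+\varepsilon I$. The L\"owner--Heinz inequality together with $(t+\varepsilon)^r\le t^r+\varepsilon^r$ then gives $B_1^r\le B_2^r+\varepsilon^r I$. By symmetry, $\|B_1^r-B_2^r\|\le\varepsilon^r$. Your constant is slightly larger, for instance $4/\pi$ at $r=1/2$.
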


\begin{proposition}[Fujii et al., 1993, Cordes inequality]
\label{ineq2}
Let $A$ and $B$ be two positive bounded linear operators on a separable Hilbert space. Then
$$
\left\|A^s B^s\right\| \leq\|A B\|^s, \quad \text { when } 0 \leq s \leq 1 .
$$

\end{proposition}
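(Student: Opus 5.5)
The plan is to show that $f(s) := \|A^s B^s\|$, $s\in[0,1]$, is log-convex, and then interpolate between the endpoints $f(0)=\|I\|=1$ (convention $A^0=B^0=I$) and $f(1)=\|AB\|$. This gives $f(s)\le f(0)^{1-s}f(1)^s=\|AB\|^s$. The case $s=0$ is trivial, and the case $AB=0$ is handled separately at the end, so I work with $s\in(0,1]$.

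\textbf{Step 1: midpoint log-convexity.} The key identity is $\|T\|^2=\|T^*T\|$ applied to $T = A^{\frac{s+t}{2}}B^{\frac{s+t}{2}}$. Since $T^*T = B^{\frac{s+t}{2}}A^{s+t}B^{\frac{s+t}{2}}$ is positive, its norm equals its spectral radius. Using $r(XY)=r(YX)$ for bounded operators, I move the outer factors around:
\begin{align*}
f\bigl(\tfrac{s+t}{2}\bigr)^2 = r\bigl(B^{\frac{s+t}{2}}A^{s+t}B^{\frac{s+t}{2}}\bigr) = r\bigl(B^{t}B^{s}A^{s}A^{t}\bigr) = r\bigl(A^{t}B^{t}\,B^{s}A^{s}\bigr).
\end{align*}
For the middle equality, I first cycle to $r(A^{s+t}B^{s+t})$, then split $B^{s+t}=B^tB^s$ and cycle once more. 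Bounding the spectral radius by the norm gives
\begin{align*}
f\bigl(\tfrac{s+t}{2}\bigr)^2 \le \|A^tB^t\|\,\|B^sA^s\| = f(t)\,f(s),
\end{align*}
because $\|B^sA^s\|=\|(A^sB^s)^*\|=f(s)$. This yields $\log f\bigl(\tfrac{s+t}{2}\bigr)\le\tfrac12(\log f(s)+\log f(t))$.

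\textbf{Step 2: dyadic interpolation and continuity.} Iterating Step 1 from the endpoints $0$ and $1$ gives $f(s)\le \|AB\|^s$ for all dyadic rationals $s\in[0,1]$, by induction on the dyadic level. To pass to general $s$, I use that $s\mapsto A^s$ is norm-continuous on $(0,1]$ for a bounded positive operator. This follows from the functional calculus, since $\lambda\mapsto\lambda^s$ is uniformly continuous on $[0,\|A\|]$ locally uniformly in $s>0$. Hence $f$ is continuous on $(0,1]$, and approximating $s$ by dyadics $s_k\to s$ with $s_k>0$ gives the inequality for every $s\in(0,1]$. If $AB=0$, the same dyadic argument gives $f(s)=0$ on dyadics, hence everywhere on $(0,1]$ by continuity. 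Alternatively, midpoint log-convexity of the bounded function $f$ directly yields full convexity of $\log f$, which also suffices.

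\textbf{Main obstacle.} The only genuinely nontrivial step is the spectral-radius manipulation in Step 1: justifying $\|B^{\alpha}A^{2\alpha}B^{\alpha}\| = r(A^{2\alpha}B^{2\alpha})$ and the subsequent regrouping. This uses positivity, so that norm equals spectral radius, together with the commutation rule $\sigma(XY)\cup\{0\}=\sigma(YX)\cup\{0\}$. The continuity at $s\to0$ is a minor technicality, and it is avoided by only interpolating from dyadic points $s>0$ and treating $s=0$ directly.
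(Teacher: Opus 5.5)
Your proof is correct. The paper does not prove this statement; it simply imports the Cordes inequality from Fujii et al.\ (1993). Your proposal therefore supplies a self-contained argument in place of a citation, rather than competing with a different proof.

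What you give is the standard spectral-radius proof. The midpoint estimate $f(\tfrac{s+t}{2})^2 = r(A^tB^tB^sA^s) \le f(t)f(s)$ comes from $\|X\|=r(X)$ for self-adjoint $X$ and $r(XY)=r(YX)$. You iterate it over dyadic points starting from $f(0)=1$ and $f(1)=\|AB\|$, and extend to all $s\in(0,1]$ by norm continuity of $s\mapsto A^s$ and $s\mapsto B^s$. Each ingredient holds for bounded positive semidefinite operators on any Hilbert space; no invertibility, compactness or separability is used. So it covers the way the paper applies the result, e.g.\ $\|\mathcal{L}_{M,\lambda}^{-r}\mathcal{L}_{\lambda}^{r}\| \le \|\mathcal{L}_{M,\lambda}^{-1/2}\mathcal{L}_{\lambda}^{1/2}\|^{2r}$ for $r\le\tfrac12$.

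The citation buys brevity; your route makes the statement checkable within the paper.

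One minor caveat concerns your closing alternative, that midpoint log-convexity of a bounded $f$ gives convexity of $\log f$. It silently relies on a Bernstein--Doetsch-type theorem and on allowing $\log f=-\infty$. The continuity argument you give first is the cleaner one to keep.
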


\begin{proposition}[\cite{features} (Proposition 9)]
\label{ineq3}
Let $\mathcal{H}, \mathcal{K}$ be two separable Hilbert spaces and $X, A$ be bounded linear operators, with $A: \mathcal{H} \rightarrow \mathcal{K}$ and $B: \mathcal{H} \rightarrow \mathcal{H}$ be positive semidefinite.
$$
\left\|A B^\sigma\right\| \leq\|A\|^{1-\sigma}\|A B\|^\sigma, \quad \forall \sigma \in[0,1] .
$$

\end{proposition}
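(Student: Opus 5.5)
The statement just above is Proposition~\ref{ineq3}: for $A:\mathcal{H}\to\mathcal{K}$ bounded and $B$ positive semidefinite on $\mathcal{H}$, we have $\|AB^\sigma\|\le\|A\|^{1-\sigma}\|AB\|^\sigma$ for $\sigma\in[0,1]$. The plan is to reduce it to the Cordes inequality, Proposition~\ref{ineq2}, by passing from $A$ to the positive operator $(A^*A)^{1/2}$.

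\textbf{Step 1 (reduction to a positive operator).} For any bounded $T$ on $\mathcal{H}$ we have $\|AT\|^2=\|T^*A^*AT\|=\|(A^*A)^{1/2}T\|^2$. Setting $P:=(A^*A)^{1/2}\ge 0$, this gives
\[
\|AB^\sigma\|=\|PB^\sigma\|,\qquad \|AB\|=\|PB\|,\qquad \|A\|=\|P\|.
\]
So it suffices to prove $\|PB^\sigma\|\le\|P\|^{1-\sigma}\|PB\|^\sigma$ for positive $P,B$.

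\textbf{Step 2 (trivial cases and normalisation).} If $P=0$ there is nothing to prove, and the endpoints $\sigma=0$ and $\sigma=1$ are immediate. Otherwise both sides scale by the same factor $c$ under $P\mapsto cP$, so we may assume $\|P\|=1$. Then $P\le I$, and Löwner--Heinz operator monotonicity of $t\mapsto t^\sigma$ gives $P^2\le P^{2\sigma}$.

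\textbf{Step 3 (compare with $P^\sigma B^\sigma$).} From $P^2\le P^{2\sigma}$ we get $B^\sigma P^2 B^\sigma\le B^\sigma P^{2\sigma}B^\sigma$, hence
\[
\|PB^\sigma\|^2=\|B^\sigma P^2B^\sigma\|\le\|B^\sigma P^{2\sigma}B^\sigma\|=\|P^\sigma B^\sigma\|^2.
\]
Proposition~\ref{ineq2} then yields $\|P^\sigma B^\sigma\|\le\|PB\|^\sigma$. Together with Step 1 this gives $\|AB^\sigma\|\le\|AB\|^\sigma$ when $\|A\|=1$, and undoing the normalisation gives the claim.

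\textbf{Main point needing care.} The only delicate step is the monotonicity inequality $P^2\le P^{2\sigma}$ for $0\le P\le I$. For commuting functions of the same operator this does not need full Löwner--Heinz: it follows from the pointwise scalar bound $t^2\le t^{2\sigma}$ on $[0,1]$ via the functional calculus of $P$. The remaining steps are routine.
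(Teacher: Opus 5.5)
Your proof is correct. The paper gives no argument for this statement; it quotes it from \cite{features} without proof. What you have is therefore a self-contained derivation from the Cordes inequality, Proposition~\ref{ineq2}, not a variant of a proof in the text.

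Each step holds:
\begin{itemize}
\item The $C^*$-identity gives $\|AT\|=\|(A^*A)^{1/2}T\|$ for every bounded $T$.
\item The normalisation is legitimate, because both sides are positively $1$-homogeneous in $A$.
\item $P^2\le P^{2\sigma}$ follows from $t^2\le t^{2\sigma}$ on $[0,1]$, which contains the spectrum of $P$, by the functional calculus of $P$. As you note yourself, L\"owner--Heinz plays no role here, since both sides are functions of the same operator.
\end{itemize}

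A slightly shorter route through the same lemma raises the exponent on $P$ instead of lowering it. For $\sigma\in(0,1]$, write $P=(P^{1/\sigma})^{\sigma}$ and apply Cordes to the pair $P^{1/\sigma}$, $B$ to get $\|PB^\sigma\|\le\|P^{1/\sigma}B\|^\sigma$. Then remove the surplus factor by submultiplicativity: $\|P^{1/\sigma}B\|\le\|P^{1/\sigma-1}\|\,\|PB\|=\|A\|^{1/\sigma-1}\|AB\|$.

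This route needs neither the normalisation nor any operator-order comparison; only $\sigma=0$ is handled separately. Your version instead applies Cordes to $P$ and $B$ themselves, at the cost of one monotonicity step. Both arguments are complete.
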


\begin{proposition}
\label{eq4}
Let $H_1, H_2$ be two separable Hilbert spaces and $\mathcal{S}: H_1 \rightarrow H_2$ a compact operator. Then for any function $f:[0,\|\mathcal{S}\|] \rightarrow[0, \infty[$,
$$
f\left(\mathcal{S} \mathcal{S}^*\right) \mathcal{S}=\mathcal{S} f\left(\mathcal{S}^* \mathcal{S}\right).
$$

\end{proposition}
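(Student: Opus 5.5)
The statement is the intertwining identity $f(\mathcal{S}\mathcal{S}^*)\mathcal{S}=\mathcal{S}f(\mathcal{S}^*\mathcal{S})$. The natural route is the singular value decomposition of the compact operator $\mathcal{S}$. By the spectral theorem applied to the compact, self-adjoint, nonnegative operator $\mathcal{S}^*\mathcal{S}$ on $H_1$, I will write
\[
\mathcal{S}=\sum_{k}\sigma_k\, \langle \cdot, e_k\rangle_{H_1} f_k ,
\]
where $\{e_k\}$ is an orthonormal system in $H_1$ spanning $\overline{\operatorname{ran}}(\mathcal{S}^*)=\ker(\mathcal{S})^\perp$, $\sigma_k>0$, and $f_k:=\sigma_k^{-1}\mathcal{S}e_k$ is orthonormal in $H_2$. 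Then $\mathcal{S}^*\mathcal{S}=\sum_k\sigma_k^2\langle\cdot,e_k\rangle e_k$ on $\ker(\mathcal{S})^\perp$ and vanishes on $\ker\mathcal{S}$. Likewise $\mathcal{S}\mathcal{S}^*=\sum_k\sigma_k^2\langle\cdot,f_k\rangle f_k$ on $\ker(\mathcal{S}^*)^\perp$ and vanishes on $\ker\mathcal{S}^*$.

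Since the spectra of $\mathcal{S}^*\mathcal{S}$ and $\mathcal{S}\mathcal{S}^*$ lie in $[0,\|\mathcal{S}\|^2]$, I will interpret $f$ as defined on that interval; this is the domain implicit in the statement. The functional calculus then gives
\[
f(\mathcal{S}^*\mathcal{S})=\sum_k f(\sigma_k^2)\langle\cdot,e_k\rangle e_k+f(0)P_{\ker\mathcal{S}},
\]
and the analogous formula for $f(\mathcal{S}\mathcal{S}^*)$ with $f_k$ and $P_{\ker\mathcal{S}^*}$.

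The key step is to evaluate both sides on an arbitrary $x\in H_1$. For the right-hand side, $\mathcal{S}P_{\ker\mathcal{S}}=0$ and $\mathcal{S}e_k=\sigma_k f_k$, so
\[
\mathcal{S}f(\mathcal{S}^*\mathcal{S})x=\sum_k f(\sigma_k^2)\sigma_k\langle x,e_k\rangle f_k .
\]
For the left-hand side, $\mathcal{S}x=\sum_k\sigma_k\langle x,e_k\rangle f_k$ lies in $\overline{\operatorname{ran}}\,\mathcal{S}=(\ker\mathcal{S}^*)^\perp$, so the $P_{\ker\mathcal{S}^*}$ term drops. Applying $f(\mathcal{S}\mathcal{S}^*)$ therefore produces the same series. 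Convergence of both series is justified by $\sum_k|\langle x,e_k\rangle|^2<\infty$ together with boundedness of $f(\sigma_k^2)\sigma_k$.

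The main obstacle is this boundedness. It requires $f$ to be bounded on the spectrum, for instance Borel and bounded, so that $f(\cdot)$ is a bounded operator at all. I will add that implicit hypothesis, since it is satisfied in all applications in the paper, where $f$ is $\phi_\lambda$, $r_\lambda$ or a power $t^q$. For unbounded $f$ I will restrict to the domain where both sides are defined. The only other point needing care is the kernel and range bookkeeping above, which shows that the $f(0)$ contributions vanish on both sides.
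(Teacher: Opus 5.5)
Your proof is correct and follows the same route as the paper, whose proof is only a one-line appeal to the singular value decomposition of the compact operator $\mathcal{S}$; you carry it out in full, including the kernel/range bookkeeping that makes the $f(0)$ terms vanish on both sides. Your two side remarks are also right: $f$ must be defined on $[0,\|\mathcal{S}\|^2]$ rather than $[0,\|\mathcal{S}\|]$, and $f$ must be bounded on the spectrum for both sides to be bounded operators, which holds in every use of the identity in the paper.
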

\begin{proof}
The result can be proved using singular value decomposition of a compact operator.
\end{proof}

\begin{proposition}[\cite{spectral.rates} (Lemma 10)]
\label{ineqvolkan}
 Let $L$ be a compact, positive operator on a separable Hilbert space $H$ such that $\|L\| \leq \kappa^2$. Then for any $\lambda \geq 0$,
 \begin{align*}
\left\|(L+\lambda)^\alpha \phi_\lambda(L)\right\| &\leq 2 D \lambda^{-(1-\alpha)}, \quad \forall \alpha \in[0,1],\\
\left\|L^\alpha \phi_\lambda(L)\right\| &\leq  D \lambda^{-(1-\alpha)}, \quad \forall \alpha \in[0,1],
 \end{align*}

 where $D$ is defined in \eqref{def.phi}.
\end{proposition}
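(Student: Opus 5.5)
The plan is to reduce both operator inequalities to scalar inequalities via the spectral theorem. Since $L$ is compact, positive and self-adjoint, it has a spectral decomposition $L=\sum_k \mu_k\, e_k\otimes e_k$ with $\mu_k\ge 0$. Any bounded Borel function $g$ of $L$ then satisfies $\|g(L)\|\le \sup_{t\in\sigma(L)}|g(t)|$.

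I will assume, as is implicit in the definition of $\phi_\lambda$ on $[0,1]$, that the spectrum lies in $[0,1]$; this is the normalization $\kappa\le 1$, or else a rescaling of $L$ by $\kappa^{-2}$ that is absorbed into the constants. The point $t=0$ needs a separate remark: there $t^\alpha\phi_\lambda(t)$ either vanishes (for $\alpha>0$) or is controlled by the bound for $|\phi_\lambda|$ (for $\alpha=0$). It therefore suffices to bound
\[
\sup_{0<t\le 1}\, t^\alpha|\phi_\lambda(t)| \quad\text{and}\quad \sup_{0<t\le 1}\,(t+\lambda)^\alpha|\phi_\lambda(t)| .
\]

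For the second inequality, $\|L^\alpha\phi_\lambda(L)\|$, I will interpolate between properties i) and ii) of the regularization function. Write
\[
t^\alpha|\phi_\lambda(t)| = |t\phi_\lambda(t)|^{\alpha}\,|\phi_\lambda(t)|^{1-\alpha}\le D^{\alpha}\Bigl(\tfrac{E}{\lambda}\Bigr)^{1-\alpha}\le \max(D,E)\,\lambda^{-(1-\alpha)} .
\]
The endpoints $\alpha=0$ and $\alpha=1$ are exactly ii) and i). This yields the stated bound with the constant $D$, under the convention that the constants are relabelled so that $E\le D$, or equivalently with $D$ replaced by $\max(D,E)$.

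For the first inequality I will use subadditivity of $x\mapsto x^\alpha$ for $\alpha\in[0,1]$, namely $(t+\lambda)^\alpha\le t^\alpha+\lambda^\alpha$. This gives
\[
(t+\lambda)^\alpha|\phi_\lambda(t)|\le t^\alpha|\phi_\lambda(t)|+\lambda^\alpha|\phi_\lambda(t)| .
\]
The first summand is bounded by the previous step, and the second by $\lambda^\alpha E\lambda^{-1}$ using ii). Together these give $2\max(D,E)\lambda^{-(1-\alpha)}$, which is the claimed $2D\lambda^{-(1-\alpha)}$ under the same constant convention.

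No step is a real obstacle. The only delicate points are bookkeeping: keeping track of the constants $D$ versus $E$, which the statement merges, and ensuring the spectrum lies in the domain $[0,1]$ where $\phi_\lambda$ and its bounds are defined. I will handle the latter by normalizing $\kappa$.
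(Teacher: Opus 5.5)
Your proof is correct and follows the standard argument behind the cited Lemma~10 of \cite{spectral.rates}: reduce to scalars via the spectral theorem, bound $t^\alpha|\phi_\lambda(t)|$, and handle $(L+\lambda)^\alpha$ with $(t+\lambda)^\alpha\le t^\alpha+\lambda^\alpha$. One step differs. The cited source builds the interpolated bound $\sup_t t^\alpha|\phi_\lambda(t)|\lambda^{1-\alpha}\le E$ into its definition of a filter. With this paper's separate conditions i)--ii) you must interpolate yourself through $|t\phi_\lambda(t)|^\alpha|\phi_\lambda(t)|^{1-\alpha}$. You are right that this only yields the constant $\max(D,E)$: the statement's $D$ already fails at $\alpha=0$ unless $E\le D$, so your version is a correct repair of the proposition as written.
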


\begin{proposition}
\label{ineq5}
With probability at least $1-\delta$, the following bounds hold:
\begin{align*}
\|F^*_\lambda\|_\infty &\le  2\,\kappa^{2r+1}\,R\,D\,\lambda^{-(\frac{1}{2}-r)^+},\\[3pt]
\|F^*_\lambda\|_{\mathcal{H}_M} &\le  2\,\kappa^{2r}\,R\,D\,\lambda^{-(\frac{1}{2}-r)^+},
\end{align*}
provided that 
\[
M \;\ge\; \frac{8\,p\,\kappa^2\,\beta_\infty}{\lambda},
\quad\text{with}\quad
\beta_\infty = \log\!\frac{4\,\kappa^2\bigl(\mathcal{N}_{\mathcal{L}}(\lambda)+1\bigr)}{\delta\,\|\mathcal{L}\|}.
\]
\end{proposition}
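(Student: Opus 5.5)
Since $F^*_\lambda = \mathcal{S}_M^*\phi_\lambda(\mathcal{L}_M)G_\rho$ with $G_\rho=\mathcal{L}^rH$, $\|H\|_{L^2(\rho_\mathcal{U})}\le R$, I bound the $\mathcal{H}_M$-norm first and then obtain the sup-norm through the reproducing property. From Assumption~\ref{ass:kernel}, $\sup_u\|K_M(u,u)\|\le\kappa^2$ almost surely, so $\|F(u)\|_{\mathcal{V}}=\|K_{M,u}^*F\|_{\mathcal{V}}\le\kappa\|F\|_{\mathcal{H}_M}$. Hence the first inequality follows from the second, at the cost of one extra factor $\kappa$.

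For the $\mathcal{H}_M$-norm I use $\|\mathcal{S}_M^*g\|_{\mathcal{H}_M}=\|\mathcal{L}_M^{1/2}g\|_{L^2(\rho_\mathcal{U})}$, which gives
\[
\|F^*_\lambda\|_{\mathcal{H}_M}\le R\,\bigl\|\mathcal{L}_M^{1/2}\phi_\lambda(\mathcal{L}_M)\mathcal{L}^r\bigr\|.
\]
I split into two cases.

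\textbf{Case $r\ge\tfrac12$.} Write $\mathcal{L}^r=\mathcal{L}^{1/2}\mathcal{L}^{r-1/2}$ with $\|\mathcal{L}^{r-1/2}\|\le\kappa^{2r-1}$. Insert $\mathcal{L}_{M,\lambda}^{1/2}\mathcal{L}_{M,\lambda}^{-1/2}$ to get
\[
\|\mathcal{L}_M^{1/2}\phi_\lambda(\mathcal{L}_M)\mathcal{L}^{1/2}\|\le\|\mathcal{L}_M^{1/2}\phi_\lambda(\mathcal{L}_M)\mathcal{L}_{M,\lambda}^{1/2}\|\,\|\mathcal{L}_{M,\lambda}^{-1/2}\mathcal{L}_\lambda^{1/2}\|.
\]
The first factor is at most $\|\mathcal{L}_{M,\lambda}\phi_\lambda(\mathcal{L}_M)\|\le 2D$ by Proposition~\ref{ineqvolkan} with $\alpha=1$. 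The second factor is at most $\sqrt2$, provided $\|\mathcal{L}_\lambda^{-1/2}(\mathcal{L}-\mathcal{L}_M)\mathcal{L}_\lambda^{-1/2}\|\le\tfrac12$.

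\textbf{Case $r<\tfrac12$.} Insert $\mathcal{L}_{M,\lambda}^{r}\mathcal{L}_{M,\lambda}^{-r}$. Proposition~\ref{ineqvolkan} with $\alpha=\tfrac12+r$ bounds $\|\mathcal{L}_M^{1/2}\phi_\lambda(\mathcal{L}_M)\mathcal{L}_{M,\lambda}^r\|\le\|\mathcal{L}_{M,\lambda}^{1/2+r}\phi_\lambda(\mathcal{L}_M)\|\le 2D\lambda^{-(1/2-r)}$. The remaining factor is
\[
\|\mathcal{L}_{M,\lambda}^{-r}\mathcal{L}^r\|\le\|\mathcal{L}_{M,\lambda}^{-1/2}\mathcal{L}_\lambda^{1/2}\|^{2r},
\]
by the Cordes inequality (Proposition~\ref{ineq2}) with $s=2r\in[0,1]$. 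This is again bounded by a constant on the same event. The powers of $\kappa$ get absorbed into $\kappa^{2r}$ after normalising $\kappa\ge1$, or by tracking them more carefully if needed.

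\textbf{Main obstacle: the concentration step.} The key event $\|\mathcal{L}_\lambda^{-1/2}(\mathcal{L}-\mathcal{L}_M)\mathcal{L}_\lambda^{-1/2}\|\le\tfrac12$ must hold with probability $1-\delta$ when $M\ge 8p\kappa^2\beta_\infty/\lambda$. Here $\mathcal{L}_M=\frac1M\sum_m\sum_{i=1}^p \Phi_{i,m}\Phi_{i,m}^*$, where $\Phi_{i,m}:\mathcal{V}\ni v\mapsto\langle\varphi_i(\cdot,\omega_m),v\rangle$ are i.i.d. in $m$. Each summand $X_m$ is a positive self-adjoint operator with mean $\mathcal{L}$, by the integral representation \eqref{eq:kernel-rep}. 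Its whitened norm satisfies $\|\mathcal{L}_\lambda^{-1/2}X_m\mathcal{L}_\lambda^{-1/2}\|\le\sum_i\|\varphi_i(\cdot,\omega_m)\|^2_{L^2}/\lambda\le\kappa^2/\lambda$ by \eqref{eq:RFbound}. I then apply the standard Bernstein inequality for sums of self-adjoint operators with intrinsic dimension (Tropp/Minsker type, as in \cite{features}). The intrinsic dimension of $\mathcal{L}_\lambda^{-1}\mathcal{L}$ is $\mathcal{N}(\lambda)$, which produces the logarithm $\beta_\infty$. The factor $p$ comes from bounding the variance proxy by $p\,\kappa^2/\lambda$ through the $p$ summands, or alternatively from a union over the $i$-components. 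I expect this operator Bernstein step, and in particular tracking the vector-valued structure and the $p$-dependence, to be the delicate part.
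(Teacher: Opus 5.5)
Your proposal follows the paper's proof essentially step for step. The shared ingredients are: the reproducing property to pass from the $\mathcal{H}_M$-norm to the sup-norm; the identity $\|\mathcal{S}_M^*g\|_{\mathcal{H}_M}=\|\mathcal{L}_M^{1/2}g\|_{L^2(\rho_\mathcal{U})}$; insertion of $\mathcal{L}_{M,\lambda}^{r\wedge\frac12}$ with the split at $r=\tfrac12$; Proposition~\ref{ineqvolkan}; the Cordes inequality (Proposition~\ref{ineq2}); and the event of Proposition~\ref{OPbound5}, which the paper derives as you sketch, from the operator Bernstein bound $E_2$ with $p$ entering through the variance proxy.

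The only differences are absolute constants, and they do not matter. Under $M\ge 8p\kappa^2\beta_\infty/\lambda$, $E_2$ gives $\tfrac12+\tfrac{1}{6p}\le\tfrac23$, not your threshold $\tfrac12$; the paper uses $\tfrac34$ and the factor $2$ from Proposition~\ref{OPbound1}. Your route therefore yields the bounds with $4$ in place of $2$, and, like the paper, it tacitly needs $\kappa\ge1$ to absorb the powers of $\kappa$. The paper reaches the constant $2$ only by rewriting $\|\mathcal{L}_M^{\frac12}\phi_\lambda(\mathcal{L}_M)\mathcal{L}_{M,\lambda}^{r\wedge\frac12}\|$ as $\|\mathcal{L}_M^{\frac12+(r\wedge\frac12)}\phi_\lambda(\mathcal{L}_M)\|$, which is not an identity. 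Your bound $2D\lambda^{-(\frac12-r)^+}$ for that factor is the correct one, and the resulting factor-$2$ difference is absorbed into $C_\bullet$ downstream.
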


\begin{proof}
Since $F_\lambda^* \in \mathcal{H}_M$, we obtain from the reproducing property and the definition $G_\rho = \mathcal{L}^r H$ that, for any $x \in \mathcal{X}$,
\begin{align*}
\|F_\lambda^*(x)\|_\mathcal{Y}
&= \| K_{M,x}^* F_\lambda^* \|_\mathcal{Y}
\;\le\; \kappa\,\|F_\lambda^*\|_{\mathcal{H}_M}
= \kappa\,\|\mathcal{S}^*_M \phi_\lambda(\mathcal{L}_M) G_\rho\|_{\mathcal{H}_M} \\
&= \kappa\,\|\mathcal{L}_M^{\frac{1}{2}} \phi_\lambda(\mathcal{L}_M) \mathcal{L}^r H \|_{L^2(\rho_x)}.
\end{align*}
Using $\|H\|_{L^2(\rho_x)} \le R$, we find
\begin{align}
\|F_\lambda^*\|_\infty
&\le \kappa\,R\,\|\mathcal{L}_M^{\frac{1}{2}}\phi_\lambda(\mathcal{L}_M)\mathcal{L}_{M,\lambda}^{(r\wedge\frac{1}{2})}\|\,
      \|\mathcal{L}_{M,\lambda}^{-(r\wedge\frac{1}{2})}\mathcal{L}^r\|
= \kappa\,R\, (I)\,(II), \label{fbound}\\[4pt]
\|F_\lambda^*\|_{\mathcal{H}_M}
&\le R\,\|\mathcal{L}_M^{\frac{1}{2}}\phi_\lambda(\mathcal{L}_M)\mathcal{L}_{M,\lambda}^{(r\wedge\frac{1}{2})}\|\,
      \|\mathcal{L}_{M,\lambda}^{-(r\wedge\frac{1}{2})}\mathcal{L}^r\|
= R\, (I)\,(II). \label{fbound2}
\end{align}

\paragraph{Step (I).}  
By Proposition~\ref{ineqvolkan},
\begin{align*}
I
&=\bigl\|\mathcal{L}_M^{\frac{1}{2}+(r\wedge\frac{1}{2})}\phi_\lambda(\mathcal{L}_M)\bigr\|
\le
\begin{cases}
D, & r \ge \frac{1}{2},\\[3pt]
D\,\lambda^{r-\frac{1}{2}}, & r < \frac{1}{2},
\end{cases}\\[3pt]
&\le D\,\lambda^{-(\frac{1}{2}-r)^+}.
\end{align*}

\paragraph{Step (II).}  
From Propositions~\ref{ineq2} and~\ref{OPbound5}, with probability at least $1-\delta$, we obtain
\begin{align*}
II
&=
\begin{cases}
\|\mathcal{L}_{M,\lambda}^{-\frac{1}{2}}\mathcal{L}_{\lambda}^{r}\|
   \le \|\mathcal{L}_{M,\lambda}^{-\frac{1}{2}}\mathcal{L}_{\lambda}^{\frac{1}{2}}\|
      \|\mathcal{L}^{r-\frac{1}{2}}\|
   \le 2\,\kappa^{2r-1}, & r \ge \tfrac{1}{2},\\[8pt]
\|\mathcal{L}_{M,\lambda}^{-r}\mathcal{L}_{\lambda}^{r}\|
   \le \|\mathcal{L}_{M,\lambda}^{-\frac{1}{2}}\mathcal{L}_{\lambda}^{\frac{1}{2}}\|^{2r}
   \le 4^r \le 2, & r < \tfrac{1}{2},
\end{cases}\\[3pt]
&\le 2\,\kappa^{2r}.
\end{align*}

Combining the bounds on $I$ and $II$ in~\eqref{fbound} and~\eqref{fbound2} yields
\begin{align*}
\|F_\lambda^*\|_\infty &\le 2\,\kappa^{2r+1}\,R\,D\,\lambda^{-(\frac{1}{2}-r)^+},\\[3pt]
\|F_\lambda^*\|_{\mathcal{H}_M} &\le 2\,\kappa^{2r}\,R\,D\,\lambda^{-(\frac{1}{2}-r)^+}.
\end{align*}
\end{proof}

\begin{proposition}
\label{OPbound1}
Let $\mathcal{H}$ be a separable Hilbert space and let $A$ and $B$  be two bounded self-adjoint positive linear operators on $\mathcal{H}$ and $\lambda>0$. Then

$$
\left\|A_\lambda^{-\frac{1}{2}}B_\lambda ^{\frac{1}{2}}\right\| \leq(1-c)^{-\frac{1}{2}}, \quad \left\|A_\lambda ^{\frac{1}{2}}B_\lambda ^{-\frac{1}{2}}\right\| \leq (1+c)^{\frac{1}{2}},
$$
with
$$
c=\left\|B_\lambda ^{-\frac{1}{2}}(A-B)B_\lambda ^{-\frac{1}{2}}\right\|.
$$
\end{proposition}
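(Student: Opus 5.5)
The idea is to factor $A_\lambda$ through $B_\lambda$ and reduce both claims to norm bounds on the operator $I+T$, where
\[
T := B_\lambda^{-\frac{1}{2}}(A-B)B_\lambda^{-\frac{1}{2}}, \qquad \|T\| = c .
\]
Since $B$ is positive and $\lambda>0$, we have $B_\lambda \ge \lambda I$. Hence $B_\lambda^{\pm\frac{1}{2}}$ are bounded, self-adjoint and invertible, and $T$ is a bounded self-adjoint operator. We then write
\[
A_\lambda = B_\lambda + (A-B) = B_\lambda^{\frac{1}{2}}\,(I+T)\,B_\lambda^{\frac{1}{2}} ,
\]
which is an exact algebraic identity. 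Equivalently, $B_\lambda^{-\frac{1}{2}}A_\lambda B_\lambda^{-\frac{1}{2}} = I+T$.

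\textbf{Second bound.} We use the $C^*$-identity $\|X\|^2 = \|X^*X\|$ with $X = A_\lambda^{\frac{1}{2}}B_\lambda^{-\frac{1}{2}}$. This gives
\[
\bigl\|A_\lambda^{\frac{1}{2}}B_\lambda^{-\frac{1}{2}}\bigr\|^2 = \bigl\|B_\lambda^{-\frac{1}{2}}A_\lambda B_\lambda^{-\frac{1}{2}}\bigr\| = \|I+T\| \le 1+c .
\]
Taking square roots yields $(1+c)^{\frac{1}{2}}$.

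\textbf{First bound.} Here we take $X = A_\lambda^{-\frac{1}{2}}B_\lambda^{\frac{1}{2}}$, which is well defined because $A_\lambda \ge \lambda I$. Then
\[
\bigl\|A_\lambda^{-\frac{1}{2}}B_\lambda^{\frac{1}{2}}\bigr\|^2 = \bigl\|B_\lambda^{\frac{1}{2}}A_\lambda^{-1}B_\lambda^{\frac{1}{2}}\bigr\| .
\]
Inverting the factorization gives
\[
B_\lambda^{\frac{1}{2}}A_\lambda^{-1}B_\lambda^{\frac{1}{2}} = \bigl(B_\lambda^{-\frac{1}{2}}A_\lambda B_\lambda^{-\frac{1}{2}}\bigr)^{-1} = (I+T)^{-1} .
\]

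\textbf{Main point requiring care.} The only step that needs attention is bounding $\|(I+T)^{-1}\|$. The bound is meaningful only for $c<1$, which is implicitly assumed in the statement (in all applications $c\le \tfrac12$ or similar, from concentration). Under $c<1$, self-adjointness of $T$ with $\|T\|=c$ gives $I+T \ge (1-c)I > 0$ in the operator order. By the spectral theorem, $\|(I+T)^{-1}\| \le (1-c)^{-1}$; alternatively, a Neumann series gives the same bound. Taking square roots gives $(1-c)^{-\frac{1}{2}}$.

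As a consistency check, note that $I+T = B_\lambda^{-\frac{1}{2}}A_\lambda B_\lambda^{-\frac{1}{2}}$ is positive and invertible regardless of $c$. So the inverse exists in any case, and the assumption $c<1$ enters only to make the quantitative bound $(1-c)^{-1}$ valid.
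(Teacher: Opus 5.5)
Your argument is correct and matches the paper's: the second bound is exactly the paper's computation $\|A_\lambda^{\frac12}B_\lambda^{-\frac12}\|^2=\|B_\lambda^{-\frac12}A_\lambda B_\lambda^{-\frac12}\|\le \|I\|+c$, and for the first bound the paper only cites Proposition~8 of \citet{features}, which rests on the same inversion $\|A_\lambda^{-\frac12}B_\lambda^{\frac12}\|^2=\|(B_\lambda^{-\frac12}A_\lambda B_\lambda^{-\frac12})^{-1}\|$ that you carry out. You are also right that the first bound needs the unstated hypothesis $c<1$; this holds wherever the paper uses the result (e.g.\ $c\le\frac34$ in the proof of Proposition~\ref{OPbound5}).
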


\begin{proof}
The proof for the first inequality can for example be found in \cite{features} (Proposition 8). Using simple calculations the second inequality follows from 
\begin{align*}
\left\|(A+\lambda I)^{\frac{1}{2}}(B+\lambda I)^{-\frac{1}{2}}\right\|^2&=\left\|(B+\lambda I)^{-\frac{1}{2}}(A+\lambda I)(B+\lambda I)^{-\frac{1}{2}}\right\|\\
&\leq \left\|(B+\lambda I)^{-\frac{1}{2}}(A-B)(B+\lambda I)^{-\frac{1}{2}}\right\|+\|I\|\leq 1+c.
\end{align*}

\end{proof}

\begin{proposition}[\cite{features} (Lemma 9)]
\label{OPbound3}
For any $M\geq8\kappa^4\|\mathcal{L}\|^{-1}\log^2 \frac{2}{\delta}$  we have with probability at least $1-\delta$
$$
\|\mathcal{L}_M\|\geq\frac{1}{2}\|\mathcal{L}\|.
$$
\end{proposition}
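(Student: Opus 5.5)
The plan is to reduce the claim to a deviation bound for $\mathcal{L}_M$ around its mean $\mathcal{L}$, via the triangle inequality
\[
\|\mathcal{L}_M\| \;\ge\; \|\mathcal{L}\| - \|\mathcal{L}-\mathcal{L}_M\| .
\]
It therefore suffices to show that, with probability at least $1-\delta$, $\|\mathcal{L}-\mathcal{L}_M\|\le \tfrac12\|\mathcal{L}\|$ as soon as $M\ge 8\kappa^4\|\mathcal{L}\|^{-1}\log^2\tfrac{2}{\delta}$.

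First I would write $\mathcal{L}_M=\frac1M\sum_{m=1}^M T_{\omega_m}$, where for $\omega\in\Omega$
\[
T_\omega G \;:=\; \int_{\mathcal{U}}\sum_{i=1}^p \varphi_i(\cdot,\omega)\,\langle \varphi_i(u,\omega),G(u)\rangle_{\mathcal{V}}\;\rho_{\mathcal{U}}(du).
\]
Each $T_\omega$ is a positive trace-class operator on $L^2(\mathcal{U},\rho_{\mathcal{U}})$. By \eqref{eq:kernel-rep}, $\mathbb{E}_\pi T_\omega=\mathcal{L}$, and by \eqref{eq:RFbound}
\[
\|T_\omega\|_{HS}\le \operatorname{tr}(T_\omega)=\int\sum_{i=1}^p\|\varphi_i(u,\omega)\|_{\mathcal{V}}^2\,d\rho_{\mathcal{U}}\le\kappa^2
\]
almost surely. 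The sum over $i\le p$ is harmless here, because \eqref{eq:RFbound} already bounds the full sum. So $\mathcal{L}_M-\mathcal{L}$ is an average of i.i.d., centred, bounded random elements of the Hilbert space of Hilbert–Schmidt operators.

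Next I would apply a Bernstein/Pinelis-type inequality for such Hilbert-space valued averages, as in \cite{features}. This requires a bound on the second moment. Using positivity, $\mathbb{E}\|T_\omega\|^2\le \kappa^2\,\mathbb{E}\|T_\omega\|$, and I would try to bound the variance proxy by $\kappa^2\|\mathcal{L}\|$ rather than by the cruder $\kappa^4$. One route is an operator Bernstein with $\mathbb{E}T_\omega^2\preceq\kappa^2\mathcal{L}$; another is to follow \cite{features}, Lemma~9 verbatim, since their argument only uses $\mathbb{E}T_\omega=\mathcal{L}$, positivity, and the a.s. bound $\kappa^2$, all of which hold here. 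The resulting bound has the form
\[
\|\mathcal{L}-\mathcal{L}_M\|\lesssim \frac{\kappa^2\log\frac2\delta}{M}+\sqrt{\frac{\kappa^2\|\mathcal{L}\|\log\frac2\delta}{M}} .
\]

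Finally I would insert the assumption on $M$ and use $\|\mathcal{L}\|\le\kappa^2$ (so that $\kappa^4\ge\kappa^2\|\mathcal{L}\|$) together with $\log\tfrac2\delta\ge\log 2$. With these, both terms are at most $\tfrac14\|\mathcal{L}\|$, and the conclusion follows.

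The main obstacle is getting the dependence $\|\mathcal{L}\|^{-1}$ rather than $\|\mathcal{L}\|^{-2}$. A naive Hoeffding bound in Hilbert–Schmidt norm gives deviation $\kappa^2\sqrt{\log(2/\delta)/M}$, which would force $M\gtrsim\kappa^4\|\mathcal{L}\|^{-2}$. So the variance step has to exploit positivity: either an operator Bernstein inequality with variance $\kappa^2\mathcal{L}$, or a reduction to the top eigendirection of $\mathcal{L}$. In the latter, $\langle \mathcal{L}_M e,e\rangle$ is an average of scalars in $[0,\kappa^2]$ with mean $\|\mathcal{L}\|$, and scalar Bernstein gives $\langle\mathcal{L}_M e,e\rangle\ge\tfrac12\|\mathcal{L}\|$ directly. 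I would try this scalar reduction first, since it is dimension-free and yields the stated constant with room to spare. It then remains to match the exact constant $8$ and the $\log^2$ factor, which is routine bookkeeping.
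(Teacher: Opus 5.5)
Your skeleton (compare $\mathcal{L}_M=\frac1M\sum_m T_{\omega_m}$ with its mean $\mathcal{L}$) is the paper's. The concentration step differs, and here your diagnosis is sharper than the paper's own argument. The paper invokes $E_6$ of Proposition~\ref{OPbound2}, a Hilbert--Schmidt bound with $B=V=\kappa^2$, and asserts that $M\ge 8\kappa^4\|\mathcal{L}\|^{-1}\log^2\frac{2}{\delta}$ gives $\|\mathcal{L}-\mathcal{L}_M\|_{HS}\le\frac12\|\mathcal{L}\|$. Your remark about the naive Hoeffding bound applies exactly to this step. At that value of $M$ the term $\frac{2\kappa^2}{\sqrt M}\log\frac2\delta$ equals $\|\mathcal{L}\|^{1/2}/\sqrt2$, which is $\le\frac12\|\mathcal{L}\|$ only when $\|\mathcal{L}\|\ge 2$.

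Your scalar reduction is the right replacement. Take $e$ a unit top eigenvector of $\mathcal{L}$ (when $\mathcal{L}\neq0$). Then $X_m=\langle T_{\omega_m}e,e\rangle\in[0,\kappa^2]$ are i.i.d. with mean $\|\mathcal{L}\|$, and $\|\mathcal{L}_M\|\ge\frac1M\sum_m X_m$. The multiplicative Chernoff bound gives failure probability at most $\exp\bigl(-M\|\mathcal{L}\|/(8\kappa^2)\bigr)$.

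Discard the other two routes you list. By Jensen, $\mathbb{E}\|T_\omega\|\ge\|\mathcal{L}\|$, so a Hilbert-space Bernstein bound only has variance $\kappa^2\operatorname{tr}\mathcal{L}$. The operator Bernstein inequality (Proposition~\ref{OPbound0} with $\mathcal{V}=\kappa^2\mathcal{L}$) carries $\log\frac{4\operatorname{tr}\mathcal{L}}{\|\mathcal{L}\|\delta}$ instead of $\log\frac2\delta$.

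The genuine gap is the final step you call routine bookkeeping. Your argument needs $M\ge 8\kappa^2\|\mathcal{L}\|^{-1}\log\frac1\delta$. The stated hypothesis $M\ge 8\kappa^4\|\mathcal{L}\|^{-1}\log^2\frac2\delta$ implies this only if $\kappa^2\log^2\frac2\delta\ge\log\frac1\delta$. The inequality $\|\mathcal{L}\|\le\kappa^2$ you plan to use points the wrong way and supplies no lower bound on $\kappa$.

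No choice of constants repairs this, because the statement as written is false for small $\kappa$. The event $\|\mathcal{L}_M\|\ge\frac12\|\mathcal{L}\|$ is invariant under $\varphi_i\mapsto c\,\varphi_i$, whereas the threshold scales like $c^2$. Concretely, take $\mathcal{V}=\mathbb{R}$, $p=1$, $\Omega=\{0,1\}$ with $\pi(\{1\})=\frac12$, $\varphi(u,1)=\kappa$ and $\varphi(u,0)=0$. Then $\|\mathcal{L}\|=\kappa^2/2$ and $\|\mathcal{L}_M\|=\kappa^2 N/M$ with $N\sim\mathrm{Bin}(M,\frac12)$. The hypothesis reads $M\ge 16\kappa^2\log^2\frac2\delta$, which permits $M=1$ for $\kappa^2=10^{-3}$ and $\delta=\frac1{10}$. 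With $M=1$ the event has probability only $\frac12$.

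You therefore have two options. Either add a normalization such as $\kappa\ge1$; then $\log\frac1\delta\le\log^2\frac2\delta$ for all $\delta\in(0,1)$, and your Chernoff bound closes with the constant $8$ exactly. Or state and prove the scale-invariant version with threshold $8\kappa^2\|\mathcal{L}\|^{-1}\log\frac1\delta$.
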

\begin{proof}
For $M\geq8\kappa^4\|\mathcal{L}\|^{-1}\log^2 \frac{2}{\delta}$ we have from Proposition \ref{OPbound2} ($E_6$) that with probability at least $1-\delta$, $\left\|\mathcal{L}-\mathcal{L}_M\right\|_{H S}\leq \frac{1}{2}\|\mathcal{L}\|$ 
and therefore
$$
\|\mathcal{L}_M\|  \geq \|\mathcal{L}\|-\left\|\mathcal{L}-\mathcal{L}_M\right\|_{H S}\geq \frac{1}{2} \|\mathcal{L}\|.
$$
\end{proof}

\begin{proposition}
\label{OPbound5}
Providing Assumption \ref{ass:kernel} 
we have for any $M\geq \frac{8 p\kappa^2 \beta_\infty}{\lambda}$ \\where $\beta_\infty=\log \frac{4 \kappa^2(\mathcal{N}_{\mathcal{L}}(\lambda)+1)}{\delta\|\mathcal{L}\|} $ with probability at least $1-\delta$
$$
\left\|\mathcal{L}_{M,\lambda}^{-\frac{1}{2}}\mathcal{L}_{\lambda}^{\frac{1}{2}}\right\| \leq 2, \quad \left\|\mathcal{L}_{M,\lambda}^{\frac{1}{2}}\mathcal{L}_{\lambda}^{-\frac{1}{2}}\right\| \leq 2.
$$

\end{proposition}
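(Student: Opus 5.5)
We reduce both bounds to a single concentration estimate: control of the relative perturbation
\[
c_M := \bigl\|\mathcal{L}_\lambda^{-\frac{1}{2}}(\mathcal{L}_M-\mathcal{L})\mathcal{L}_\lambda^{-\frac{1}{2}}\bigr\|.
\]
Once we know $c_M \le \tfrac12$ (or better, $\le \tfrac34$), we apply Proposition~\ref{OPbound1} with $A=\mathcal{L}_M$ and $B=\mathcal{L}$. This gives
\[
\bigl\|\mathcal{L}_{M,\lambda}^{-\frac12}\mathcal{L}_\lambda^{\frac12}\bigr\|\le (1-c_M)^{-1/2}\le \sqrt2\le 2,
\qquad
\bigl\|\mathcal{L}_{M,\lambda}^{\frac12}\mathcal{L}_\lambda^{-\frac12}\bigr\|\le (1+c_M)^{1/2}\le 2 .
\]
The whole task is therefore to show that $c_M\le \tfrac12$ with probability at least $1-\delta$ whenever $M\ge 8p\kappa^2\beta_\infty/\lambda$.

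\textbf{Writing $\mathcal{L}_M$ as an average.} By Assumption~\ref{ass:kernel}, $\mathcal{L}_M=\frac1M\sum_{m=1}^M \zeta_m$ with
\[
\zeta_m := \sum_{i=1}^p \varphi_i(\cdot,\omega_m)\otimes\varphi_i(\cdot,\omega_m),
\]
where each $\varphi_i(\cdot,\omega)$ is viewed as an element of $L^2(\mathcal{U},\rho_\mathcal{U};\mathcal{V})$ and the tensor is taken in $L^2$. Concretely, $\zeta_m G(u)=\sum_i\varphi_i(u,\omega_m)\int\langle\varphi_i(\tilde u,\omega_m),G(\tilde u)\rangle\,d\rho_\mathcal{U}(\tilde u)$. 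The operators $\zeta_m$ are i.i.d., positive and self-adjoint, with $\mathbb{E}\zeta_m=\mathcal{L}$.

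Set $X_m:=\mathcal{L}_\lambda^{-1/2}\zeta_m\mathcal{L}_\lambda^{-1/2}$. We apply a Bernstein inequality for sums of i.i.d.\ self-adjoint operators with intrinsic dimension (Tropp / Minsker; the variant used in \cite{features}, Proposition~6) to $\frac1M\sum_m X_m - \mathcal{L}_\lambda^{-1/2}\mathcal{L}\mathcal{L}_\lambda^{-1/2}$. This requires three ingredients.

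\textbf{(i) Uniform bound.} We need $\|X_m\|\le \mathcal{F}_\infty(\lambda)$. Since $\|X_m\|\le \sum_i\|\mathcal{L}_\lambda^{-1/2}\varphi_i(\cdot,\omega)\|_{L^2}^2$ and $\|\varphi_i(\cdot,\omega)\|_{L^2}^2\le \sup_u\|\varphi_i(u,\omega)\|^2$, the bound~\eqref{eq:RFbound} yields
\[
\|X_m\|\le \frac{\kappa^2}{\lambda}.
\]
To keep the explicit factor $p$ of the statement, we can instead bound each summand separately by $\kappa^2/\lambda$, giving $\|X_m\|\le p\kappa^2/\lambda$. This cruder bound is the source of the $p$ in the threshold on $M$.

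\textbf{(ii) Variance.} Since $X_m$ is positive, $\mathbb{E}X_m^2\preceq \|X_m\|_\infty\,\mathbb{E}X_m$. Hence
\[
\mathbb{E}X_m^2\preceq \frac{p\kappa^2}{\lambda}\,\mathcal{L}_\lambda^{-1}\mathcal{L}.
\]

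\textbf{(iii) Intrinsic dimension.} The variance proxy $\mathcal{L}_\lambda^{-1}\mathcal{L}$ has norm $\|\mathcal{L}\|/(\|\mathcal{L}\|+\lambda)$ and trace $\mathcal{N}_\mathcal{L}(\lambda)$. Its intrinsic dimension is therefore at most $(\mathcal{N}_\mathcal{L}(\lambda)+1)\,\kappa^2/\|\mathcal{L}\|$ (using $\lambda\le1$, $\|\mathcal{L}\|\le\kappa^2$), which produces exactly $\beta_\infty$.

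\textbf{Concluding.} Bernstein then gives, with probability at least $1-\delta$,
\[
c_M\le \frac{2\beta_\infty p\kappa^2}{3M\lambda}+\sqrt{\frac{2\beta_\infty p\kappa^2}{M\lambda}} .
\]
Under $M\ge 8p\kappa^2\beta_\infty/\lambda$, the first term is at most $1/12$ and the second at most $1/2$, so $c_M\le 7/12<1$. Both displayed bounds then follow from Proposition~\ref{OPbound1}, since $(1-7/12)^{-1/2}<2$ and $(1+7/12)^{1/2}<2$.

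\textbf{Main obstacle.} The delicate step is the Bernstein inequality in the operator setting. Here the $\zeta_m$ act on $L^2(\mathcal{U},\rho_\mathcal{U};\mathcal{V})$ rather than on a scalar RKHS, so we must verify that they are trace class and self-adjoint. This holds: each $\zeta_m$ is a finite sum of rank-one operators in $L^2$. We also need the intrinsic-dimension constant to land precisely in $\beta_\infty$, and this is exactly where the factor $p$ and the $\kappa^2/\|\mathcal{L}\|$ normalisation enter. If $\mathcal{L}$ is only known to have small norm, then $\beta_\infty$ must absorb the ratio $\kappa^2/\|\mathcal{L}\|$, as in the definition given.
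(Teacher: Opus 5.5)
Your proposal is correct and is essentially the paper's proof: the paper gets $c_M\le\tfrac34$ from the event $E_2$ of Proposition~\ref{OPbound2}, which is proved with the operator Bernstein inequality of Proposition~\ref{OPbound0} just as you sketch, and then concludes with Proposition~\ref{OPbound1}. The paper's variance proxy $\frac{p\kappa^2}{\lambda}\mathcal{L}\mathcal{L}_\lambda^{-1}$ comes from the convexity bound $(\sum_{i=1}^p A_i)^2\preceq p\sum_{i=1}^p A_i^2$ applied to the $p$ rank-one pieces of $X_m$, whereas your $X_m^2\preceq\|X_m\|X_m$ with $\|X_m\|\le\kappa^2/\lambda$ shows the factor $p$ is not actually needed; one small fix: the intrinsic-dimension bound $\mathcal{N}_{\mathcal{L}}(\lambda)(\|\mathcal{L}\|+\lambda)\le\kappa^2(\mathcal{N}_{\mathcal{L}}(\lambda)+1)$ follows from $\lambda\mathcal{N}_{\mathcal{L}}(\lambda)\le\operatorname{tr}\mathcal{L}\le\kappa^2$, not from $\lambda\le 1$.
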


\begin{proof}
From Proposition \ref{OPbound2} ($E_2$) we have for any $\lambda>0$ ,
\begin{align}
\left\|\mathcal{L}_{\lambda}^{-\frac{1}{2}}(\mathcal{L}_M-\mathcal{L})\mathcal{L}_{\lambda}^{-\frac{1}{2}}\right\|\leq \frac{4 \kappa^2 \beta_\infty}{3M \lambda}+\sqrt{\frac{2 p\kappa^2 \beta_\infty}{M\lambda}}.
\end{align}
From $M\geq \frac{8 p\kappa^2 \beta_\infty}{\lambda}$ we therefore obtain
\begin{align}
\left\|\mathcal{L}_{\lambda}^{-\frac{1}{2}}(\mathcal{L}_M-\mathcal{L})\mathcal{L}_{\lambda}^{-\frac{1}{2}}\right\|\leq \frac{3}{4}.
\end{align}

The result now follows from Proposition \ref{OPbound1}
\end{proof}

\begin{proposition}
\label{OPbound6}
Providing Assumption \ref{ass:kernel} we have for  any $n\geq \frac{8\kappa^2 \tilde\beta}{\lambda}$ with\\ $\tilde{\beta}:= \log \frac{4 \kappa^2(\left(1+2\log\frac{2}{\delta}\right)4\mathcal{N}_{\mathcal{L}}(\lambda)+1)}{\delta\|\mathcal{L}\|}$ and $M\geq \frac{8 p\kappa^2 \beta_\infty}{\lambda}\vee 8\kappa^4\|\mathcal{L}\|^{-1}\log^2 \frac{2}{\delta}$, where $\beta_\infty=\log \frac{4 \kappa^2(\mathcal{N}_{\mathcal{L}}(\lambda)+1)}{\delta\|\mathcal{L}\|} $  that with probability at least $1-4\delta$
$$ 
\left\|\widehat{\Sigma}_{M,\lambda}^{-\frac{1}{2}}\Sigma_{M,\lambda}^{\frac{1}{2}}\right\| \leq 2, \quad  \left\|\widehat{\Sigma}_{M,\lambda}^{\frac{1}{2}}\Sigma_{M,\lambda}^{-\frac{1}{2}}\right\| \leq 2 .
$$

\end{proposition}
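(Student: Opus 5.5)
The plan is to reduce the statement to a relative perturbation bound and then invoke Proposition~\ref{OPbound1}, in the same way as in the proof of Proposition~\ref{OPbound5}. We take $A=\widehat\Sigma_M$ and $B=\Sigma_M$, both positive self-adjoint operators on $\mathcal{H}_M$. It suffices to show that, with probability at least $1-4\delta$,
\[
c:=\bigl\|\Sigma_{M,\lambda}^{-\frac{1}{2}}(\widehat\Sigma_M-\Sigma_M)\Sigma_{M,\lambda}^{-\frac{1}{2}}\bigr\|\le \tfrac{3}{4}.
\]
Proposition~\ref{OPbound1} then gives $\|\widehat\Sigma_{M,\lambda}^{-\frac{1}{2}}\Sigma_{M,\lambda}^{\frac{1}{2}}\|\le (1/4)^{-1/2}=2$ and $\|\widehat\Sigma_{M,\lambda}^{\frac{1}{2}}\Sigma_{M,\lambda}^{-\frac{1}{2}}\|\le (7/4)^{1/2}\le 2$.

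To bound $c$, we condition on the random features $\{\omega_m\}$. Then $\widehat\Sigma_M-\Sigma_M=\frac1n\sum_{j}(\xi_j-\mathbb{E}\xi_j)$ with $\xi_j=\Sigma_{M,\lambda}^{-1/2}K_{M,u_j}K_{M,u_j}^*\Sigma_{M,\lambda}^{-1/2}$, which are i.i.d.\ in the data. Assumption~\ref{ass:kernel} gives $\|K_{M,u}\|^2\le\kappa^2$, so $\|\xi_j\|\le \kappa^2/\lambda$. In addition, $\mathbb{E}\xi_j^2\preceq (\kappa^2/\lambda)\,\Sigma_{M,\lambda}^{-1}\Sigma_M$, whose trace is the random-feature effective dimension $\mathcal{N}_{\mathcal{L}_M}(\lambda)=\operatorname{tr}(\mathcal{L}_M\mathcal{L}_{M,\lambda}^{-1})$. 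Here we use Proposition~\ref{eq4} to pass between $\Sigma_M=\mathcal{S}_M^*\mathcal{S}_M$ and $\mathcal{L}_M=\mathcal{S}_M\mathcal{S}_M^*$. A Bernstein inequality for self-adjoint operators with intrinsic dimension (the one behind Proposition~\ref{OPbound2}, applied with respect to the data) then yields, with probability $1-\delta$,
\[
c\le \frac{4\kappa^2\beta_M}{3n\lambda}+\sqrt{\frac{2\kappa^2\beta_M}{n\lambda}},\qquad \beta_M=\log\frac{4\kappa^2(\mathcal{N}_{\mathcal{L}_M}(\lambda)+1)}{\delta\|\mathcal{L}_M\|}.
\]

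The main obstacle is replacing the random quantities $\mathcal{N}_{\mathcal{L}_M}(\lambda)$ and $\|\mathcal{L}_M\|$ by their population counterparts, so that $\beta_M\le\tilde\beta$. For the norm, Proposition~\ref{OPbound3} gives $\|\mathcal{L}_M\|\ge\frac12\|\mathcal{L}\|$ with probability $1-\delta$ once $M\ge 8\kappa^4\|\mathcal{L}\|^{-1}\log^2\frac2\delta$. For the effective dimension, we first invoke Proposition~\ref{OPbound5}, which holds with probability $1-\delta$ under $M\ge 8p\kappa^2\beta_\infty/\lambda$, to write
\[
\mathcal{N}_{\mathcal{L}_M}(\lambda)=\operatorname{tr}(\mathcal{L}_{M,\lambda}^{-1/2}\mathcal{L}_M\mathcal{L}_{M,\lambda}^{-1/2})\le \|\mathcal{L}_{M,\lambda}^{-1/2}\mathcal{L}_\lambda^{1/2}\|^2\operatorname{tr}(\mathcal{L}_\lambda^{-1/2}\mathcal{L}_M\mathcal{L}_\lambda^{-1/2}),
\]
and then control $\operatorname{tr}(\mathcal{L}_\lambda^{-1/2}\mathcal{L}_M\mathcal{L}_\lambda^{-1/2})$ by a scalar Bernstein bound over the features. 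That scalar quantity has mean $\mathcal{N}_{\mathcal{L}}(\lambda)$, and its summands are bounded by $p\kappa^2/\lambda$, which is presumably the content of Proposition~\ref{prop:effecdim2}. The result, with probability $1-\delta$, is $\mathcal{N}_{\mathcal{L}_M}(\lambda)\le(1+2\log\frac2\delta)\,4\,\mathcal{N}_{\mathcal{L}}(\lambda)$; the factor $4$ comes from $\|\mathcal{L}_{M,\lambda}^{-1/2}\mathcal{L}_\lambda^{1/2}\|^2\le4$. This matches the form of $\tilde\beta$. The constant $\|\mathcal{L}\|$ versus $\frac12\|\mathcal{L}\|$ would need a factor of $2$ absorbed inside the logarithm; I expect this bookkeeping to require care but no new ideas.

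With $\beta_M\le\tilde\beta$ in hand, the assumption $n\ge 8\kappa^2\tilde\beta/\lambda$ makes the Bernstein bound at most $\frac16+\frac12<\frac34$. We then collect the four events: the data Bernstein bound, Proposition~\ref{OPbound3}, Proposition~\ref{OPbound5}, and the trace concentration. Combining them via the union bound and Proposition~\ref{conditioning}, which handles the conditioning on the features, gives probability at least $1-4\delta$, as claimed.
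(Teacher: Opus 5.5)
Your proposal is correct and matches the paper's proof essentially step for step. The paper likewise bounds $c$ through the event $E_1$ of Proposition~\ref{OPbound2}, a Bernstein bound over the data conditional on the features. It replaces $\mathcal{N}_{\mathcal{L}_M}(\lambda)$ and $\|\mathcal{L}_M\|$ via Proposition~\ref{prop:effecdim2}, which is built from Proposition~\ref{OPbound5} plus a feature concentration exactly as you unpack it, and via Proposition~\ref{OPbound3}. It then finishes with Proposition~\ref{OPbound1} and the $\delta+2\delta+\delta$ count through Proposition~\ref{conditioning}.

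The factor $2$ you flag from $\|\mathcal{L}_M\|\ge\frac12\|\mathcal{L}\|$ is silently dropped in the paper's claim $\beta_M\le\tilde\beta$, so your caution is warranted. It is absorbed by adjusting a constant, e.g.\ $8\to16$ in the condition on $n$, using $\beta_M\le\tilde\beta+\log 2\le 2\tilde\beta$.
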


Note that we use this bound for the variance term in~\eqref{usebound6}. 
Furthermore, a short calculation shows that the above assumption on $n$ is indeed satisfied under the sample size condition stated in Proposition~\ref{mainprop}, namely,
\begin{align*}
n \;\ge\; C_\bullet\,\log^{3(2r+b)}\!\bigl(\delta^{-1}\bigr)\,\lambda^{-(2r+b)},
\qquad 
n \;\ge\; n_0 := \exp\!\left(\tfrac{2r+b}{\,2r+b-1\,}\right).
\end{align*}

\begin{proof}

From Proposition \ref{OPbound2} ($E_1$) we have for any $\lambda>0$  with probability at least $1-\delta$,
\begin{align}
\left\|\Sigma_{M,\lambda}^{-\frac{1}{2}}\left(\widehat{\Sigma}_{M}-\Sigma_{M}\right) \Sigma_{M,\lambda}^{-\frac{1}{2}}\right\|&\leq\frac{4 \kappa^2 \beta_M}{3n \lambda}+\sqrt{\frac{2 \kappa^2 \beta_M}{n\lambda}},\label{e2bb}
\end{align}
with $\beta_M=\log \frac{4 \kappa^2(\mathcal{N}_{\mathcal{L}_M}(\lambda)+1)}{\delta\|\mathcal{L}_M\|}$ . 
For $M\geq \frac{8 p\kappa^2 \beta_\infty}{\lambda}$ we obtain from Proposition \ref{prop:effecdim2} that with probability at least $1-2\delta$,
\begin{align}
\mathcal{N}_{\mathcal{L}_{M}}(\lambda)\leq  \left(1+2\log\frac{2}{\delta}\right)4\mathcal{N}_{\mathcal{L}}(\lambda). \label{boundE24}
\end{align}
From Proposition \ref{OPbound3} we have with probability $1-\delta,$

\begin{align}
\|\mathcal{L}_M\|\geq\frac{1}{2}\|\mathcal{L}\|. \label{boundE56}
\end{align}

Note that the bounds of \eqref{boundE24} and \eqref{boundE56} imply $\beta_M\leq\tilde{\beta}= \log \frac{4 \kappa^2(\left(1+2\log\frac{2}{\delta}\right)4\mathcal{N}_{\mathcal{L}}(\lambda)+1)}{\delta\|\mathcal{L}\|}$ . Using this together with $n\geq \frac{8\kappa^2 \tilde\beta}{\lambda}$ we obtain for \eqref{e2bb}

\begin{align}
\left\|\Sigma_{M,\lambda}^{-\frac{1}{2}}\left(\widehat{\Sigma}_{M}-\Sigma_{M}\right) \Sigma_{M,\lambda}^{-\frac{1}{2}}\right\|&\leq\frac{4 \kappa^2 \beta_M}{3n \lambda}+\sqrt{\frac{2 \kappa^2 \beta_M}{n\lambda}}\\
&\leq\frac{4 \kappa^2 \tilde\beta}{3n \lambda}+\sqrt{\frac{2 \kappa^2 \tilde\beta}{n\lambda}}\leq \frac{3}{4}.
\end{align}
Note that from Proposition \ref{conditioning} we have that the above inequality holds with probability at least $1-4\delta$.
The result now follows from Proposition \ref{OPbound1}
\end{proof}

\begin{proposition}
\label{OPbound7}
Providing Assumption \ref{ass:kernel} we have for any  
\begin{align*}
M\geq 
\begin{cases}
\frac{8 p\kappa^2 \beta_\infty}{\lambda} & r\in\left(0,\frac{1}{2}\right)\\
\frac{(8 p\kappa^2 \beta_\infty)\vee C_1^{\frac{1}{r}}}{\lambda}\vee \frac{C_2}{\lambda^{1+b(2r-1)} } & r\in\left[\frac{1}{2},1\right] \\
\frac{C_3}{\lambda^{2r}} & r \in(1,\infty)
\end{cases}
\end{align*}
with probability at least $1-3\delta$,
$$
\left\|\mathcal{L}_{M,\lambda}^{-(r\vee1)}\mathcal{L}_{\lambda}^{r}\right\| \leq \frac{3}{\lambda^{(1-r)^+}},
$$
where  $C_1=2(4\kappa\log\frac{2}{\delta})^{2r-1}(8p\kappa^2\beta_\infty)^{1-r}$ ,  $C_2=4(4c_b\kappa^2\log\frac{2}{\delta})^{2r-1}(8p\kappa^2\beta_\infty)^{2-2r}$, \\ $C_3:= 4\kappa^4C_{\kappa,r}^2\log^2\frac{2}{\delta}$ and with $C_{\kappa,r}$ from Proposition \ref{ineq1}.
\end{proposition}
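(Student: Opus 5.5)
We argue separately in the three regimes of $r$, as the case distinction in the hypothesis on $M$ suggests. In every case we start from $\mathcal{L}_{\lambda}^{r}=\mathcal{L}_{\lambda}^{r\wedge 1}\,\mathcal{L}_{\lambda}^{(r-1)^+}$ and try to trade powers of $\mathcal{L}_{\lambda}$ for powers of $\mathcal{L}_{M,\lambda}$. The basic tool is Proposition~\ref{OPbound5}: for $M\ge 8p\kappa^2\beta_\infty/\lambda$ it gives $\|\mathcal{L}_{M,\lambda}^{-1/2}\mathcal{L}_\lambda^{1/2}\|\le 2$ with probability at least $1-\delta$.

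\textbf{Case $r\in(0,\tfrac12)$.} Here $r\vee 1=1$ and $(1-r)^+=1-r$. We split
\[
\|\mathcal{L}_{M,\lambda}^{-1}\mathcal{L}_\lambda^{r}\|\le \|\mathcal{L}_{M,\lambda}^{-1/2}\|\,\|\mathcal{L}_{M,\lambda}^{-1/2}\mathcal{L}_\lambda^{r}\|\le \lambda^{-1/2}\,\|\mathcal{L}_{M,\lambda}^{-1/2}\mathcal{L}_\lambda^{1/2}\|\,\|\mathcal{L}_\lambda^{r-1/2}\| .
\]
This gives $\le 2\lambda^{-1/2}\lambda^{r-1/2}=2\lambda^{-(1-r)}$, so the claim holds with probability at least $1-\delta$.

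\textbf{Case $r>1$.} Now we need $\|\mathcal{L}_{M,\lambda}^{-r}\mathcal{L}_\lambda^r\|\le 3$. We write $\mathcal{L}_\lambda^r=\mathcal{L}_{M,\lambda}^r+(\mathcal{L}_\lambda^r-\mathcal{L}_{M,\lambda}^r)$, which gives
\[
\|\mathcal{L}_{M,\lambda}^{-r}\mathcal{L}_\lambda^r\|\le 1+\lambda^{-r}\|\mathcal{L}_\lambda^r-\mathcal{L}_{M,\lambda}^r\|\le 1+\lambda^{-r}C_{\kappa,r}\|\mathcal{L}-\mathcal{L}_M\|,
\]
using Proposition~\ref{ineq1}(ii), since both operators are bounded by $\kappa^2+1$. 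A Hilbert--Schmidt Bernstein/Hoeffding bound (event $E_6$ of Proposition~\ref{OPbound2}) yields $\|\mathcal{L}-\mathcal{L}_M\|_{HS}\le 2\kappa^2\log(2/\delta)/\sqrt M$. The term is then $\le 2$ exactly when $M\ge 4\kappa^4C_{\kappa,r}^2\log^2(2/\delta)\lambda^{-2r}=C_3\lambda^{-2r}$, which explains the constant $C_3$.

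\textbf{Case $r\in[\tfrac12,1]$.} This is the main obstacle, and it is where $C_1,C_2$ and the $b$-dependence enter. We must show $\|\mathcal{L}_{M,\lambda}^{-1}\mathcal{L}_\lambda^r\|\le 3\lambda^{r-1}$. The plan is to write $\mathcal{L}_\lambda^r=\mathcal{L}_\lambda^{1/2}\mathcal{L}_\lambda^{r-1/2}$, insert $\mathcal{L}_{M,\lambda}^{r-1/2}$, and split:
\[
\mathcal{L}_{M,\lambda}^{-1}\mathcal{L}_\lambda^{r}=\mathcal{L}_{M,\lambda}^{-1}\mathcal{L}_{M,\lambda}^{r-1/2}\mathcal{L}_\lambda^{1/2}+\mathcal{L}_{M,\lambda}^{-1}\bigl(\mathcal{L}_\lambda^{r-1/2}-\mathcal{L}_{M,\lambda}^{r-1/2}\bigr)\mathcal{L}_\lambda^{1/2},
\]
using commutativity within each family. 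The first term is $\le \lambda^{r-1}\|\mathcal{L}_{M,\lambda}^{-1/2}\mathcal{L}_\lambda^{1/2}\|\le 2\lambda^{r-1}$. For the second we must not pay full powers of $\lambda^{-1}$; a crude bound via Proposition~\ref{ineq1}(i) and $\|\mathcal{L}-\mathcal{L}_M\|\lesssim M^{-1/2}$ would only give the requirement $M\gtrsim\lambda^{-2r}$ instead of the sharper $\lambda^{-1-b(2r-1)}$.

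To get the sharper requirement we would first try to bound the difference in the weighted norm $\|\mathcal{L}_\lambda^{-1/2}(\mathcal{L}-\mathcal{L}_M)\|_{HS}$. Bernstein with variance controlled by $\mathcal{N}(\lambda)\le c_b\lambda^{-b}$ gives roughly $\sqrt{p\kappa^2 c_b\lambda^{-b}\log(2/\delta)/M}+\kappa^2\log(2/\delta)/(M\sqrt\lambda)$. We would then combine this with the Cordes inequality (Proposition~\ref{ineq2}) and the interpolation Proposition~\ref{ineq3} with exponent $\sigma=2r-1\in[0,1]$. This interpolates between the bound $\|\mathcal{L}_{M,\lambda}^{-1/2}\mathcal{L}_\lambda^{1/2}\|\le 2$ (exponent $2-2r$, producing the factor $(8p\kappa^2\beta_\infty)^{2-2r}$) and the effective-dimension-weighted difference (exponent $2r-1$, producing $(4c_b\kappa^2\log\frac2\delta)^{2r-1}$). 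Requiring the resulting second term to be $\le\lambda^{r-1}$ yields conditions of the form $M\ge C_2\lambda^{-1-b(2r-1)}$, together with $M\ge C_1^{1/r}/\lambda$ from the lower-order Bernstein term. Collecting the events (Proposition~\ref{OPbound5}, the weighted concentration, and $E_6$) via Proposition~\ref{conditioning} gives probability at least $1-3\delta$ and the bound $3\lambda^{-(1-r)}$.
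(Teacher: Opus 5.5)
Your cases $r<\tfrac12$ and $r>1$ are fine. For $r>1$ you reproduce the paper's argument (Proposition~\ref{ineq1}(ii) plus $E_6$). For $r<\tfrac12$, your split through $\|\mathcal{L}_{M,\lambda}^{-1/2}\|\,\|\mathcal{L}_{M,\lambda}^{-1/2}\mathcal{L}_\lambda^{1/2}\|\,\|\mathcal{L}_\lambda^{r-1/2}\|$ is a valid shortcut that avoids the Cordes inequality the paper uses.

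The gap is in the case $r\in[\tfrac12,1]$, which is the heart of the statement and which you only sketch. Your decomposition leaves the term $\mathcal{L}_{M,\lambda}^{-1}(\mathcal{L}_\lambda^{r-1/2}-\mathcal{L}_{M,\lambda}^{r-1/2})\mathcal{L}_\lambda^{1/2}$, which is a difference of \emph{fractional powers}. None of the tools you name controls it in the weighted norms that carry $\mathcal{N}(\lambda)$:
\begin{itemize}
\item The Cordes inequality (Proposition~\ref{ineq2}) and Proposition~\ref{ineq3} are statements about products, not differences.
\item Proposition~\ref{ineq1}(i) only gives $\|\mathcal{L}-\mathcal{L}_M\|^{r-1/2}$ in the unweighted norm. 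Together with $\|\mathcal{L}_{M,\lambda}^{-1}\|\le\lambda^{-1}$ this leads to a requirement like $M\gtrsim\lambda^{-4r/(2r-1)}$, not $\lambda^{-2r}$ as you say, and it degenerates as $r\downarrow\tfrac12$.
\end{itemize}
The interpolation you describe also cannot work as stated. One of its endpoints is $\|\mathcal{L}_{M,\lambda}^{-1/2}\mathcal{L}_\lambda^{1/2}\|\le 2$, which has no decay in $M$. Raising it to the power $2-2r$ therefore cannot produce the factor $(8p\kappa^2\beta_\infty/(M\lambda))^{1-r}$ hidden in $C_1,C_2$. You are reading these constants off the statement rather than deriving them.

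The missing idea is to avoid fractional-power differences altogether by using the resolvent identity $\mathcal{L}_{M,\lambda}^{-1}-\mathcal{L}_\lambda^{-1}=\mathcal{L}_{M,\lambda}^{-1}(\mathcal{L}_\lambda-\mathcal{L}_{M,\lambda})\mathcal{L}_\lambda^{-1}$. Pulling out $\|\mathcal{L}_{M,\lambda}^{-1/2}\|\,\|\mathcal{L}_{M,\lambda}^{-1/2}\mathcal{L}_\lambda^{1/2}\|\le 2\lambda^{-1/2}$ gives
\[
\|\mathcal{L}_{M,\lambda}^{-1}\mathcal{L}_\lambda^{r}\|\le\lambda^{r-1}+2\lambda^{-1/2}\,\|\mathcal{L}_\lambda^{-1/2}(\mathcal{L}_M-\mathcal{L})\mathcal{L}_\lambda^{r-1}\|.
\]
Now only the first-order difference appears. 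Proposition~\ref{ineq3} with $A=\mathcal{L}_\lambda^{-1/2}(\mathcal{L}_M-\mathcal{L})$, $B=\mathcal{L}_\lambda^{-1/2}$ and $\sigma=2-2r$ yields
\[
\|\mathcal{L}_\lambda^{-1/2}(\mathcal{L}_M-\mathcal{L})\mathcal{L}_\lambda^{r-1}\|\le\|\mathcal{L}_\lambda^{-1/2}(\mathcal{L}_M-\mathcal{L})\|^{2r-1}\,\|\mathcal{L}_\lambda^{-1/2}(\mathcal{L}_M-\mathcal{L})\mathcal{L}_\lambda^{-1/2}\|^{2-2r}.
\]
Here \emph{both} factors decay in $M$:
\begin{itemize}
\item The first is controlled by $E_5$. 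Its $\mathcal{N}(\lambda)$-term, with $\mathcal{N}(\lambda)\le c_b\lambda^{-b}$, gives the condition $M\ge C_2\lambda^{-1-b(2r-1)}$. Its lower-order term gives $M\ge C_1^{1/r}/\lambda$.
\item The second is controlled by $E_2$, which supplies the powers of $8p\kappa^2\beta_\infty/(M\lambda)$.
\end{itemize}
The three events used are Proposition~\ref{OPbound5}, $E_2$ and $E_5$; $E_6$ plays no role in this case.
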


\begin{proof} For the proof we need to differ between the following three cases:
 \begin{itemize}

 \item CASE ($r\leq \frac{1}{2}$) :  From Proposition \ref{OPbound5} we have with probability at least $1-\delta$,
\begin{align*}
\left\|\mathcal{L}_{M,\lambda}^{-(r\vee1)}\mathcal{L}_{\lambda}^{r}\right\|&=\left\|\mathcal{L}_{M,\lambda}^{-1}\mathcal{L}_{\lambda}^{r}\right\| \\
&\leq \lambda^{r-1}\left\|\mathcal{L}_{M,\lambda}^{-r}\mathcal{L}_{\lambda}^{r}\right\|\\
&\leq \lambda^{r-1}\left\|\mathcal{L}_{M,\lambda}^{-\frac{1}{2}}\mathcal{L}_{\lambda}^{\frac{1}{2}}\right\|^{2r} \leq 2^{2r}\lambda^{r-1}\leq 3\lambda^{r-1}.
\end{align*}

\item CASE ($r\in[\frac{1}{2},1]$) :  Using $\left\|\mathcal{L}_{\lambda}^{-1}\mathcal{L}_{\lambda}^{r}\right\|\leq \lambda^{r-1}$ we have

\begin{align}
\left\|\mathcal{L}_{M,\lambda}^{-(r\vee1)}\mathcal{L}_{\lambda}^{r}\right\|&=\left\|\mathcal{L}_{M,\lambda}^{-1}\mathcal{L}_{\lambda}^{r}\right\| \\
&\leq\left\|\left(\mathcal{L}_{M,\lambda}^{-1}-\mathcal{L}_{\lambda}^{-1}\right)\mathcal{L}_{\lambda}^{r}\right\|+\lambda^{r-1}.\label{OP8i}
\end{align}

For the norm of the last inequality we have from the algebraic identity 
\\$A^{-1}-B^{-1}=A^{-1}(A-B)B^{-1}$:
\begin{align*}
\left\|\left(\mathcal{L}_{M,\lambda}^{-1}-\mathcal{L}_{\lambda}^{-1}\right)\mathcal{L}_{\lambda}^{r}\right\|
=\left\|\mathcal{L}_{M,\lambda}^{-1}\left(\mathcal{L}_{M,\lambda}-\mathcal{L}_{\lambda}\right)\mathcal{L}_{\lambda}^{r-1}\right\|
\end{align*}
and from Proposition \ref{OPbound5} we further have with probability at least $1-\delta$,
\begin{align*}
&\left\|\mathcal{L}_{M,\lambda}^{-1}\left(\mathcal{L}_{M,\lambda}-\mathcal{L}_{\lambda}\right)\mathcal{L}_{\lambda}^{r-1}\right\|\\
&\leq \lambda^{-\frac{1}{2}}\left\|\mathcal{L}_{M,\lambda}^{-\frac{1}{2}}\mathcal{L}_{\lambda}^{\frac{1}{2}}\right\| \left\|\mathcal{L}_{\lambda}^{-\frac{1}{2}}\left(\mathcal{L}_{M,\lambda}-\mathcal{L}_{\lambda}\right)\mathcal{L}_{\lambda}^{r-1}\right\|\\
&\leq 2\lambda^{-\frac{1}{2}} \left\|\mathcal{L}_{\lambda}^{-\frac{1}{2}}\left(\mathcal{L}_{M,\lambda}-\mathcal{L}_{\lambda}\right)\mathcal{L}_{\lambda}^{r-1}\right\|.\\
\end{align*}
Since $\sigma:=2-2r\leq 1$ we have from Proposition \ref{ineq3}  
\begin{align*}
& \left\|\mathcal{L}_{\lambda}^{-\frac{1}{2}}\left(\mathcal{L}_{M,\lambda}-\mathcal{L}_{\lambda}\right)\mathcal{L}_{\lambda}^{r-1}\right\|\\
 &\leq \left\|\mathcal{L}_{\lambda}^{-\frac{1}{2}}\left(\mathcal{L}_{M,\lambda}-\mathcal{L}_{\lambda}\right)\right\|^{2r-1}  \left\|\mathcal{L}_{\lambda}^{-\frac{1}{2}}\left(\mathcal{L}_{M,\lambda}-\mathcal{L}_{\lambda}\right)\mathcal{L}_{\lambda}^{-\frac{1}{2}}\right\|^{2-2r}.
\end{align*}
Using Proposition \ref{OPbound2} ($E_2$ and $E_5$) we have for the last expression with probability at least $1-2\delta$,
\begin{align*}
\leq \left[\left(\frac{2 
\kappa}{\sqrt{\lambda}M}+\sqrt{\frac{4 \kappa^2 \mathcal{N}_{\mathcal{L}}(\lambda) }{M}}\right)\log \frac{2}{\delta}\right]^{2r-1}
\left(\frac{4 \kappa^2 \beta_\infty}{3M \lambda}+\sqrt{\frac{2 p\kappa^2 \beta_\infty}{M\lambda}}\right)^{2-2r},
\end{align*}
with $\beta_\infty=\log \frac{4 \kappa^2(\mathcal{N}_{\mathcal{L}}(\lambda)+1)}{\delta\|\mathcal{L}\|}$. Using this together with  $M\geq \frac{8 p\kappa^2 \beta_\infty}{\lambda} $ and the simple inequality $(a+b)^{2r-1}\leq a^{2r-1} + b^{2r-1}$ we have
\begin{align*}
&\left\|\left(\mathcal{L}_{M,\lambda}^{-1}-\mathcal{L}_{\lambda}^{-1}\right)\mathcal{L}_{\lambda}^{r}\right\|\\
&\leq 2\lambda^{-\frac{1}{2}}\left(\frac{4 
\kappa \log \frac{2}{\delta}}{\sqrt{\lambda}M}+\sqrt{\frac{4 \kappa^2 \mathcal{N}_{\mathcal{L}}(\lambda) \log \frac{2}{\delta}}{M}}\right)^{2r-1}
\left(\frac{4 \kappa^2 \beta_\infty}{3M \lambda}+\sqrt{\frac{2 p\kappa^2 \beta_\infty}{M\lambda}}\right)^{2-2r}\\
& \leq2\lambda^{-\frac{1}{2}} \left(\frac{4 
\kappa \log \frac{2}{\delta}}{\sqrt{\lambda}M}+\sqrt{\frac{4 \kappa^2 \mathcal{N}_{\mathcal{L}}(\lambda) \log \frac{2}{\delta}}{M}}\right)^{2r-1}
\left(2\sqrt{\frac{2p\kappa^2 \beta_\infty}{M\lambda}}\right)^{2-2r}\\
&\leq\frac{C_1}{\lambda M^r}+\sqrt{\frac{C_2'\mathcal{N}_{\mathcal{L}}(\lambda)^{2r-1}}{M\lambda^{3-2r}}}\leq\frac{C_1}{\lambda M^r}+\sqrt{\frac{C_2}{M\lambda^{3-2r+b(2r-1)}}},
\end{align*}
where we used in the last inequality the assumption $\mathcal{N}_{\mathcal{L}}(\lambda)\leq c_b \lambda^{-b}$  and set \\$C_1=2(4\kappa\log\frac{2}{\delta})^{2r-1}(8p\kappa^2\beta_\infty)^{1-r}$ ,  $C_2=4(4c_b\kappa^2\log^2\frac{2}{\delta})^{2r-1}(8p\kappa^2\beta_\infty)^{2-2r}$.
From $M\geq \frac{C_1^{\frac{1}{r}}}{\lambda} $ and $M\geq \frac{C_2}{\lambda^{1+b(2r-1)} }$ we obtain 
\begin{align*}
\left\|\left(\mathcal{L}_{M,\lambda}^{-1}-\mathcal{L}_{\lambda}^{-1}\right)\mathcal{L}_{\lambda}^{r}\right\|
\leq\frac{C_1}{\lambda M^r}+\sqrt{\frac{C_2}{M\lambda^{3-2r+b(2r-1)}}}
\leq 2\lambda^{r-1}.
\end{align*}
Plugging this bound into \eqref{OP8i} leads to
\begin{align*}
\left\|\mathcal{L}_{M,\lambda}^{-(r\vee1)}\mathcal{L}_{\lambda}^{r}\right\|\leq3 \lambda^{r-1}.
\end{align*}

\item CASE $(r\geq 1)$ :  
\begin{align*}
\left\|\mathcal{L}_{M,\lambda}^{-(r\vee1)}\mathcal{L}_{\lambda}^{r}\right\|&=\left\|\mathcal{L}_{M,\lambda}^{-r}\mathcal{L}_{\lambda}^{r}\right\|\\
&\leq 1+ \left\|\mathcal{L}_{M,\lambda}^{-r}\left(\mathcal{L}_{\lambda}^{r}-\mathcal{L}_{M,\lambda}^{r}\right)\right\|\\
&\leq 1+ \lambda^{-r}C_{\kappa, r}\left\|\mathcal{L}_{\lambda}-\mathcal{L}_{M,\lambda}\right\|,
\end{align*}
where $C_{\kappa, r}$ is defined in Proposition \ref{ineq1}.
From the bound of Proposition \ref{OPbound2} ($E_6$) we therefore obtain
\begin{align*}
&\left\|\mathcal{L}_{M,\lambda}^{-(r\vee1)}\mathcal{L}_{\lambda}^{r}\right\|\\
&\leq 1+ \lambda^{-r}C_{1, r}  \left(\frac{2 \kappa^2}{M} + \frac{2 \kappa^2}{\sqrt{M}} \right)\log \frac{2}{\delta}\leq 3,
\end{align*}
where used $M\geq C_3 \lambda^{-2r}$, with $C_3:= 4\kappa^4C_{1,r}^2\log^2\frac{2}{\delta}$.
\end{itemize} 
\end{proof}

\begin{proposition}
\label{OPbound8}
  For any $q>0$, $n\geq \max\{8C_{\kappa,q}^2 \kappa^4\lambda^{-2q} \log^2\frac{2}{\delta},\, 100\kappa^2 \mathcal{N}_{\mathcal{L}}(\lambda)\lambda^{-1} \log^3 \frac{2}{\delta}\}$ and $M\geq \frac{8 p\kappa^2 \beta_\infty}{\lambda}$ we have with probability at least $1-\delta$,

\begin{align*}
\left\|\widehat{\Sigma}_{M,\lambda}^{-q} \Sigma_{M,\lambda}^{q}\right\|\leq 2.
\end{align*}
\end{proposition}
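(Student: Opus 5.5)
We want $\|\widehat{\Sigma}_{M,\lambda}^{-q}\Sigma_{M,\lambda}^{q}\|\le 2$ for arbitrary $q>0$. The proof splits into the cases $q\le \tfrac12$ and $q>\tfrac12$. The first case is a direct consequence of the square-root bound; the second requires a perturbation argument for powers.

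\emph{Case $q\le\tfrac12$.} First we establish the concentration bound
\[
\bigl\|\Sigma_{M,\lambda}^{-\frac12}(\widehat\Sigma_M-\Sigma_M)\Sigma_{M,\lambda}^{-\frac12}\bigr\|\le \tfrac34 .
\]
This follows exactly as in the proof of Proposition~\ref{OPbound6}, i.e.\ from Proposition~\ref{OPbound2} ($E_1$). The requirement $n\ge 100\kappa^2\mathcal{N}_{\mathcal{L}}(\lambda)\lambda^{-1}\log^3\frac2\delta$ is there to dominate $\beta_M$. Along the way we replace $\mathcal{N}_{\mathcal{L}_M}(\lambda)$ by a constant multiple of $\mathcal{N}_{\mathcal{L}}(\lambda)$ via Proposition~\ref{prop:effecdim2}, which is available since $M\ge 8p\kappa^2\beta_\infty/\lambda$. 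Proposition~\ref{OPbound1} then gives $\|\widehat\Sigma_{M,\lambda}^{-1/2}\Sigma_{M,\lambda}^{1/2}\|\le 2$. The Cordes inequality (Proposition~\ref{ineq2}) with exponent $s=2q\in[0,1]$ yields
\[
\|\widehat\Sigma_{M,\lambda}^{-q}\Sigma_{M,\lambda}^{q}\|\le \|\widehat\Sigma_{M,\lambda}^{-1/2}\Sigma_{M,\lambda}^{1/2}\|^{2q}\le 2^{2q}\le 2 .
\]

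\emph{Case $q>\tfrac12$.} We mimic the case $r\ge1$ of Proposition~\ref{OPbound7}. Write
\[
\widehat\Sigma_{M,\lambda}^{-q}\Sigma_{M,\lambda}^{q}=I+\widehat\Sigma_{M,\lambda}^{-q}\bigl(\Sigma_{M,\lambda}^{q}-\widehat\Sigma_{M,\lambda}^{q}\bigr),
\]
and use $\|\widehat\Sigma_{M,\lambda}^{-q}\|\le\lambda^{-q}$. Proposition~\ref{ineq1} then gives
\[
\|\Sigma_{M,\lambda}^{q}-\widehat\Sigma_{M,\lambda}^{q}\|\le C_{\kappa,q}\|\Sigma_M-\widehat\Sigma_M\|
\]
for $q>1$, since both operators are bounded by $\kappa^2+1$. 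For $q\in(\tfrac12,1]$ the H\"older form in Proposition~\ref{ineq1}(i) would give only $\|\Sigma_M-\widehat\Sigma_M\|^q$, which is insufficient. To avoid this, I would first reduce to $q\ge1$ via Cordes: $\|A^{-q}B^{q}\|\le\|A^{-1}B\|^{q}$. Consequently it suffices to bound $\|\widehat\Sigma_{M,\lambda}^{-1}\Sigma_{M,\lambda}\|$, i.e.\ to prove the claim for $q=1$, and bounding that quantity by $2$ gives $2^q\le 2$ after slightly tightening the constant. Next, a Bernstein/Hoeffding bound in Hilbert--Schmidt norm (as in $E_6$ of Proposition~\ref{OPbound2}, applied to the $n$ i.i.d.\ rank-type operators $K_{M,u_j}K_{M,u_j}^*$, each of norm at most $\kappa^2$) gives
\[
\|\widehat\Sigma_M-\Sigma_M\|_{HS}\le \Bigl(\frac{2\kappa^2}{n}+\frac{2\kappa^2}{\sqrt n}\Bigr)\log\frac2\delta .
\]
Hence the perturbation term is at most $\lambda^{-q}C_{\kappa,q}\cdot 4\kappa^2 n^{-1/2}\log\frac2\delta$. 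This is at most $1$ as soon as $n\ge 8C_{\kappa,q}^2\kappa^4\lambda^{-2q}\log^2\frac2\delta$ (the constant $8$ versus $16$ being absorbed by the precise form of the Hoeffding bound), which is precisely the first condition on $n$.

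The main obstacle is the probabilistic bookkeeping in the first case. The concentration for $\widehat\Sigma_M$ is conditional on the random features $\omega_1,\dots,\omega_M$, so the effective dimension appearing there is that of $\mathcal{L}_M$, not of $\mathcal{L}$. Transferring it requires the event of Proposition~\ref{prop:effecdim2} together with Proposition~\ref{OPbound3}, combined through Proposition~\ref{conditioning}. The stated confidence $1-\delta$ is then obtained by the usual rescaling of $\delta$. I would also check that the $\log^3$ factor in the $n$-condition indeed absorbs the extra $\log\frac2\delta$ factor arising from the bound on $\mathcal{N}_{\mathcal{L}_M}$.
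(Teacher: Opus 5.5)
\textbf{Gap for $q\in(\tfrac12,1)$.} Your case $q\le\tfrac12$ is sound: the square-root bound from Proposition~\ref{OPbound6} followed by Cordes with exponent $2q$ works. Your case $q\ge1$ (perturbation, Proposition~\ref{ineq1}(ii), the $E_7$ bound) is exactly the paper's argument. The problem is $q\in(\tfrac12,1)$. After the Cordes reduction $\|\widehat\Sigma_{M,\lambda}^{-q}\Sigma_{M,\lambda}^{q}\|\le\|\widehat\Sigma_{M,\lambda}^{-1}\Sigma_{M,\lambda}\|^{q}$, you bound the $q=1$ quantity by $1+\lambda^{-1}\|\widehat\Sigma_M-\Sigma_M\|_{HS}$ and invoke $E_7$. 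That requires $n\gtrsim\kappa^4\lambda^{-2}\log^2\frac2\delta$. For $q<1$ the hypotheses supply neither this nor anything implying it:
\begin{itemize}
\item $n\gtrsim\lambda^{-2q}\log^2\frac2\delta$ is weaker, since $\lambda\le1$.
\item $n\gtrsim\mathcal N_{\mathcal L}(\lambda)\lambda^{-1}\log^3\frac2\delta$ is also weaker, since $\mathcal N_{\mathcal L}(\lambda)\le\kappa^2\lambda^{-1}$, and only $\lesssim\lambda^{-b}$ under Assumption~\ref{ass:dim}.
\end{itemize}
So your sentence ``this is precisely the first condition on $n$'' holds only when the original $q$ is at least $1$. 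For $q\in(\tfrac12,1)$ the argument does not prove the proposition under its stated hypotheses.

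\textbf{Missing idea.} The $q=1$ quantity can be controlled using only the effective-dimension condition. Peel off a half power instead of paying a full $\lambda^{-1}$:
\[
\|\widehat\Sigma_{M,\lambda}^{-1}\Sigma_{M,\lambda}\|\le 1+\lambda^{-\frac12}\,\bigl\|\widehat\Sigma_{M,\lambda}^{-\frac12}\Sigma_{M,\lambda}^{\frac12}\bigr\|\,\bigl\|\Sigma_{M,\lambda}^{-\frac12}(\widehat\Sigma_M-\Sigma_M)\bigr\|_{HS}.
\]
Then bound each factor as follows.
\begin{itemize}
\item The middle factor is at most $2$ by Proposition~\ref{OPbound6}.
\item The last factor is controlled by the effective-dimension Bernstein bound $E_3$, of order $\sqrt{\mathcal N_{\mathcal L_M}(\lambda)/n}\,\log\frac2\delta$.
\item Transfer $\mathcal N_{\mathcal L_M}(\lambda)\le(1+2\log\frac2\delta)\,4\mathcal N_{\mathcal L}(\lambda)$ via Proposition~\ref{prop:effecdim2}.
\end{itemize}
This is the paper's proof for every $q<1$. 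It is also where the condition $n\ge100\kappa^2\mathcal N_{\mathcal L}(\lambda)\lambda^{-1}\log^3\frac2\delta$ comes from: one logarithm from the effective-dimension transfer and two from squaring the Bernstein factor. It is not there to dominate $\beta_M$, as you suggest.

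In the paper's application, $q=(r\vee1)-\tfrac12$, and the main sample-size requirement $n\gtrsim\lambda^{-(2r+b)}$ does imply $n\gtrsim\lambda^{-2}$ whenever $r>1$. So your weaker version would suffice for Theorem~\ref{theo1}, but not for the proposition as stated.
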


Note that we use this bound for the variance term in~\eqref{usebound8}, with $q = (r \vee 1) - \tfrac{1}{2}$. 
Furthermore, the above assumption on $n$ is automatically satisfied in the case $q = (r \vee 1) - \tfrac{1}{2}$ (with $r \ge \tfrac{1}{2}$) 
by the sample size condition stated in Proposition~\ref{mainprop}, namely
\[
n \;\ge\; C_\bullet\,\log^{3(2r+b)}\!\bigl(\delta^{-1}\bigr)\,\lambda^{-(2r+b)}.
\]

\begin{proof}
\begin{itemize}
\item \textbf{Case $q<1$:}

From Proposition \ref{OPbound2}($E_3$) we obtain with probability at least $1-\delta$,
\begin{align*}
\left\|\widehat{\Sigma}_{M,\lambda}^{-q} \Sigma_{M,\lambda}^{q}\right\|&=\left\|\widehat{\Sigma}_{M,\lambda}^{-1} \Sigma_{M,\lambda}\right\|^q\\
&\leq\left\|\widehat{\Sigma}_{M,\lambda}^{-1}\left(\widehat{\Sigma}_{M}-\Sigma_{M}\right)\right\|_{HS}+1\\
&\leq\frac{1}{\sqrt{\lambda}} \left\|\widehat{\Sigma}_{M,\lambda}^{-\frac{1}{2}} \Sigma_{M,\lambda}^{\frac{1}{2}}\right\|\left\|\Sigma_{M,\lambda}^{-\frac{1}{2}}\left(\widehat{\Sigma}_{M}-\Sigma_{M}\right)\right\|_{HS}+1\\
&\leq \frac{1}{\sqrt{\lambda}} \left\|\widehat{\Sigma}_{M,\lambda}^{-\frac{1}{2}} \Sigma_{M,\lambda}^{\frac{1}{2}}\right\| \left(\frac{2\kappa}{\sqrt{\lambda} n}+\sqrt{\frac{ 4\kappa^2 \mathcal{N}_{\mathcal{L}_M}(\lambda) }{ n}}\right)\log \frac{2}{\delta}+1.
\end{align*}
We have from Proposition \ref{OPbound6} with probability at least $1-\delta$,
$$ 
\left\|\widehat{\Sigma}_{M,\lambda}^{-\frac{1}{2}}\Sigma_{M,\lambda}^{\frac{1}{2}}\right\| \leq 2
$$
and therefore
\begin{align}
\left\|\widehat{\Sigma}_{M,\lambda}^{-q} \Sigma_{M,\lambda}^{q}\right\|\leq \frac{2}{\sqrt{\lambda}} \left(\frac{2\kappa}{\sqrt{\lambda} n}+\sqrt{\frac{ 4\kappa^2 \mathcal{N}_{\mathcal{L}_M}(\lambda) }{ n}}\right)\log \frac{2}{\delta}+1.\label{ineqT2ii}
\end{align}

From  \ref{prop:effecdim2} we have with probability at least $1-2\delta$,
$$
\mathcal{N}_{\mathcal{L}_{M}}(\lambda)\leq  \left(1+2\log\frac{2}{\delta}\right)4\mathcal{N}_{\mathcal{L}}(\lambda).
$$
Plugging this bound into \eqref{ineqT2ii} leads to
\begin{align}
\left\|\widehat{\Sigma}_{M,\lambda}^{-q} \Sigma_{M,\lambda}^{q}\right\|\leq \frac{2}{\sqrt{\lambda}} \left(\frac{2\kappa}{\sqrt{\lambda} n}+\sqrt{\frac{ 4\kappa^2 \left(1+2\log\frac{2}{\delta}\right)4\mathcal{N}_{\mathcal{L}}(\lambda) }{ n}}\right)\log \frac{2}{\delta}+1\leq 2,
\end{align}

where we used $n\geq 100\kappa^2 \mathcal{N}_{\mathcal{L}}(\lambda)\lambda^{-1} \log^3 \frac{2}{\delta}$ in the last inequality.

\item \textbf{Case $q\geq1$:}   From Proposition \ref{ineq1} and Proposition \ref{OPbound2}($E_7$) we have with probability at least $1-\delta$,

\begin{align*}
\left\|\widehat{\Sigma}_{M,\lambda}^{-q} \Sigma_{M,\lambda}^q\right\|
&\leq\lambda^{-q}\left\|\widehat{\Sigma}_{M}^q-\Sigma_{M}^q\right\|_{HS}+1\\
&\leq  \lambda^{-q}C_{\kappa,q}\left\|\widehat{\Sigma}_{M}-\Sigma_{M}\right\|_{HS}+1\\
&\leq  \lambda^{-q}C_{\kappa,q}\left(\frac{2 \kappa^2}{n} + \frac{2 \kappa^2}{\sqrt{n}} \right)\log \frac{2}{\delta}+1\leq2,
\end{align*}
where we used $n\geq 8C_{\kappa,r}^2 \kappa^4\lambda^{-2q} \log^2\frac{2}{\delta}$ for the last inequality.
\end{itemize}
\end{proof}


\begin{proposition}
\label{prop:effecdim2}
For any $M\geq \frac{8 p\kappa^2 \beta_\infty}{\lambda}$ we have with probability at least $1-2\delta$,
$$
\mathcal{N}_{\mathcal{L}_{M}}(\lambda)\leq  \left(1+2\log\frac{2}{\delta}\right)4\mathcal{N}_{\mathcal{L}}(\lambda).
$$

\end{proposition}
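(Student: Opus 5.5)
The plan is to compare $\mathcal{N}_{\mathcal{L}_M}(\lambda)$ with $\mathcal{N}_{\mathcal{L}}(\lambda)$ through the factorization
\[
\mathcal{L}_M\mathcal{L}_{M,\lambda}^{-1}
=\mathcal{L}_{M,\lambda}^{-1/2}\mathcal{L}_M\mathcal{L}_{M,\lambda}^{-1/2},
\qquad
\mathcal{N}_{\mathcal{L}_M}(\lambda)=\operatorname{tr}\bigl(\mathcal{L}_{M,\lambda}^{-1}\mathcal{L}_M\bigr).
\]
I would insert $\mathcal{L}_\lambda^{1/2}$ on both sides:
\[
\mathcal{N}_{\mathcal{L}_M}(\lambda)
=\operatorname{tr}\Bigl(\bigl(\mathcal{L}_{M,\lambda}^{-1/2}\mathcal{L}_\lambda^{1/2}\bigr)\,\mathcal{L}_\lambda^{-1/2}\mathcal{L}_M\mathcal{L}_\lambda^{-1/2}\,\bigl(\mathcal{L}_\lambda^{1/2}\mathcal{L}_{M,\lambda}^{-1/2}\bigr)\Bigr)
\le \bigl\|\mathcal{L}_{M,\lambda}^{-1/2}\mathcal{L}_\lambda^{1/2}\bigr\|^2\,
\operatorname{tr}\bigl(\mathcal{L}_\lambda^{-1/2}\mathcal{L}_M\mathcal{L}_\lambda^{-1/2}\bigr).
\]
This uses $\operatorname{tr}(BAB^*)\le\|B\|^2\operatorname{tr}(A)$ for $A\ge0$. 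By Proposition~\ref{OPbound5}, with probability at least $1-\delta$ and under the assumed $M\ge 8p\kappa^2\beta_\infty/\lambda$, the prefactor is at most $4$. This accounts for the factor $4$ in the claim.

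It remains to bound the random trace
\[
T_M:=\operatorname{tr}\bigl(\mathcal{L}_\lambda^{-1/2}\mathcal{L}_M\mathcal{L}_\lambda^{-1/2}\bigr)=\frac1M\sum_{m=1}^M \xi_m,
\qquad
\xi_m:=\sum_{i=1}^p\bigl\|\mathcal{L}_\lambda^{-1/2}\varphi_i(\cdot,\omega_m)\bigr\|^2_{L^2(\rho_{\mathcal U})}.
\]
This holds because $\mathcal{L}_M=\frac1M\sum_m\sum_i \varphi_i(\cdot,\omega_m)\otimes\varphi_i(\cdot,\omega_m)$ as an operator on $L^2(\rho_{\mathcal U})$. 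The $\xi_m$ are i.i.d., nonnegative, and have mean $\mathbb{E}\xi_m=\operatorname{tr}(\mathcal{L}_\lambda^{-1/2}\mathcal{L}\mathcal{L}_\lambda^{-1/2})=\mathcal{N}_{\mathcal{L}}(\lambda)$ by the integral representation~\eqref{eq:kernel-rep}. They are also bounded: by~\eqref{eq:RFbound}, $\xi_m\le\lambda^{-1}\sum_i\|\varphi_i(\cdot,\omega_m)\|^2_{L^2}\le\kappa^2/\lambda$. For the variance I would use $\mathbb{E}\xi_m^2\le(\kappa^2/\lambda)\,\mathcal{N}_{\mathcal{L}}(\lambda)$.

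Bernstein's inequality then gives, with probability at least $1-\delta$,
\[
T_M\le \mathcal{N}_{\mathcal{L}}(\lambda)+\frac{2\kappa^2\log\frac2\delta}{3M\lambda}+\sqrt{\frac{2\kappa^2\mathcal{N}_{\mathcal{L}}(\lambda)\log\frac2\delta}{M\lambda}}.
\]
Applying the condition $M\ge 8p\kappa^2\beta_\infty/\lambda$ together with $\beta_\infty\ge\log\frac2\delta$ (up to constants, using $\mathcal{N}_{\mathcal{L}}(\lambda)+1$ in the logarithm), the last two terms are at most $\tfrac{1}{12}+\tfrac12\sqrt{\mathcal{N}_{\mathcal{L}}(\lambda)}$.

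The main obstacle is converting this additive bound into the multiplicative form $(1+2\log\frac2\delta)\mathcal{N}_{\mathcal{L}}(\lambda)$. When $\mathcal{N}_{\mathcal{L}}(\lambda)$ is very small, the constant term need not be dominated by $\mathcal{N}_{\mathcal{L}}(\lambda)$. I would handle this in one of two ways. The first is to use that $\lambda\le\|\mathcal{L}\|$ in the relevant regime, so $\mathcal{N}_{\mathcal{L}}(\lambda)\ge\frac{\|\mathcal{L}\|}{\|\mathcal{L}\|+\lambda}\ge\frac12$. The second, alternatively, is to keep the $\log\frac2\delta$ factor unabsorbed rather than cancelling it against $\beta_\infty$, which gives $T_M\le(1+2\log\frac2\delta)\mathcal{N}_{\mathcal{L}}(\lambda)$ directly via $\sqrt{ab}\le a+b$.

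Finally, I combine the two events with a union bound (Proposition~\ref{conditioning}), which yields total probability at least $1-2\delta$ and the stated inequality.
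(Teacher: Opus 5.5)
Your skeleton is the paper's: the same bound $\mathcal N_{\mathcal L_M}(\lambda)\le\|\mathcal L_{M,\lambda}^{-1/2}\mathcal L_\lambda^{1/2}\|^2\operatorname{tr}(\mathcal L_\lambda^{-1/2}\mathcal L_M\mathcal L_\lambda^{-1/2})$, the factor $4$ from Proposition~\ref{OPbound5}, and a union bound. You differ in how you control the trace. The paper writes it as $\mathcal N_{\mathcal L}(\lambda)+\operatorname{tr}(B)$ with $B=\mathcal L_\lambda^{-1/2}(\mathcal L_M-\mathcal L)\mathcal L_\lambda^{-1/2}$, and then replaces $\operatorname{tr}(B)$ by $\|B\|_{HS}$ via event $E_4$. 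A trace is not controlled by a Hilbert--Schmidt norm in infinite dimensions, so that step is not justified as written. Your scalar Bernstein bound handles $\operatorname{tr}(B)=\frac1M\sum_m(\xi_m-\mathbb E\xi_m)$ directly, using $0\le\xi_m\le\kappa^2/\lambda$ and $\mathbb E\xi_m^2\le\kappa^2\mathcal N_{\mathcal L}(\lambda)/\lambda$. It is simpler and correct, and your constants check out: $\|\mathcal L\|\le\operatorname{tr}\mathcal L\le\kappa^2$ gives $\beta_\infty\ge\log\frac4\delta$, so the deviation is at most $\frac1{12}+\frac12\sqrt{\mathcal N_{\mathcal L}(\lambda)}$.

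For the additive-to-multiplicative step, only your first fix works. With $\lambda\le\|\mathcal L\|$ you get $\mathcal N:=\mathcal N_{\mathcal L}(\lambda)\ge\frac12$. Hence $\frac1{12}+\frac12\sqrt{\mathcal N}\le(\frac16+\frac1{\sqrt2})\mathcal N<2\log 2\,\mathcal N\le 2\log\frac2\delta\,\mathcal N$, and the claim follows.

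Your second fix fails. After $\sqrt{ab}\le a+b$, an additive term of order $\log\frac2\delta$ (inherited from $\frac{2\kappa^2\log(2/\delta)}{3M\lambda}$) remains with no factor $\mathcal N$, and the hypothesis $M\ge 8p\kappa^2\beta_\infty/\lambda$ cannot supply one. No argument can, because the statement is false when $\mathcal N$ is tiny. Take $p=1$, $\mathcal V=\mathbb R$, and $\varphi(u,\omega)=1$ for $\omega\in A$, $0$ otherwise, with $\pi(A)=10^{-5}$. Set $\kappa=\lambda=1$, $\delta=10^{-3}$, and $M=500$, which exceeds $8\beta_\infty\approx 159$. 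With probability about $5\cdot 10^{-3}>2\delta$ some $\omega_m$ lies in $A$. Then $\mathcal N_{\mathcal L_M}(1)$ is at least about $2\cdot10^{-3}$, whereas the right-hand side is about $6.5\cdot10^{-4}$.

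So a hypothesis such as $\lambda\le\|\mathcal L\|$ has to be added to the statement explicitly. This is harmless in the paper's application, where $\lambda_n\to0$. The paper's own step $\|B\|_{HS}\le 2\mathcal N_{\mathcal L}(\lambda)\log\frac2\delta$ tacitly needs the same kind of lower bound on $\mathcal N_{\mathcal L}(\lambda)$.
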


\begin{proof}
\begin{align*}
\mathcal{N}_{\mathcal{L}_{M}}(\lambda)&\leq \text{ Tr}[\mathcal{L}_M\mathcal{L}_{\lambda}^{-1}]\left\|\mathcal{L}_{\lambda}^{\frac{1}{2}}\mathcal{L}_{M,\lambda}^{-\frac{1}{2}}\right\|^2\\
&=\left(\mathcal{N}_{\mathcal{L}}+\text{ Tr}[(\mathcal{L}_M-\mathcal{L})\mathcal{L}_{\lambda}^{-1}]\right)\left\|\mathcal{L}_{\lambda}^{\frac{1}{2}}\mathcal{L}_{M,\lambda}^{-\frac{1}{2}}\right\|^2\\
&=\left(\mathcal{N}_{\mathcal{L}}+\|B\|_{HS}\right)\left\|\mathcal{L}_{\lambda}^{\frac{1}{2}}\mathcal{L}_{M,\lambda}^{-\frac{1}{2}}\right\|^2,
\end{align*}

where $B:=\mathcal{L}_{\lambda}^{-\frac{1}{2}}(\mathcal{L}_M-\mathcal{L})\mathcal{L}_{\lambda}^{-\frac{1}{2}}$.  Proposition \ref{OPbound2}($E_4$) we have with probability at least $1-\delta$,
$$
\|B\|_{HS}\leq   2\left(\frac{2\kappa^2}{\lambda M}+\sqrt{\frac{ \kappa^2 \mathcal{N}_{\mathcal{L}}(\lambda) }{\lambda M}}\right)\log \frac{2}{\delta}.
$$
Using $\lambda> 4\kappa^2M^{-1}$ we obtain
$$
\|B\|_{HS}\leq 2\mathcal{N}_{\mathcal{L}}(\lambda) \log \frac{2}{\delta}.
$$
Further we have from Proposition \ref{OPbound5} with probability at least $1-\delta$,

$$
\left\|\mathcal{L}_{\lambda}^{\frac{1}{2}}\mathcal{L}_{M,\lambda}^{-\frac{1}{2}}\right\|^2\leq4.
$$

To sum up, we obtain 
\begin{align*}
\mathcal{N}_{\mathcal{L}_{M}}(\lambda)\leq\left(\mathcal{N}_{\mathcal{L}}+\|B\|_{HS}\right)\left\|\mathcal{L}_{\lambda}^{\frac{1}{2}}\mathcal{L}_{M,\lambda}^{-\frac{1}{2}}\right\|^2\leq \left(1+2\log\frac{2}{\delta}\right)4\mathcal{N}_{\mathcal{L}}(\lambda).
\end{align*}

\end{proof}


\input{K.U}


%% file: K.U.tex

\subsection{Concentration Inequalities}
\label{concineq}

\begin{proposition}
\label{OPbound0}
Let $\mathcal{X}_1, \cdots, \mathcal{X}_m$ be a sequence of independently and identically distributed selfadjoint Hilbert-Schmidt operators on a separable Hilbert space. Assume that $\mathbb{E}\left[\mathcal{X}_1\right]=0$, and $\left\|\mathcal{X}_1\right\| \leq B$ almost surely for some $B>0$. Let $\mathcal{V}$ be a positive trace-class operator such that $\mathbb{E}\left[\mathcal{X}_1^2\right] \preccurlyeq \mathcal{V}$. Then with probability at least $1-\delta,(\delta \in] 0,1[)$, there holds
$$
\left\|\frac{1}{m} \sum_{i=1}^m \mathcal{X}_i\right\| \leq \frac{2 B \beta}{3 m}+\sqrt{\frac{2\|\mathcal{V}\| \beta}{m}}, \quad \beta=\log \frac{4 \operatorname{tr} \mathcal{V}}{\|\mathcal{V}\| \delta}.
$$
\end{proposition}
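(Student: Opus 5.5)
This is the Bernstein inequality for sums of i.i.d. self-adjoint Hilbert–Schmidt operators with an intrinsic-dimension (effective rank) factor. I will not reprove it from scratch; the plan is the standard matrix-Bernstein route of Tropp, extended to infinite dimensions as in Minsker and Tropp. The approach combines the Laplace-transform method with Lieb's concavity theorem, and replaces the ambient dimension by $\operatorname{tr}\mathcal{V}/\|\mathcal{V}\|$ by means of a convex function that vanishes at the origin.

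\textbf{Step 1 (finite-dimensional reduction).} Let $P_N$ be orthogonal projections onto finite-dimensional subspaces increasing to the whole space. The compressed operators $P_N\mathcal{X}_iP_N$ are i.i.d. and centred, satisfy $\|P_N\mathcal{X}_iP_N\|\le B$, and obey $\mathbb{E}(P_N\mathcal{X}_iP_N)^2\preccurlyeq P_N\mathbb{E}[\mathcal{X}_i^2]P_N\preccurlyeq P_N\mathcal{V}P_N$. Moreover $\operatorname{tr}(P_N\mathcal{V}P_N)\le\operatorname{tr}\mathcal{V}$, and $\|P_N\mathcal{V}P_N\|\uparrow\|\mathcal{V}\|$. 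Since the $\mathcal{X}_i$ are Hilbert–Schmidt, hence compact, we have $\|P_N SP_N\|\to\|S\|$ for $S=\frac1m\sum_i\mathcal{X}_i$. It therefore suffices to prove the bound in finite dimension and pass to the limit, for instance via monotone convergence of the events $\{\|P_NSP_N\|>t\}$.

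\textbf{Step 2 (intrinsic-dimension Laplace bound).} In finite dimension, treat $\lambda_{\max}(S)$ first and obtain $-S$ by symmetry; this doubling produces one of the two factors of $2$ in the constant $4$. Set $\psi(x)=e^{x}-x-1$, which is convex, nonnegative and satisfies $\psi(0)=0$. Markov's inequality then gives
\[
\mathbb{P}(\lambda_{\max}(mS)\ge t)\le \frac{\mathbb{E}\operatorname{tr}\psi(\theta mS)}{\psi(\theta t)}.
\]
Lieb's theorem yields $\mathbb{E}\operatorname{tr}\exp(\theta mS)\le\operatorname{tr}\exp\bigl(\sum_i\log\mathbb{E}e^{\theta\mathcal{X}_i}\bigr)$. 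The Bernstein moment bound gives
\[
\log\mathbb{E}e^{\theta\mathcal{X}_i}\preccurlyeq g(\theta)\,\mathcal{V},\qquad g(\theta)=\frac{\theta^2/2}{1-\theta B/3}.
\]
Because the identity terms cancel in $\operatorname{tr}\psi$, this leads to $\operatorname{tr}\psi(\cdot)\le\operatorname{tr}\bigl(e^{mg(\theta)\mathcal{V}}-I\bigr)$. Finally, the elementary inequality $e^{a x}-1\le \frac{x}{\|\mathcal{V}\|}(e^{a\|\mathcal{V}\|}-1)$ for $0\le x\le\|\mathcal{V}\|$ converts this into the factor $\frac{\operatorname{tr}\mathcal{V}}{\|\mathcal{V}\|}e^{mg(\theta)\|\mathcal{V}\|}$.

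\textbf{Step 3 (optimisation and inversion).} Choose $\theta=t/(m\|\mathcal{V}\|+Bt/3)$, and use $e^{\theta t}/\psi(\theta t)\le 1+3/(\theta t)^2$ to absorb the denominator; this costs the second factor $2$ for relevant $t$. The result is
\[
\mathbb{P}(\|mS\|\ge t)\le \frac{4\operatorname{tr}\mathcal{V}}{\|\mathcal{V}\|}\exp\Bigl(-\frac{t^2/2}{m\|\mathcal{V}\|+Bt/3}\Bigr).
\]
Set the right-hand side equal to $\delta$, solve the quadratic in $t$, and bound its root by $\frac{2B\beta}{3}+\sqrt{2m\|\mathcal{V}\|\beta}$. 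Dividing by $m$ gives the claim.

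\textbf{Main obstacle.} The delicate step is the denominator handling in Step 3. The absorption of $\psi(\theta t)$ is valid only when $\theta t$ is not too small, which in effect requires $\beta\ge1$. For the remaining range the claim holds trivially: there $\frac{4\operatorname{tr}\mathcal{V}}{\|\mathcal{V}\|}\ge4>\delta$ forces $\beta$ to be large enough that this regime is covered, or else one simply cites Minsker (2017) / Tropp (2015, Thm. 7.7.1) as is done in \cite{features}. In the paper I would state the result as a citation and include only the reduction in Step 1.
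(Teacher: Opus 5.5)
Your proposal is correct and takes the same route as the paper, which also does not reprove the result: it cites Tropp's intrinsic-dimension matrix Bernstein inequality and its operator extension in \cite{spectral.rates} (Lemma 26). Your Lieb/$\psi$-function sketch with the finite-dimensional compression is the standard argument behind that citation, and the loose end you flag is vacuous: at the stated threshold one has $\theta t \ge 2\beta \ge 2\log 4 > \sqrt{3}$, so the absorption factor is at most $2$ and the constant $4$ follows directly from the union bound over $\pm S$.
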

\begin{proof}
The proposition was first established for matrices by  \cite{Tropp_2011}. For the general case including operators the proof can for example be found in  \cite{spectral.rates} (see Lemma 26).
\end{proof}

\begin{proposition}
\label{concentrationineq0}
The following concentration result for Hilbert space valued random variables can be found in (Caponnetto and De Vito, 2007 \cite{Caponetto}).\\
\\
Let $w_{1}, \cdots, w_{n}$ be i.i.d random variables in a separable Hilbert space with norm $\|.\|$. Suppose that there are two positive constants $B$ and $\sigma^{2}$ such that
\begin{align}
\mathbb{E}\left[\left\|w_{1}-\mathbb{E}\left[w_{1}\right]\right\|^{l}\right] \leq \frac{1}{2} l ! B^{l-2} V^{2}, \quad \forall l \geq 2 .\label{cons}
\end{align}
Then for any $0<\delta<1 / 2$, the following holds with probability at least $1-\delta$,
$$
\left\|\frac{1}{n} \sum_{k=1}^{n} w_{n}-\mathbb{E}\left[w_{1}\right]\right\| \leq \left(\frac{2B}{n}+\frac{2V}{\sqrt{n}}\right) \log \frac{2}{\delta} .
$$
In particular, (\ref{cons}) holds if
$$
\left\|w_{1}\right\| \leq B / 2 \quad \text { a.s., } \quad \text { and } \quad \mathbb{E}\left[\left\|w_{1}\right\|^{2}\right] \leq V^{2} .
$$
\end{proposition}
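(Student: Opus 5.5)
The plan is to derive the inequality from a Bernstein-type tail bound for sums of independent, centered, Hilbert-space-valued random variables (Pinelis--Sakhanenko), and then invert the tail bound at level $\delta$.

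\textbf{Setup.} Put $\xi_k := w_k - \mathbb{E}[w_1]$ and $S_n := \sum_{k=1}^n \xi_k$. The $\xi_k$ are i.i.d., centered, and by \eqref{cons} satisfy $\mathbb{E}\|\xi_k\|^l \le \tfrac12 l!\,B^{l-2}V^2$ for all $l\ge 2$.

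\textbf{Step 1: exponential moment bound.} Fix $0<\lambda<1/B$. Summing the moment series gives
\[
\mathbb{E}\bigl[e^{\lambda\|\xi\|}-1-\lambda\|\xi\|\bigr]=\sum_{l\ge2}\frac{\lambda^l\,\mathbb{E}\|\xi\|^l}{l!}\le \frac{\lambda^2V^2}{2}\sum_{l\ge 2}(\lambda B)^{l-2}=\frac{\lambda^2V^2}{2(1-\lambda B)}.
\]

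\textbf{Step 2: supermartingale argument (main obstacle).} This is where the Hilbert-space geometry enters. Consider $\Psi(x)=\cosh(\lambda\|x\|)$. Using that a Hilbert space is $2$-smooth (the parallelogram law), together with $\mathbb{E}[\xi_{k+1}\mid\mathcal{F}_k]=0$, one shows
\[
\mathbb{E}\bigl[\Psi(S_k+\xi_{k+1})\mid\mathcal{F}_k\bigr]\le \Psi(S_k)\Bigl(1+\mathbb{E}\bigl[e^{\lambda\|\xi_{k+1}\|}-1-\lambda\|\xi_{k+1}\|\bigr]\Bigr).
\]
Concretely, expand $\cosh$ along the segment from $S_k$ to $S_k+\xi_{k+1}$. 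The first-order term vanishes by centering. The second derivative of $\cosh(\lambda\|x\|)$ in direction $h$ is bounded by $\lambda^2\|h\|^2\cosh(\lambda\|x\|)$. I expect this pointwise second-order control to be the technical heart; for it I would follow Pinelis' original argument.

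\textbf{Step 3: Chernoff bound.} Iterating Step 2 and using $1+x\le e^x$ together with Step 1 gives $\mathbb{E}\Psi(S_n)\le \exp\bigl(n\lambda^2V^2/(2(1-\lambda B))\bigr)$. By Markov's inequality and $e^{\lambda t}\le 2\cosh(\lambda t)$,
\[
\mathbb{P}(\|S_n\|\ge t)\le 2\exp\Bigl(-\lambda t+\frac{n\lambda^2V^2}{2(1-\lambda B)}\Bigr).
\]
Choosing $\lambda=t/(nV^2+Bt)$ yields $\mathbb{P}(\|S_n\|\ge t)\le 2\exp\bigl(-t^2/(2(nV^2+Bt))\bigr)$.

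\textbf{Step 4: inversion at level $\delta$.} Set $L=\log(2/\delta)$ and let $t$ be the positive root of $t^2=2L(nV^2+Bt)$. Then
\[
t=BL+\sqrt{B^2L^2+2nV^2L}\le 2BL+V\sqrt{2nL}.
\]
Dividing by $n$ gives, with probability at least $1-\delta$,
\[
\Bigl\|\frac1n\sum_{k=1}^n w_k-\mathbb{E}[w_1]\Bigr\|\le \frac{2BL}{n}+V\sqrt{\frac{2L}{n}}.
\]
Since $\delta<1/2$, we have $L\ge\log 4>1/2$, hence $\sqrt{2L}\le 2L$. This gives the claimed bound $\bigl(\tfrac{2B}{n}+\tfrac{2V}{\sqrt n}\bigr)\log\tfrac2\delta$.

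\textbf{Step 5: the ``in particular'' clause.} If $\|w_1\|\le B/2$ a.s., then $\|\xi\|\le B$ a.s. Moreover $\mathbb{E}\|\xi\|^2=\mathbb{E}\|w_1\|^2-\|\mathbb{E}w_1\|^2\le V^2$. Therefore
\[
\mathbb{E}\|\xi\|^l\le B^{l-2}\,\mathbb{E}\|\xi\|^2\le B^{l-2}V^2\le \tfrac12 l!\,B^{l-2}V^2,
\]
where the last step uses $l!/2\ge1$ for $l\ge2$. This is exactly \eqref{cons}.
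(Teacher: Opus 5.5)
Your proof is correct. It follows the same route as the source the paper relies on. The paper gives no argument for this proposition and simply cites Caponnetto and De Vito (2007), whose bound comes from inverting the Pinelis--Sakhanenko Bernstein inequality $\mathbb{P}(\|S_n\|\ge t)\le 2\exp\bigl(-t^2/(2(nV^2+Bt))\bigr)$ at level $\delta$, which is exactly your Steps 3--4. Your version makes that citation self-contained through Pinelis's $\cosh(\lambda\|\cdot\|)$ argument, and it shows where the Hilbert structure enters and where the constants $2B$ and $2V$ come from.

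Two small points if you write Step 2 out in full:
\begin{itemize}
\item The second-derivative bound $\lambda^2\|h\|^2\cosh(\lambda\|x+th\|)$ by itself does not let you pull $\Psi(S_k)$ out of the Taylor remainder. You also need $\cosh(\lambda\|x+th\|)\le\cosh(\lambda\|x\|)\,e^{\lambda t\|h\|}$ together with the identity $\int_0^1(1-t)a^2e^{at}\,dt=e^a-1-a$.
\item The Taylor expansion is legitimate along any segment, including through the origin, because $\cosh(\lambda\|y\|)$ is an entire function of $\|y\|^2$ and hence smooth.
\end{itemize}

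Your Step 4 also shows that the hypothesis $\delta<1/2$ is unnecessary. The step $\sqrt{2L}\le 2L$ only needs $L\ge 1/2$, and this holds for every $\delta\in(0,1)$ since $\log(2/\delta)>\log 2>1/2$.
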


\begin{proposition}
\label{OPbound2}
For any $\lambda>0$ define the following events,\vspace{0.4cm}\\

$\begin{aligned}
&E_1=\left\{\left\|\Sigma_{M,\lambda}^{-\frac{1}{2}}\left(\widehat{\Sigma}_{M}-\Sigma_{M}\right) \Sigma_{M,\lambda}^{-\frac{1}{2}}\right\|\leq\frac{4 \kappa^2 \beta_M}{3n \lambda}+\sqrt{\frac{2 \kappa^2 \beta_M}{n\lambda}}\right\},  &&\beta_M=\log \frac{4 \kappa^2(\mathcal{N}_{\mathcal{L}_M}(\lambda)+1)}{\delta\|\mathcal{L}_M\|},\\[7pt]
&E_2=\left\{\left\|\mathcal{L}_{\lambda}^{-\frac{1}{2}}(\mathcal{L}_M-\mathcal{L})\mathcal{L}_ {\lambda}^{-\frac{1}{2}}\right\|\leq \frac{4 \kappa^2 \beta_\infty}{3M \lambda}+\sqrt{\frac{2 p\kappa^2 \beta_\infty}{M\lambda}}\right\}, &&\beta_\infty=\log \frac{4 \kappa^2(\mathcal{N}_{\mathcal{L}}(\lambda)+1)}{\delta\|\mathcal{L}\|}  ,\\
&E_3=\left\{\left\|\Sigma_{M,\lambda}^{-\frac{1}{2}}\left(\widehat{\Sigma}_{M}-\Sigma_{M}\right)\right\|_{HS}\leq  \left(\frac{2\kappa}{\sqrt{\lambda} n}+\sqrt{\frac{ 4\kappa^2 \mathcal{N}_{\mathcal{L}_M}(\lambda) }{ n}}\right)\log \frac{2}{\delta}\right\},\\
&E_4=\left\{\left\|\mathcal{L}_ {\lambda}^{-\frac{1}{2}}(\mathcal{L}_M-\mathcal{L})\mathcal{L}_ {\lambda}^{-\frac{1}{2}}\right\|_{HS}\leq  \left(\frac{4\kappa^2}{\lambda M}+\sqrt{\frac{ 4\kappa^2 \mathcal{N}_{\mathcal{L}}(\lambda) }{\lambda M}}\right)\log \frac{2}{\delta}\right\},\\
\end{aligned}$\\
$\begin{aligned}
&E_5=\left\{\left\|\mathcal{L}_ {\lambda}^{-\frac{1}{2}}\left(\mathcal{L}_M-\mathcal{L}\right)\right\| \leq \left(\frac{2 
\kappa}{\sqrt{\lambda}M}+\sqrt{\frac{4 \kappa^2 \mathcal{N}_{\mathcal{L}}(\lambda) }{M}}\right)\log \frac{2}{\delta}\right\},\\
&E_6=\left\{\left\|\mathcal{L}-\mathcal{L}_M\right\|_{H S} \leq \left(\frac{2 \kappa^2}{M} + \frac{2 \kappa^2}{\sqrt{M}} \right)\log \frac{2}{\delta}\right\}\,,\\
&E_7=\left\{\left\|\widehat{\Sigma}_{M}-\Sigma_{M}\right\|_{HS} \leq \left(\frac{2 \kappa^2}{n} + \frac{2 \kappa^2}{\sqrt{n}} \right)\log \frac{2}{\delta}\right\}\,.
\end{aligned}$\vspace{0.4cm}\\
Providing Assumption \ref{ass:input}  we have for any $\delta \in(0,1)$ that each of the above events holds true with probability at least $1-\delta$  .
\end{proposition}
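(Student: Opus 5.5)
The seven events fall into two groups. The operator-norm events $E_1$ and $E_2$ will follow from the Bernstein inequality for self-adjoint operators, Proposition~\ref{OPbound0}. The Hilbert--Schmidt and vector-norm events $E_3$--$E_7$ will follow from the Hilbert-space Bernstein inequality, Proposition~\ref{concentrationineq0}, applied in the Hilbert space of Hilbert--Schmidt operators. In every case the random quantity is an empirical mean of i.i.d.\ terms. For $E_2,E_4,E_5,E_6$ these are indexed by the features $\omega_m\sim\pi$, since $\mathcal{L}_M=\frac1M\sum_m \mathcal{L}_{\omega_m}$ with $\mathcal{L}_\omega G=\int\sum_i \varphi_i(\cdot,\omega)\langle\varphi_i(u,\omega),G(u)\rangle\,d\rho_\mathcal{U}(u)$ and $\mathbb{E}_\omega\mathcal{L}_\omega=\mathcal{L}$. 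For $E_1,E_3,E_7$ they are indexed by the inputs $u_j$, conditional on $\omega_{1:M}$, since $\widehat\Sigma_M=\frac1n\sum_j K_{M,u_j}K_{M,u_j}^*$ and $\mathbb{E}[\widehat\Sigma_M\mid\omega]=\Sigma_M$. Conditioning is harmless because each bound holds for every realisation of $\omega$ whenever Assumption~\ref{ass:kernel} gives $\|K_M(u,u)\|\le\kappa^2$.

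\emph{Input-indexed events.} For $E_7$, I will set $w_j=K_{M,u_j}K_{M,u_j}^*$. Then $\|w_j\|_{HS}\le\operatorname{tr}K_M(u_j,u_j)\le\kappa^2$, so Proposition~\ref{concentrationineq0} applies with $B=2\kappa^2$ and $V=\kappa^2$. For $E_3$, I will take $w_j=\Sigma_{M,\lambda}^{-1/2}K_{M,u_j}K_{M,u_j}^*$. Its HS-norm is at most $\kappa\,\|\Sigma_{M,\lambda}^{-1/2}K_{M,u_j}\|\le\kappa^2/\sqrt\lambda$, and its second moment satisfies $\mathbb{E}\|w\|_{HS}^2\le\kappa^2\,\mathbb{E}\operatorname{tr}(\Sigma_{M,\lambda}^{-1}K_{M,u}K_{M,u}^*)=\kappa^2\mathcal{N}_{\mathcal{L}_M}(\lambda)$, using that $\Sigma_M$ and $\mathcal{L}_M$ share the same nonzero spectrum (Proposition~\ref{eq4}). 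For $E_1$, I will use $\mathcal{X}_j=\Sigma_{M,\lambda}^{-1/2}(K_{M,u_j}K_{M,u_j}^*-\Sigma_M)\Sigma_{M,\lambda}^{-1/2}$. This has operator norm at most $2\kappa^2/\lambda$, and $\mathbb{E}\mathcal{X}^2\preccurlyeq\frac{\kappa^2}{\lambda}\Sigma_M\Sigma_{M,\lambda}^{-1}=:\mathcal{V}$. This $\mathcal{V}$ has trace $\frac{\kappa^2}{\lambda}\mathcal{N}_{\mathcal{L}_M}(\lambda)$ and norm $\frac{\kappa^2}{\lambda}\frac{\|\mathcal{L}_M\|}{\|\mathcal{L}_M\|+\lambda}$. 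Plugging these into Proposition~\ref{OPbound0} gives the stated $\beta_M$, after the usual simplification $\operatorname{tr}\mathcal{V}/\|\mathcal{V}\|\le(\mathcal{N}+1)\kappa^2/\|\mathcal{L}_M\|$ for $\lambda\le\kappa^2$.

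\emph{Feature-indexed events.} These go the same way with $\mathcal{L}_\omega$ in place of $K_{M,u}K_{M,u}^*$. For $E_6$, $\|\mathcal{L}_\omega\|_{HS}\le\int\sum_i\|\varphi_i(u,\omega)\|^2d\rho_\mathcal{U}\le\kappa^2$, so the argument is immediate. For $E_5$ and $E_4$ I will use $\|\mathcal{L}_\lambda^{-1/2}\mathcal{L}_\omega\|_{HS}\le\kappa\sqrt{\operatorname{tr}(\mathcal{L}_\lambda^{-1}\mathcal{L}_\omega)}$, whose expectation is $\kappa^2\mathcal{N}_\mathcal{L}(\lambda)$, together with the deterministic bounds $\kappa^2/\sqrt\lambda$ and $\kappa^2/\lambda$. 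For $E_2$, the key structural point is that $\mathcal{L}_\omega=\sum_{i=1}^p g_i\otimes g_i$ has rank at most $p$, where $g_i=\varphi_i(\cdot,\omega)\in L^2$. Writing $\mathcal{X}=\mathcal{L}_\lambda^{-1/2}(\mathcal{L}_\omega-\mathcal{L})\mathcal{L}_\lambda^{-1/2}$, I will bound
\[
\mathbb{E}\mathcal{X}^2\preccurlyeq\mathbb{E}\bigl(\mathcal{L}_\lambda^{-1/2}\mathcal{L}_\omega\mathcal{L}_\lambda^{-1/2}\bigr)^2\preccurlyeq\frac{p\kappa^2}{\lambda}\mathcal{L}_\lambda^{-1/2}\mathcal{L}\mathcal{L}_\lambda^{-1/2}.
\]
The second inequality uses $(\sum_i h_i\otimes h_i)^2\preccurlyeq p\,\max_i\|h_i\|^2\sum_i h_i\otimes h_i$ via Cauchy--Schwarz, with $h_i=\mathcal{L}_\lambda^{-1/2}g_i$. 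This is exactly where the factor $p$ enters, and then Proposition~\ref{OPbound0} yields $\beta_\infty$.

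\emph{Main obstacle.} I expect the hard part to be the careful variance computations with the right constants, especially the factor $p$ in $E_2$ and the trace-to-norm ratio defining $\beta$, rather than the applications of Proposition~\ref{OPbound0} and Proposition~\ref{concentrationineq0}, which are routine. Assumption~\ref{ass:input} only enters through the output-noise terms treated elsewhere, so here the boundedness from Assumption~\ref{ass:kernel} suffices.
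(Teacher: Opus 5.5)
Your proposal is correct and follows the paper's proof essentially step for step. Like the paper, you use operator Bernstein (Proposition~\ref{OPbound0}) for $E_1,E_2$ with the same variance proxies, including the Cauchy--Schwarz step that produces the factor $p$, and the Hilbert-space Bernstein inequality (Proposition~\ref{concentrationineq0}) for $E_3$--$E_7$ with the same choices of $w$, $B$ and $V$.

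Two bookkeeping remarks. First, taking $B=2\kappa^2$ in $E_7$ (and likewise in $E_6$) gives $4\kappa^2/n$ instead of the stated $2\kappa^2/n$; you recover the stated constant by checking the moment condition directly with $B=\kappa^2$, using $\|w-\mathbb{E}w\|\le 2\kappa^2$ and $2^{l-2}\le l!/2$. Second, the simplification of $\beta$ needs no restriction $\lambda\le\kappa^2$, since $\lambda\,\mathcal{N}(\lambda)\le\operatorname{tr}\mathcal{L}\le\kappa^2$ for every $\lambda>0$.
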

\begin{proof}
The bound for $E_1$  follows exactly the same steps as in the proof of \cite{spectral.rates} (Lemma 18). The events $E_2-E_7$ have been bounded in \cite{features} ( see Proposition 6, Lemma 8 and Proposition 10). However, due to different assumptions and a different setting we attain slightly different bounds and therefore give the proof of  the events $E_2-E_7$ for completeness.\\

\textbf{$E_2)$} First note that $\mathcal{L}_M$ can be expressed as
\[
\mathcal{L}_M
= \frac{1}{M}\sum_{m=1}^M \sum_{i=1}^p 
   \varphi_m^{(i)} \otimes \varphi_m^{(i)},
\]
where $\varphi_m(\cdot) = \varphi(\cdot, \omega_m)$, and the tensor product is taken with respect to the $L^2(\mathcal{U},\rho_\mathcal{U})$ inner product.  
The above identity follows by straightforward calculation:
\begin{align*}
(\mathcal{L}_M G)(u)
&= \int K_M(u,\tilde{u})\,G(\tilde{u})\,d\rho_\mathcal{U}(\tilde{u}) \\[2pt]
&= \frac{1}{M}\sum_{m=1}^M \sum_{i=1}^p 
   \int \varphi_m^{(i)}(u)\otimes_\mathcal{V} 
        \varphi_m^{(i)}(\tilde{u})\,G(\tilde{u})\,d\rho_\mathcal{U}(\tilde{u}) \\[2pt]
&= \frac{1}{M}\sum_{m=1}^M \sum_{i=1}^p 
   \varphi_m^{(i)}(u)
   \int \!\bigl\langle \varphi_m^{(i)}(\tilde{u}), G(\tilde{u}) \bigr\rangle_\mathcal{V}
   d\rho_\mathcal{U}(\tilde{u}) \\[2pt]
&= \frac{1}{M}\sum_{m=1}^M \sum_{i=1}^p 
   \bigl(\varphi_m^{(i)} \otimes \varphi_m^{(i)}\bigr)(G)(u).
\end{align*}

Similarly, we have $\mathcal{L}=\mathbb{E}[ \sum_{i=1}^p \varphi_i\otimes\varphi_i]$.

Now define $\mathcal{X}_m:= \mathcal{L}_ {\lambda}^{-\frac{1}{2}}(\mathcal{L}_{M}^{(m)} -\mathcal{L})\mathcal{L}_ {\lambda}^{-\frac{1}{2}}$, with $\mathcal{L}_{M}^{(m)}:=\sum_{i=1}^p\varphi_m^{(i)}\otimes\varphi_m^{(i)}$.
We now obtain
\begin{align*}
\|\mathcal{X}_1\|\leq \left\|\mathcal{L}_ {\lambda}^{-\frac{1}{2}}\mathcal{L}_{M}^{(m)} \mathcal{L}_ {\lambda}^{-\frac{1}{2}}\right\| + \mathbb{E}\left\|\mathcal{L}_ {\lambda}^{-\frac{1}{2}}\mathcal{L}_{M}^{(m)} \mathcal{L}_ {\lambda}^{-\frac{1}{2}}\right\|\leq 2\frac{\kappa^2}{\lambda}:=B,
\end{align*}
where we used for the last inequality
$$
 \left\|\mathcal{L}_ {\lambda}^{-\frac{1}{2}}\mathcal{L}_{M}^{(m)} \mathcal{L}_ {\lambda}^{-\frac{1}{2}}\right\|\leq \lambda^{-1} \left\|\mathcal{L}_{M}^{(m)} \right\|\leq \frac{\kappa^2}{\lambda}.
$$
For the second moment we have from Jensen-inequality

\begin{align*}
\mathbb{E}\left[\mathcal{X}^2\right] &\preccurlyeq \mathbb{E}\left[\left(\mathcal{L}_ {\lambda}^{-\frac{1}{2}}\mathcal{L}_{M}^{(m)} \mathcal{L}_ {\lambda}^{-\frac{1}{2}}\right)^2\right]\\
&\preccurlyeq  \mathbb{E}\left[p\sum_{i=1}^p\left(\mathcal{L}_ {\lambda}^{-\frac{1}{2}}\varphi_m^{(i)}\otimes\varphi_m^{(i)} \mathcal{L}_ {\lambda}^{-\frac{1}{2}}\right)^2\right]\\
&=  \mathbb{E}\left[p\sum_{i=1}^p\left\|\mathcal{L}_ {\lambda}^{-\frac{1}{2}}\varphi_m^{(i)}\right\|^2_{L^2_{\rho_x}}\mathcal{L}_ {\lambda}^{-\frac{1}{2}}\varphi_m^{(i)}\otimes\varphi_m^{(i)} \mathcal{L}_ {\lambda}^{-\frac{1}{2}}\right]\\
&\preccurlyeq \mathbb{E}\left[p\frac{\kappa^2}{\lambda}\mathcal{L}_ {\lambda}^{-\frac{1}{2}}\mathcal{L}^{(m)}_{M}\mathcal{L}_ {\lambda}^{-\frac{1}{2}}\right]\\
&= \frac{p\kappa^2}{\lambda}\mathcal{L}\mathcal{L}_ {\lambda}^{-1}:=\mathcal{V}
\end{align*}

For $\beta=\log \frac{4 \operatorname{tr} \mathcal{V}}{\|\mathcal{V}\| \delta}$ we have
\begin{align*}
\beta&=\log \frac{4 \mathcal{N}_{\mathcal{L}}(\lambda)}{\|\mathcal{L}\mathcal{L}_ {\lambda}^{-1}\| \delta}\\
&=\log \frac{4 \mathcal{N}_{\mathcal{L}}(\lambda)(\|\mathcal{L}\|+\lambda)}{\|\mathcal{L}\| \delta}\\[7pt]
&\leq \log \frac{4 \mathcal{N}_{\mathcal{L}}(\lambda)\|\mathcal{L}\|+4\operatorname{tr}\mathcal{L}}{\|\mathcal{L}\| \delta}\leq \log \frac{4\kappa^2 (\mathcal{N}_{\mathcal{L}}(\lambda)+1)}{\|\mathcal{L}\|\delta}.
\end{align*}

The claim now follows from Proposition \ref{OPbound0}.\\

\textbf{$E_3)$} Set $w_i := \Sigma_{M,\lambda}^{-\frac{1}{2}}\xi_i$, where $\xi_i := K_{M,u_i}K_{M,u_i}^*$. 
Note that $\mathbb{E}[\xi_i] = \Sigma_M$. Then, we have
\begin{align*}
\|w_i\|_{\mathrm{HS}}
&= \bigl\|\Sigma_{M,\lambda}^{-\frac{1}{2}} K_{M,u_i}K_{M,u_i}^*\bigr\|_{\mathrm{HS}}\\[2pt]
&\le \|\Sigma_{M,\lambda}^{-\frac{1}{2}}\|\, \|K_{M,u_i}K_{M,u_i}^*\|_{\mathrm{HS}} \\[2pt]
&\le  \lambda^{-\frac{1}{2}}\, \|K_M(u_i,u_i)\|_{\mathrm{HS}}
 \;\le\; \frac{\kappa^2}{\sqrt{\lambda}} \;=: B.
\end{align*}

For the second moment, we obtain
\begin{align*}
\mathbb{E}\|w_i\|_{\mathrm{HS}}^2
&= \mathbb{E}\,\operatorname{tr}\!\left[
  \Sigma_{M,\lambda}^{-\frac{1}{2}} \xi_i 
  \Sigma_{M,\lambda}^{-1} \xi_i 
  \Sigma_{M,\lambda}^{-\frac{1}{2}}
\right]\\[3pt]
&\le \kappa^2 \,\mathbb{E}\,\operatorname{tr}\!\left[
  \Sigma_{M,\lambda}^{-\frac{1}{2}} K_{M,u_i}K_{M,u_i}^* 
  \Sigma_{M,\lambda}^{-\frac{1}{2}}
\right]\\[2pt]
&= \kappa^2\, \operatorname{tr}\!\bigl(\Sigma_{M,\lambda}^{-1}\Sigma_M\bigr)
 = \kappa^2\, \mathcal{N}_{\mathcal{L}_M}(\lambda)
 \;=: V^2.
\end{align*}

The claim then follows directly by applying Proposition~\ref{concentrationineq0}.

\textbf{$E_4)$}  Set $w_m:= \mathcal{L}_ {\lambda}^{-\frac{1}{2}}(\mathcal{L}_{M}^{(m)} -\mathcal{L})\mathcal{L}_ {\lambda}^{-\frac{1}{2}}$ . Note that we have
\begin{align*}
\|w_m\|_{HS}&\leq \left\|\mathcal{L}_ {\lambda}^{-\frac{1}{2}}\mathcal{L}_{M}^{(m)} \mathcal{L}_ {\lambda}^{-\frac{1}{2}}\right\|_{HS} + \operatorname{tr}\left[\mathcal{L} \mathcal{L}_ {\lambda}^{-1}\right]\\
&\leq \left\|\mathcal{L}_ {\lambda}^{-\frac{1}{2}} \left(\sum_{i=1}^p\varphi_m^{(i)}\otimes\varphi_m^{(i)}\right)\mathcal{L}_ {\lambda}^{-\frac{1}{2}}\right\|_{HS} + \mathcal{N}_{\mathcal{L}}(\lambda)\\
&\leq \lambda^{-1}\sum_{i=1}^p\left\| \varphi_m^{(i)}\otimes\varphi_m^{(i)} \right\|_{HS} + \mathcal{N}_{\mathcal{L}}(\lambda)\\
&\leq \lambda^{-1}\sum_{i=1}^p\left\| \varphi_m^{(i)} \right\|_{L^2_{\rho_x}}^2 + \mathcal{N}_{\mathcal{L}}(\lambda)\leq\frac{2\kappa^2}{\lambda}=:B.
\end{align*}

For the second moment we have,

\begin{align*}
\mathbb{E}\left\|w_m\right\|_{HS}^2\leq\mathbb{E}\operatorname{tr}\left[\left(\mathcal{L}_ {\lambda}^{-\frac{1}{2}}\mathcal{L}_{M}^{(m)} \mathcal{L}_ {\lambda}^{-\frac{1}{2}}\right)^2\right]\leq \frac{\kappa^2}{\lambda}\mathbb{E}\operatorname{tr}\left[\mathcal{L}_ {\lambda}^{-\frac{1}{2}}\mathcal{L}_{M}^{(m)} \mathcal{L}_ {\lambda}^{-\frac{1}{2}}\right]= \frac{\kappa^2}{\lambda}\mathcal{N}_{\mathcal{L}}(\lambda)=:V^2,
\end{align*}
where we used $\|\mathcal{L}_ {\lambda}^{-\frac{1}{2}}\mathcal{L}_{M}^{(m)} \mathcal{L}_ {\lambda}^{-\frac{1}{2}}\|\leq \frac{\kappa^2}{\lambda}$ for the last inequality. 
The claim now follows from Proposition \ref{concentrationineq0}.

\textbf{$E_5)$} Set $w_m:= \mathcal{L}_ {\lambda}^{-\frac{1}{2}}\mathcal{L}_{M}^{(m)}$ . Note that we have
\begin{align*}
\|w_m\|_{HS}&\leq \left\|\mathcal{L}_ {\lambda}^{-\frac{1}{2}}\mathcal{L}_{M}^{(m)} \right\|_{HS} \\
&\leq \left\|\mathcal{L}_ {\lambda}^{-\frac{1}{2}} \left(\sum_{i=1}^p\varphi_m^{(i)}\otimes\varphi_m^{(i)}\right)\right\|_{HS}\\
&\leq \lambda^{-1/2}\sum_{i=1}^p\left\| \varphi_m^{(i)} \right\|_{L^2_{\rho_x}}^2 \leq\frac{\kappa^2}{\sqrt{\lambda}}=:B.
\end{align*}

For the second moment we have,

\begin{align*}
\mathbb{E}\left\|w_m\right\|^2_{HS}\leq \kappa^2 \mathbb{E}\|\mathcal{L}_ {\lambda}^{-\frac{1}{2}}\mathcal{L}_{M}^{(m)} \mathcal{L}_ {\lambda}^{-\frac{1}{2}}\|_{HS} \leq \kappa^2\mathbb{E}\operatorname{tr}\left[\mathcal{L}_ {\lambda}^{-\frac{1}{2}}\mathcal{L}_{M}^{(m)} \mathcal{L}_ {\lambda}^{-\frac{1}{2}}\right]= \kappa^2\mathcal{N}_{\mathcal{L}}(\lambda)=:V^2
\end{align*}

The claim now follows from Proposition \ref{concentrationineq0} together with the fact that the operator norm can be bounded by the Hilbert-Schmidt norm: $\|.\|\leq\|.\|_{HS}$ .

\textbf{$E_6)$} Set $w_m:= \mathcal{L}_{M}^{(m)}$ . Note that we have
\begin{align*}
\|w_m\|_{HS}&\leq \left\|\mathcal{L}_{M}^{(m)} \right\|_{HS} = \left\|\sum_{i=1}^p\varphi_m^{(i)}\otimes\varphi_m^{(i)}\right\|_{HS}\\
&\leq \sum_{i=1}^p\left\| \varphi_m^{(i)} \right\|_{L^2_{\rho_x}}^2 \leq \kappa^2=:B.
\end{align*}

For the second moment we have,

\begin{align*}
\mathbb{E}\left\|w_m\right\|^2_{HS}\leq \kappa^4 =:V^2.
\end{align*}

The claim now follows from Proposition \ref{concentrationineq0}

\textbf{$E_7)$}  Set $w_i:= \xi_i =K_{M,x_i}K_{M,x_i}^*$. Note that 
\begin{align*}
\|w_i\|_{HS}&= \left\|K_{M,x_i}K_{M,x_i}^*\right\|_{HS}\leq \kappa^2 =:B.
\end{align*}

For the second moment we have,

\begin{align*}
\mathbb{E}\left\|w_i\right\|_{HS}^2\leq \kappa^4=:V^2.
\end{align*}

The claim now follows from Proposition \ref{concentrationineq0}.
\end{proof}

\begin{proposition}
\label{concentrationineq1}
Provided Assumption~\ref{ass:input}, the following event holds with probability at least $1-\delta$:
\begin{align*}
E_8
= \left\{
\left\|\Sigma_{M,\lambda}^{-\frac{1}{2}}\widehat{\mathcal{S}}_{M}^{*}\left(\mathbf{v}-\bar{G}_\rho\right)\right\|_{\mathcal{H}_M}
\leq
\left(
\frac{4QZ\kappa}{\sqrt{\lambda}n}
+ \frac{4Q\sqrt{\mathcal{N}_{\mathcal{L}_M}(\lambda)}}{\sqrt{n}}
\right)
\log\!\frac{2}{\delta}
\right\}.
\end{align*}
\end{proposition}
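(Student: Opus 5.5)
We write the quantity as an empirical mean of i.i.d.\ $\mathcal{H}_M$-valued random variables and apply the Bernstein-type bound of Proposition~\ref{concentrationineq0}. We condition on the random features $\omega_1,\dots,\omega_M$, which are independent of the data. Then $\mathcal{H}_M$, $\Sigma_M$ and $\mathcal{N}_{\mathcal{L}_M}(\lambda)$ are fixed, and the only randomness left comes from the samples $(u_j,v_j)$. Set
\[
w_j := \Sigma_{M,\lambda}^{-\frac{1}{2}} K_{M,u_j}\bigl(v_j - G_\rho(u_j)\bigr) \in \mathcal{H}_M ,
\]
so that $\Sigma_{M,\lambda}^{-\frac{1}{2}}\widehat{\mathcal{S}}_M^{*}(\mathbf{v}-\bar G_\rho) = \frac{1}{n}\sum_{j=1}^n w_j$. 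Since $G_\rho(u)=\int v\,\rho(dv\mid u)$, conditioning on $u_j$ gives $\mathbb{E}[w_j]=0$.

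Next we check the moment condition \eqref{cons}. For each $l\ge 2$, bound $\|w_j\|_{\mathcal{H}_M}^l$ by
\[
\|\Sigma_{M,\lambda}^{-\frac{1}{2}}K_{M,u_j}\|^{l-2}\,\|\Sigma_{M,\lambda}^{-\frac{1}{2}}K_{M,u_j}\|_{HS}^{2}\,\|v_j-G_\rho(u_j)\|_{\mathcal{V}}^{l}.
\]
The factors in this bound are controlled as follows.
\begin{itemize}
\item The operator norm satisfies $\|\Sigma_{M,\lambda}^{-\frac{1}{2}}K_{M,u}\|\le \kappa/\sqrt{\lambda}$. This holds because $\|K_{M,u}\|^2=\|K_M(u,u)\|\le\kappa^2$ by \eqref{eq:RFbound}.
\item The squared Hilbert--Schmidt norm, in expectation over $u$, is $\mathbb{E}_u\operatorname{tr}\bigl(\Sigma_{M,\lambda}^{-1}K_{M,u}K_{M,u}^*\bigr)=\operatorname{tr}(\Sigma_{M,\lambda}^{-1}\Sigma_M)=\mathcal{N}_{\mathcal{L}_M}(\lambda)$. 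Here we use that $\Sigma_M$ and $\mathcal{L}_M$ share their nonzero spectrum, by Proposition~\ref{eq4}.
\item For the conditional moments of the noise, Assumption~\ref{ass:input} together with $\|G_\rho(u)\|_{\mathcal{V}}\le Q$ gives $\int\|v-G_\rho(u)\|^l\,d\rho(v\mid u)\le 2^{l-1}\bigl(\tfrac12 l!Z^{l-2}Q^2+Q^l\bigr)$. After adjusting constants, this is at most $\tfrac12 l!\,(2Z')^{l-2}(2Q)^2$, where $Z'$ is a multiple of $Z\vee Q$.
\end{itemize}

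Taking the conditional expectation over $v$ first and then the expectation over $u$ gives
\[
\mathbb{E}\|w_j\|^l\le \tfrac12\, l!\,\bigl(2Z\kappa/\sqrt\lambda\bigr)^{l-2}\bigl(2Q\sqrt{\mathcal{N}_{\mathcal{L}_M}(\lambda)}\bigr)^2 .
\]
So \eqref{cons} holds with $B=2QZ\kappa/\sqrt{\lambda}$ (or $2Z\kappa/\sqrt\lambda$, with the $Q$ absorbed as in the statement) and $V=2Q\sqrt{\mathcal{N}_{\mathcal{L}_M}(\lambda)}$.

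Proposition~\ref{concentrationineq0} then yields, conditionally on the features and with probability at least $1-\delta$,
\[
\Bigl\|\tfrac1n\textstyle\sum_j w_j\Bigr\|\le \Bigl(\frac{2B}{n}+\frac{2V}{\sqrt n}\Bigr)\log\frac{2}{\delta},
\]
which matches the bound in $E_8$. Because this bound holds for every realization of the features, integrating over them (as in Proposition~\ref{conditioning}) gives the unconditional statement.

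The step I expect to need the most care is the moment bound for the noise term. Centering by $G_\rho$ and matching the exact constants $4QZ\kappa$ and $4Q$ in the statement will likely require using the convexity inequality $\|a-b\|^l\le 2^{l-1}(\|a\|^l+\|b\|^l)$ and possibly assuming $Z\ge Q$ or a similar normalization. The other steps are routine trace computations.
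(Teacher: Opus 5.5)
Your proposal is correct and follows essentially the same route as the paper: the same $w_j=\Sigma_{M,\lambda}^{-1/2}K_{M,u_j}(v_j-G_\rho(u_j))$, the same factorization of $\mathbb{E}\|w_j\|^l$ into $(\kappa/\sqrt{\lambda})^{l-2}$ times a trace term equal to $\mathcal{N}_{\mathcal{L}_M}(\lambda)$, the same noise-moment bound $2^{l-1}(\tfrac12 l!Z^{l-2}Q^2+Q^l)$, and the same application of Proposition~\ref{concentrationineq0}. On the constants you were unsure about: taking $Z'=Z\vee Q$ already suffices, and under the harmless normalization $Q,Z\ge 1$ (which the paper's final inequality also tacitly relies on) you have $Z\vee Q\le QZ$, which gives exactly the stated $4QZ\kappa/(\sqrt{\lambda}\,n)$ term.
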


\begin{proof}
We use Proposition~\ref{concentrationineq0} to prove the statement.  
Define
\[
w_i := \Sigma_{M,\lambda}^{-\frac{1}{2}} K_{M,u_i}\bigl(v_i - G_\rho(u_i)\bigr).
\]
Note that $\mathbb{E}[w_i] = 0$ and
\[
\frac{1}{n}\sum_{i=1}^n w_i
= \Sigma_{M,\lambda}^{-\frac{1}{2}}\widehat{\mathcal{S}}_{M}^{*}\left(\mathbf{v}-\bar{G}_\rho\right).
\]
Moreover, by Assumption~\ref{ass:input}, we have
\begin{align*}
\mathbb{E}\!\left[\|w\|_{\mathcal{H}_M}^{\,l}\right]
&= \int_{\mathcal{U}} \int_{\mathcal{V}}
   \|v - G_\rho(u)\|_\mathcal{V}^l \,\rho(dv|u)\,
   \|\Sigma_{M,\lambda}^{-\frac{1}{2}}K_{M,u}\|^l
   \rho_\mathcal{U}(du) \\[4pt]
&\le 2^{l-1} \int_{\mathcal{U}} \int_{\mathcal{V}}
   \!\left(\|v\|_\mathcal{V}^l + Q^l\right)
   \rho(dv|u)\,
   \|\Sigma_{M,\lambda}^{-\frac{1}{2}}K_{M,u}\|^l
   \rho_\mathcal{U}(du) \\[4pt]
&\le 2^{l-1}\!\left(\tfrac{1}{2}l! Z^{\,l-2}Q^2 + Q^l\right)
   \int_{\mathcal{U}}\!\|\Sigma_{M,\lambda}^{-\frac{1}{2}}K_{M,u}\|^l
   \rho_\mathcal{U}(du) \\[4pt]
&\le 2^{l-1}\!\left(\tfrac{1}{2}l! Z^{\,l-2}Q^2 + Q^l\right)
   \sup_{u\in\mathcal{U}}\!
   \|\Sigma_{M,\lambda}^{-\frac{1}{2}}K_{M,u}\|^{l-2}
   \int_{\mathcal{U}}\!\operatorname{tr}\!\bigl(
     \Sigma_{M,\lambda}^{-1}K_{M,u}K_{M,u}^*
   \bigr)\rho_\mathcal{U}(du) \\[4pt]
&\le 2^{l-1}\!\left(\tfrac{1}{2}l! Z^{\,l-2}Q^2 + Q^l\right)
   \left(\tfrac{\kappa}{\sqrt{\lambda}}\right)^{l-2}
   \operatorname{tr}\!\left(
     \Sigma_{M,\lambda}^{-1}\!
     \int_{\mathcal{U}}\!K_{M,u}K_{M,u}^*
     \rho_\mathcal{U}(du)
   \right) \\[4pt]
&\le \tfrac{1}{2}l!\!
   \left(\tfrac{2QZ\kappa}{\sqrt{\lambda}}\right)^{l-2}
   \left(2Q\sqrt{\mathcal{N}_{\mathcal{L}_M}(\lambda)}\right)^2 \\[4pt]
&= \tfrac{1}{2}l!\,B^{l-2}V^{2}.
\end{align*}
Therefore, the statement follows directly from Proposition~\ref{concentrationineq0}.
\end{proof}

\begin{proposition}
\label{concentrationineq2}
Suppose that $\|G_\rho\|_\infty \leq Q$ and that the bound from Proposition~\ref{ineq5} holds:
\[
\|F^*_\lambda\|_\infty \leq C_{\kappa,R,D}\,\lambda^{-(\frac{1}{2}-r)^+},
\quad\text{where } C_{\kappa,R,D} = 2\kappa^{2r+1} R D.
\]
Then, the following event holds with probability at least $1-\delta$:
\begin{align*}
E_9
= \left\{
\left|
\frac{1}{n}\bigl\|\bar{G}_\rho - \widehat{\mathcal{S}}_M F_\lambda^*\bigr\|_2^2
- \bigl\|G_\rho - \mathcal{S}_M F_\lambda^*\bigr\|_{L^2(\rho_\mathcal{U})}^2
\right|
\le
2\!\left(\frac{B_\lambda}{n} + \frac{V_\lambda}{\sqrt{n}}\right)
\log\!\frac{2}{\delta}
\right\},
\end{align*}
where
\[
B_\lambda := 4\!\left(Q^2 + C_{\kappa,R,D}^2\,\lambda^{-2(\frac{1}{2}-r)^+}\right),
\qquad
V_\lambda := \sqrt{2}\!\left(Q + C_{\kappa,R,D}\,\lambda^{-(\frac{1}{2}-r)^+}\right)
\bigl\|G_\rho - \mathcal{S}_M F_\lambda^*\bigr\|_{L^2(\rho_\mathcal{U})}.
\]
\end{proposition}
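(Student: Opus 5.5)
We apply the Bernstein-type inequality of Proposition~\ref{concentrationineq0} to real-valued random variables. For the sample points $u_1,\dots,u_n$ (i.i.d.\ from $\rho_\mathcal{U}$), set
\[
w_j := \bigl\|G_\rho(u_j) - F_\lambda^*(u_j)\bigr\|_\mathcal{V}^2 , \qquad j\in[n].
\]
By the reproducing property $(\widehat{\mathcal{S}}_M F_\lambda^*)_j = K_{M,u_j}^*F_\lambda^* = F_\lambda^*(u_j)$, so $\frac{1}{n}\|\bar G_\rho - \widehat{\mathcal{S}}_M F_\lambda^*\|_2^2 = \frac1n\sum_j w_j$. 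Also $\mathbb{E}[w_1] = \|G_\rho - \mathcal{S}_M F_\lambda^*\|_{L^2(\rho_\mathcal{U})}^2$.

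A subtlety concerns independence. $F_\lambda^*$ depends only on the random features $\omega_1,\dots,\omega_M$, not on the data. We therefore argue conditionally on the features, on the event where the bound of Proposition~\ref{ineq5} holds. Conditionally, the $w_j$ are i.i.d., and Proposition~\ref{conditioning} transfers the statement back to the joint probability if needed. (The statement takes this bound as a hypothesis, so we simply work on that event.)

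Next we verify the moment condition \eqref{cons}. Almost surely,
\[
w_1 \le \bigl(\|G_\rho\|_\infty + \|F_\lambda^*\|_\infty\bigr)^2 \le 2\bigl(Q^2 + C_{\kappa,R,D}^2\lambda^{-2(\frac12-r)^+}\bigr) = B_\lambda/2 .
\]
For the second moment, we keep one factor in $L^2$ to obtain a variance proportional to the approximation error:
\[
\mathbb{E}[w_1^2] \le \bigl(Q + C_{\kappa,R,D}\lambda^{-(\frac12-r)^+}\bigr)^2\, \mathbb{E}\|G_\rho(u)-F_\lambda^*(u)\|_\mathcal{V}^2 \le V_\lambda^2/2 \le V_\lambda^2 .
\]
This is the key point and the only mildly delicate step. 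The crude bound $V\sim B$ would lose the factor $\|G_\rho - \mathcal{S}_M F_\lambda^*\|$, which is later used in the proof of Proposition~\ref{T2I} to obtain the rate $\lambda^r$. The $\sqrt2$ in $V_\lambda$ gives slack; the constants are not tight.

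The sufficient condition in Proposition~\ref{concentrationineq0} ($\|w_1\|\le B/2$, $\mathbb{E}\|w_1\|^2\le V^2$) is stated for uncentered $w_1$. Proposition~\ref{concentrationineq0} then yields, with probability at least $1-\delta$,
\[
\Bigl|\frac1n\sum_j w_j - \mathbb{E} w_1\Bigr| \le \Bigl(\frac{2B_\lambda}{n} + \frac{2V_\lambda}{\sqrt n}\Bigr)\log\frac2\delta ,
\]
which is exactly $E_9$. If one prefers to apply \eqref{cons} to the centered variable $w_1-\mathbb{E}w_1$, then $|w_1-\mathbb{E}w_1|\le B_\lambda/2$ and the variance is at most $\mathbb{E}w_1^2$. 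Either way the factor $2$ in front of $(B_\lambda/n + V_\lambda/\sqrt n)$ absorbs the constants, so the main work is just bookkeeping of the a.s.\ and second-moment bounds.
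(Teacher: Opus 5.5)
Your proposal is correct and follows the paper's proof. You use the same scalar variables $w_j=\|G_\rho(u_j)-F_\lambda^*(u_j)\|_{\mathcal{V}}^2$, the same almost-sure bound $B_\lambda/2$, and the same device of bounding $\mathbb{E}[w_1^2]$ by the sup-norm factor times $\mathbb{E}[w_1]$ before invoking Proposition~\ref{concentrationineq0}. Your explicit conditioning on the random features, which the paper leaves implicit, and your slightly tighter factor $(Q+C_{\kappa,R,D}\lambda^{-(\frac12-r)^+})^2$ in place of $2(Q^2+C_{\kappa,R,D}^2\lambda^{-2(\frac12-r)^+})$ are harmless refinements.
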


\begin{proof}
We apply Proposition~\ref{concentrationineq0}.  
Define
\[
w_i := \bigl\|G_\rho(u_i) - F_\lambda^*(u_i)\bigr\|_\mathcal{V}^2.
\]
Then $\mathbb{E}[w_i] = \|G_\rho - \mathcal{S}_M F_\lambda^*\|_{L^2(\rho_\mathcal{U})}^2$, and therefore
\[
\left|
\frac{1}{n}\sum_{i=1}^n w_i - \mathbb{E}[w_1]
\right|
= 
\left|
\frac{1}{n}\bigl\|\bar{G}_\rho - \widehat{\mathcal{S}}_M F_\lambda^*\bigr\|_2^2
- \bigl\|G_\rho - \mathcal{S}_M F_\lambda^*\bigr\|_{L^2(\rho_\mathcal{U})}^2
\right|.
\]

It remains to bound $|w_i|$ and $\mathbb{E}[w_1^2]$.  
Using $\|G_\rho\|_\infty := \sup_{u\in\mathcal{U}} \|G_\rho(u)\|_\mathcal{V} \le Q$ and Proposition~\ref{ineq5}, we obtain
\begin{align*}
|w_i|
&\le 2\!\left(Q^2 + C_{\kappa,R,D}^2\,\lambda^{-2(\frac{1}{2}-r)^+}\right),
\end{align*}
and further,
\begin{align*}
\mathbb{E}\bigl[w_1^2\bigr]
&\le 2\!\left(Q^2 + C_{\kappa,R,D}^2\,\lambda^{-2(\frac{1}{2}-r)^+}\right)
   \mathbb{E}[w_1] \\[2pt]
&= 2\!\left(Q^2 + C_{\kappa,R,D}^2\,\lambda^{-2(\frac{1}{2}-r)^+}\right)
   \bigl\|G_\rho - \mathcal{S}_M F_\lambda^*\bigr\|_{L^2(\rho_\mathcal{U})}^2.
\end{align*}
Hence, the claim follows directly from Proposition~\ref{concentrationineq0}.
\end{proof}